\def\BibTeX{{\rm B\kern-.05em{\sc i\kern-.025em b}\kern-.08em
    T\kern-.1667em\lower.7ex\hbox{E}\kern-.125emX}}
\newtheorem{thm}{Theorem}[section]
\newtheorem{lem}{Lemma}[section]
\newtheorem{assump}{Assumption}[section]
\newenvironment{proof}{\qquad \textit{Proof:}}{\hfill$\square$}
\newcommand{\boxend}{\hfill \ensuremath{\Box}}
\newcommand{\real}{{\mathbb{R}}}
\begin{document}

\title{A Computationally Efficient Maximum A Posteriori Sequence Estimation via Stein Variational Inference}


\author{Min-Won Seo and  Solmaz S. Kia, \emph{Senior Member, IEEE}



\thanks{Min-Won Seo, and Solmaz S. Kia are with the Department of Mechanical and Aerospace Engineering,
        University of California, Irvine, CA 92697, USA,
        {\tt\small \{minwons,solmaz\}@uci.edu}}
        }


\markboth{Journal of \LaTeX\ Class Files,~Vol.~00, No.~00, Month~Year}%
{How to Use the IEEEtran \LaTeX \ Templates}

\maketitle

\begin{abstract}
State estimation in robotic systems presents significant challenges, particularly due to the prevalence of multimodal posterior distributions in real-world scenarios. One effective strategy for handling such complexity is to compute maximum a posteriori (MAP) sequences over a discretized or sampled state space, which enables a concise representation of the most likely state trajectory. However, this approach often incurs substantial computational costs, especially in high-dimensional settings. In this article, we propose a novel MAP sequence estimation method, \textsf{Stein-MAP-Seq}, which effectively addresses multimodality while substantially reducing computational and memory overhead. Our key contribution is a sequential variational inference framework that captures temporal dependencies in dynamical system models and integrates Stein variational gradient descent (SVGD) into a Viterbi-style dynamic programming algorithm, enabling computationally efficient MAP sequence estimation. This integration allows the method to focus computational effort on MAP-consistent modes rather than exhaustively exploring the entire state space. \textsf{Stein-MAP-Seq} inherits the parallelism and mode-seeking behavior of SVGD, allowing particle updates to be efficiently executed on parallel hardware and significantly reducing the number of trajectory candidates required for MAP-sequence recursion compared to conventional methods that rely on hundreds to thousands of particles. We validate the proposed approach on a range of highly multimodal scenarios, including nonlinear dynamics with ambiguous observations, unknown data association with outliers, range-only localization under temporary unobservability, and high-dimensional robotic manipulators. Experimental results demonstrate substantial improvements in estimation accuracy and robustness to multimodality over existing estimation methods.

\end{abstract}

\begin{IEEEkeywords}
Variational inference, maximum a posteriori (MAP) sequence estimation, Viterbi algorithm, dynamic programming, Stein variational gradient descent. 
\end{IEEEkeywords}

\vspace{-0.1in}
\section{Introduction}
\label{sec::intro}
\IEEEPARstart{S}{tate} estimation in robotic systems presents fundamental challenges, particularly when posterior distributions exhibit multimodality---a phenomenon that frequently arises in real-world applications due to nonlinear system-environment interactions, ambiguous observations, uncertain data associations, and multimodal sensor fusion. To address such complexity, Maximum A Posteriori (MAP) \emph{sequence} estimators offer a principled framework that not only handles non-Gaussian distributions but also leverages temporal dependencies inherent in dynamical systems to produce trajectory-consistent estimates. This paper focuses on the design of a computationally efficient and accurate MAP sequence (MAP-Seq) estimator that effectively manages multimodal posterior distributions while maintaining practical computational requirements for robotic applications.

MAP-Seq estimators are an extension of MAP estimators, which seek the most probable state in a given distribution. For multimodal distributions, MAP estimators are often preferred over Minimum Mean Square Error (MMSE) estimators, as MMSE estimators can produce estimates that lie between modes—potentially corresponding to low-probability regions that lead to erroneous state estimates. In contrast, MAP estimation seeks the most probable state, effectively capturing one of the dominant modes. MAP-Seq estimation extends this concept to find the most likely trajectory of states over a given time horizon, conditioned on the entire sequence of observations. By leveraging information across time to resolve ambiguities that may be present in point estimates, this approach offers more robust estimates, particularly in scenarios with temporary unobservability or when the system's dynamics provide strong temporal constraints. This framework has found applications across diverse domains, including image processing \cite{li2016maximum}, natural language processing \cite{kanda2017maximum}, and robotics for tasks such as high-level decision-making \cite{abdolmaleki2018maximum} and simultaneous localization and mapping (SLAM) or tracking \cite{islam2010multi,huang2015bank}.

While MAP-Seq estimation offers significant advantages, computing the optimal trajectory can be computationally expensive, especially for complex, high-dimensional systems. Common approaches use smoothing techniques, such as iterated Extended Kalman Smoothing (iEKS) and factor graph optimization~\cite{dellaert2017factor}, which iteratively estimate the trajectory under Gaussian assumptions and local linearization. However, these methods often struggle with multimodal distributions, as they are prone to converging to \emph{locally} optimal trajectories with poor initialization~\cite{huang2015bank}. In contrast, approaches based on finite (discrete or sampled) state spaces, such as the Viterbi algorithm, compute the \emph{globally} optimal trajectory via dynamic programming for MAP-Seq estimation~\cite{svensen2007pattern}, thereby avoiding local optima. This makes them particularly well-suited for estimation problems involving ambiguity, non-Gaussianity, or multimodality.

Despite this global optimality advantage, the Viterbi algorithm's computational complexity, which scales quadratically with the number of discrete states ($\mathcal{O}(N^2)$ for $N$ states), poses a significant bottleneck for real-time applications, especially when dealing with systems requiring fine-grained discretization of the state space for accurate representation. This limitation often restricts the Viterbi algorithm's applicability to coarse-grid discretizations or necessitates the use of high-performance computing resources and substantial memory allocation in large-scale scenarios.

To overcome these computational hurdles, researchers have explored various strategies aimed at simplifying the representation of the state space \cite{trogh2015advanced, seo2022online,scherhaufl2018blind,sun2019practical,bayoumi2019speeding,seo2025bayesian}. While such simplifications can lead to reduced computational demands, they often come at the cost of diminished estimation accuracy. Continuous-state-based Viterbi algorithms have been proposed \cite{champlin2000target,chigansky2011viterbi} to mitigate discretization errors, but these methods typically still grapple with significant computational complexity.

As an alternative to discrete-state Viterbi approaches, Particle Filter-based methods for MAP-Seq estimation have also been investigated \cite{godsill2001maximum,driessen2008particle,saha2009particle,nishida2009dynamic,morita2020fast}. While these methods have shown promise in handling complex multimodal distributions, they still rely on sequential Monte Carlo sampling, which requires large particle sets and suffers from resampling overhead and random sampling variability.

Recent advances in multimodal state estimation have explored deep learning-based paradigms, including amortized variational inference for nonlinear state-space models that learn neural networks to approximate filtering distributions~\cite{marino2018general}, neural particle filtering approaches that incorporate end-to-end learned models~\cite{jonschkowski2018differentiable}, and diffusion-based generative models for state estimation that perform posterior inference via iterative denoising processes~\cite{ye2025raggedi}. These methods exhibit strong representational capacity for complex and multimodal posteriors; however, they typically rely on offline training and implicit inference models, for which posterior densities are not explicitly available, making principled MAP-Seq estimation difficult.

The computational and representational challenges inherent in MAP-Seq estimation—including the limitations of discrete-state methods, sampling variability in particle filters, and implicit posteriors in learning-based approaches—motivate the need for alternative frameworks. Variational inference offers a promising avenue to address these challenges. Variational methods aim to approximate intractable posterior distributions with simpler, tractable forms by framing inference as an optimization problem~\cite{blei2017variational}. Unlike the implicit representations used in deep learning approaches, variational inference provides explicit posterior approximations that enable principled MAP-Seq computation. These techniques have gained attention in robotics and autonomous systems due to their ability to balance computational efficiency with the flexibility to capture complex, multimodal distributions~\cite{ala2013variational,gal2016uncertainty}. Notably, advances in variational inference have led to the development of methods that can handle non-Gaussian posteriors and temporal dependencies, making them particularly suitable for sequential estimation tasks~\cite{archer2015black,krishnan2017structured,seo2024sequential}. The application of variational inference to state estimation problems has shown promise in various domains, including visual-inertial odometry~\cite{mirchev2020variational}, SLAM~\cite{barfoot2020exactly}, and multi-target tracking~\cite{oh2020variational}.
\emph{Statement of contribution}: 
This paper presents \textsf{Stein-MAP-Seq}, a novel particle-based approach for MAP-Seq estimation in dynamical systems that addresses the challenges of multimodal state estimation while maintaining computational efficiency. Our method leverages Stein Variational Gradient Descent (SVGD)~\cite{liu2016stein}, a powerful non-parametric variational inference technique that deterministically evolves a set of particles to serve as samples from the target distribution, minimizing the Kullback-Leibler divergence between the particle-based empirical distribution and the true posterior. Unlike traditional Monte Carlo methods that rely on stochastic sampling and/or resampling procedures, SVGD deterministically transports particles toward high-probability regions using functional gradients that collectively capture the structure of complex, potentially multimodal distributions.

 The key innovation of \textsf{Stein-MAP-Seq} lies in leveraging SVGD-generated particles as representative samples to transform the intractable continuous MAP optimization into an efficient discrete combinatorial problem. \textsf{Stein-MAP-Seq} embeds SVGD into a Viterbi-style dynamic programming algorithm within a sequential variational inference framework that explicitly models the temporal dependencies inherent in dynamical systems. This novel combination leverages the mode-seeking yet repulsive behavior of SVGD to represent and separate multiple dominant modes, enabling efficient MAP sequence estimation via dynamic programming over the resulting discrete sample space. While the forward recursion incurs $\mathcal{O}(M^2)$ computational complexity per time step, where $M$ denotes the number of particles, \textsf{Stein-MAP-Seq} achieves substantial efficiency gains by requiring far fewer particles ($M \leq 40$) compared to conventional particle filter-based MAP-Seq estimators that require hundreds to thousands of particles ($N \gg M$), resulting in dramatically lower computational cost in practice. Furthermore, since the particle updates in SVGD can be computed independently across samples, its inherent parallelism makes \textsf{Stein-MAP-Seq} naturally amenable to efficient implementation on modern parallel computing architectures, which are increasingly available in robotic platforms~\cite{barcelos2021dual,wang2021variational,maken2021stein,maken2022stein,heiden2022probabilistic}.

Our approach fundamentally differs from traditional sampling-based methods by using SVGD's deterministic particle evolution to generate high-quality samples that are then used as discrete states in a Viterbi-style dynamic programming algorithm. We rigorously evaluate the proposed method on challenging scenarios, including nonlinear dynamics with ambiguous measurements, pose estimation under unknown data associations with outliers, range-only localization under temporary unobservability, and high-dimensional manipulators with kinematic redundancy, demonstrating substantial improvements in estimation accuracy and robustness to multimodality over state-of-the-art baselines.

The remainder of this paper is organized as follows: Section~\ref{sec::ProbDef} formally defines the problem and presents our assumptions. Section~\ref{sec::Back} provides a brief overview of SVGD. Section~\ref{sec::method} details the proposed \textsf{Stein-MAP-Seq} algorithm. Section~\ref{sec::results} presents estimation results across diverse scenarios and compares performance with existing methods. Finally, Section~\ref{sec::conclusion} concludes the paper. Proofs and auxiliary results are provided in the appendices.

\section{Problem Definition}
\label{sec::ProbDef}
Consider a discrete-time (fixed interval) dynamical system whose state transition probability is modeled as a Markov process~\eqref{eq::SSM-x} and its observation likelihood function~\eqref{eq::SSM-z} given~by:
\begin{subequations}\label{eq::SSM}
\begin{align}
    x_t & \sim p(x_t|x_{t-1}), \label{eq::SSM-x}\\
    z_t & \sim p(z_t|x_t),\label{eq::SSM-z}
\end{align}
\end{subequations}
where $x_t \in \real^{n_x}$ denotes the state of the system and $z_t \in \real^{n_z}$ denotes the observation data at time $t \in \mathbb{N}_{>0}$. We assume that the prior distribution $p(x_0)$ is constructed from available prior information. The dynamical model and observation data adhere to the following assumptions.

\begin{assump} \label{assump:smoothness}{(Smoothness and regularity of the dynamical model). Under the dynamical model~\eqref{eq::SSM}, $\log p(x_t|x_{t-1})$ and $\log p(z_t|x_t)$ are continuously differentiable with respect to $x_t$, with Lipschitz continuous gradients.}\boxend
\end{assump}

\begin{assump}\label{assum::obs_indep}{(Assumptions on $z_t$). The sensor observations at each time step, $z_t$,  are independent and identically distributed (i.i.d.). Moreover, there are no conditional dependencies between observations from different sensors.}\boxend
\end{assump}

\medskip
Our objective in this paper is to compute the MAP sequence of states, denoted as $x_{0:T}^{\text{MAP}}$, for the dynamical system~\eqref{eq::SSM} from the  sequence of observations from time $1$ to the final time $T$, i.e., $z_{1:T} = \{z_1, \dots, z_T\}$, given by
\begin{align}  
\label{eq::MAP_est}  
    x_{0:T}^{\text{MAP}} = \arg\!\max_{\!\!\!\!x_{0:T}} p(x_{0:T} | z_{1:T}).  
\end{align} 

Solving~\eqref{eq::MAP_est} is computationally intractable, particularly when $p(x_{0:T} | z_{1:T})$ exhibits multimodality and/or lacks a closed-form expression, making exact analytical computation impossible. To address this challenge, we propose a \emph{sequential variational inference} framework to obtain an approximate MAP sequence solution. The details of the proposed approach are presented in Section~\ref{sec::method}. Before presenting those details, the next section provides background on the Stein Variational Gradient Descent method, which serves as a key component of our approach.

\begin{algorithm}[t]
\caption{Stein Variational Gradient Descent}
\label{alg:svgd_orig}
\textbf{Input:} The score function $\nabla_x \log p(x)$.

\textbf{Goal:} A set of particles $\{x_i\}_{i=1}^{N_s}$ that approximates $p(x)$.

\textbf{Initialize} a set of particles $\{x_i^{(0)}\}_{i=1}^{N_s}$; choose a positive definite kernel $\kappa(x, x')$ and step-size $\epsilon$.\\
\textbf{For} iteration \textit{l} \textbf{do}
{
\begin{align*}
    x_i^{(l+1)} \leftarrow x_i^{(l)} + \epsilon \hat{\phi}^{\star}(x_i^{(l)}) \quad \forall \, i \in\{ 1, \ldots, N_s\} 
\end{align*}}
where $\hat{\phi}^{\star}(x_i^{(l)})$ is given in~\eqref{eq::svgd_iterate}.
\end{algorithm}

\section{A Brief Overview of Stein Variational Gradient Descent method}\label{sec::Back}

Variational inference (VI) approximates a target distribution $p(x)$ using a simpler distribution $q^{\star}(x)$ found in a predefined set $\mathcal{Q}$ by minimizing the Kullback–Leibler (KL)\footnote{KL divergence is defined as $\text{KL}[q(x) \parallel p(x)] = \int q(x) \log \frac{q(x)}{p(x)} dx$.}  divergence~\cite{kullback1951information}, i.e., 
\begin{align}
 \label{eq::VI_KLD}
    q^{\star}(x) = \arg\!\min_{\!\!\!\!\!\!q \in \mathcal{Q}} \text{KL} \bigr[q(x)\parallel p(x) \bigr], 
\end{align}
where the choice of the model space $\mathcal{Q}$ is critical for balancing accuracy and computational efficiency; however, achieving this balance remains challenging.

Stein Variational Gradient Descent (SVGD) offers a non-parametric approach by iteratively transporting a set of initial samples $\{x_i^{(0)}\}_{i=1}^{N_s}$ towards $p(x)$ using a perturbation function $\phi(x)$: 
\begin{align}
    x_i^{(l+1)} = x_i^{(l)} + \epsilon \phi(x_i^{(l)}). \nonumber
\end{align}
The optimal perturbation direction $\phi^{\star}(x) \in \mathcal{H}^d$ that minimizes the KL divergence at each step has a closed form derived from the Stein's identity\footnote{$\phi^{\star}(x)$ is chosen from the unit ball of a vector-valued RKHS, $\mathcal{H}^d$.}:
\begin{align}
    \phi^{\star}(x) = \mathbb{E}_{x' \sim q(x)} \left[ \kappa(x, x') \nabla_{x'} \log p(x') + \nabla_{x'} \kappa(x, x') \right], \nonumber
\end{align}
where $\kappa(x, x')$ is a positive definite kernel~\cite{liu2016stein}. In practice, the perturbation direction is computed empirically using the samples $\{x_i^{(l)}\}_{i=1}^{N_s}$:
\begin{align}
\label{eq::svgd_iterate}
    \hat{\phi}^{\star}(x_i^{(l)}) =\frac{1}{N_s}\sum\nolimits_{k=1}^{N_s} \Bigl( \kappa(x_i^{(l)}, x_k^{(l)}) &\nabla_{x_k^{(l)}} \log p(x_k^{(l)})\, +\nonumber\\
    &\nabla_{\!\!x_k^{(l)}} \kappa(x_i^{(l)}, x_k^{(l)}) \Bigl). 
    \end{align}
The update $x_i^{(l+1)} = x_i^{(l)} + \epsilon \hat{\phi}^{\star}(x_i^{(l)})$ iteratively pushes particles toward $p(x)$, enabling expectation approximations via the empirical mean. Algorithm~\ref{alg:svgd_orig} summarizes the SVGD algorithm.

The samples generated by SVGD, denoted as $\{x_i\}_{i=1}^{N_s}$, serve a dual purpose in variational inference. First, they provide an empirical approximation of the optimal distribution $q^\star(x)$ in~\eqref{eq::VI_KLD} as $q^\star(x)= \frac{1}{N_s}\sum\nolimits_{i=1}^{N_s} \delta(x - x^i)$. Second, and equally important, these particles are commonly interpreted as representative samples from the target distribution $p(x)$ itself. Unlike traditional sampling methods such as MC or MCMC, which generate independent samples through stochastic processes, SVGD produces a deterministic set of interacting particles that collectively capture the structure of complex, potentially multimodal distributions through optimization-driven particle transport. This deterministic mechanism fundamentally differs from the random sampling variability inherent in Monte Carlo methods, offering computational advantages for downstream tasks, including efficient parallel computation and natural handling of multimodal posterior structures. Consequently, SVGD has emerged as a powerful sampling tool for approximate inference in high-dimensional and multimodal scenarios where traditional sampling methods face significant challenges.

Conventional SVGD addresses stationary inference problems, where the target distribution does not evolve over time. In contrast, state estimation in dynamical systems requires sequential inference over trajectories, where temporal dependencies between states must be preserved. In the next section, we extend SVGD to this sequential setting by developing a framework that leverages SVGD's particle transport to approximate time-evolving conditional distributions while maintaining the Markov structure of dynamical systems, enabling efficient MAP sequence estimation via dynamic programming.

\section{Stein Maximum A Posteriori Sequence Estimation}\label{sec::method}
In this section, we develop a MAP sequence estimator for the dynamical system~\eqref{eq::SSM} by integrating variational inference with dynamic programming. Computing the MAP trajectory $x_{0:T}^\mathrm{MAP}$ requires constructing an optimal distribution $q^{\star}(x_{0:T})$ whose support is concentrated on the (unknown) global maxima of the true posterior $p(x_{0:T}|z_{1:T})$, and subsequently identifying these modes. Formally, the problem can be stated as:
\begin{align}
    \label{eq::MAP_est_formulation}
 &\Bigl(q^{\star}(x_{0:T}),~x_{0:T}^\mathrm{MAP}\Bigl) =\\
 &~~~~~\Bigl(\arg\min_{\!\!\!\!\!\!q(x_{0:T})}\text{KL}[q(x_{0:T})\|p(x_{0:T}|z_{1:T})],~
 \underset{x_{0:T}}{\arg\max}\,q^{\star}(x_{0:T}) \Bigl). \nonumber
\end{align}

Solving these optimization problems exactly presents significant challenges. When the dynamical system~\eqref{eq::SSM} is nonlinear, the resulting posterior $p(x_{0:T}|z_{1:T})$ becomes non-Gaussian and often multimodal, making closed-form representation impractical and exact analytical computation impossible. Moreover, the choice of $q(x_{0:T})$ significantly impacts estimation performance~\cite{godsill2001maximum}. In the second optimization problem, finding the mode of $q^{\star}(x_{0:T})$ is intractable, since $q^{\star}(x_{0:T})$ typically resides on a high-dimensional continuous support.

To address these challenges, we propose \textsf{Stein-MAP-Seq}, a MAP-sequence estimator for dynamical systems~\eqref{eq::SSM} that preserves temporal dependencies via a sequential variational inference framework (Lemma~\ref{lem::factKL}). \textsf{Stein-MAP-Seq} integrates SVGD into a Viterbi-style dynamic programming algorithm, where SVGD's deterministic gradient flow generates a compact discrete state representation, enabling efficient global MAP trajectory recovery through forward recursion and backtracking.

Our approach transforms the intractable continuous optimization in~\eqref{eq::MAP_est_formulation} into a tractable discrete problem through a two-stage strategy: first, SVGD generates particle-based approximations of the sequential posterior distributions, discretizing the continuous state space into a finite sample support; second, dynamic programming over this discrete particle support efficiently identifies the globally optimal MAP trajectory. This particle-based discretization fundamentally transforms the optimization from an intractable continuous search to an efficient discrete combinatorial problem.


The remainder of this section presents the theoretical foundations and algorithmic steps of our proposed \textsf{Stein-MAP-Seq}. We develop a series of marginalized conditional KL objectives (Lemma~\ref{lem::factKL}) that adapt SVGD from stationary inference to sequential trajectory estimation, preserving temporal dependencies through the Markov structure of dynamical systems. We then formalize the integration of these SVGD-generated particle approximations with Viterbi-style dynamic programming (Theorem~\ref{thm::ELBO_MAP}), establishing the theoretical guarantees for MAP sequence recovery and analyzing computational complexity.

\subsection{Sequential Variational Inference for Dynamical Systems}
To make the complex optimization problems in~\eqref{eq::MAP_est_formulation} tractable and derive a computable solution for $q^{\star}(x_{0:T})$, we begin by performing a series of strategic manipulations on the KL divergence objective
\begin{equation}\label{eq::MAP_KLD}
\text{minimize}~~\text{KL} \bigr[q(x_{0:T})\parallel p(x_{0:T}|z_{1:T}) \bigr].
\end{equation}

To facilitate a computable solution for VI optimization problems involving conditional posteriors based on measurements/data, it is common practice to reformulate the objective in terms of the Evidence Lower BOund (ELBO). In the case of~\eqref{eq::MAP_KLD}, this relationship is given by:
\begin{align} \label{eq::KLD_MAP_ELBO}
  \!\!\!\text{KL} \bigr[q(x_{0:T})\parallel p(x_{0:T}|z_{1:T}) \bigr] \!=\! -\mathcal{L}(x_{0:T}) + \log p(z_{1:T}),
\end{align}
where
\begin{equation}\label{eq::ELBO_term}
 \mathcal{L}(x_{0:T})= \int q(x_{0:T})\, \log\frac{p(x_{0:T}, z_{1:T})}{q(x_{0:T})} dx_{0:T}
\end{equation}
is known as the ELBO; for a detailed derivation, see Lemma~\ref{lem::ELBO} in Appendix~\ref{sec::appen_A}. The KL divergence in~\eqref{eq::KLD_MAP_ELBO} is always nonnegative and equals zero if and only if $q(x_{0:T}) = p(x_{0:T} \mid z_{1:T})$. Moreover, $\log p(z_{1:T})$ is constant with respect to $x$ and does not affect the optimization. Therefore, minimizing the KL divergence, as shown on the left-hand side of~\eqref{eq::KLD_MAP_ELBO}, is equivalent to maximizing the ELBO $\mathcal{L}(x_{0:T})$ defined in~\eqref{eq::ELBO_term}. Consequently, we reformulate the objective as the ELBO maximization problem, given by
\begin{align}
\label{eq::Stein_MAP_opt1}
    q^{\star}(x_{0:T}) = \arg \!\! \max_{\!q(x_{0:T})} \mathcal{L}(x_{0:T}),
\end{align}
where solving the optimization problem requires evaluating both the log joint probability $\log p(x_{0:T}, z_{1:T})$ and the proposal distribution $q(x_{0:T})$ within the ELBO.

To compute the log joint probability $\log p(x_{0:T}, z_{1:T})$, we factorize it based on the first-order Markov property of the dynamical system~\eqref{eq::SSM-x} and Assumption~\ref{assum::obs_indep} for observations $z_{1:T}$, yielding:
\begin{align}
\label{eq::Joint_prob}
    \log p(x_{0:T}, z_{1:T}) =&\sum\nolimits_{t=1}^T \log p(z_t|x_t) +\\&\sum\nolimits_{t=1}^T \log p(x_t|x_{t-1}) +  \log p(x_0). \nonumber
\end{align}

Using~\eqref{eq::Joint_prob}, the ELBO in~\eqref{eq::ELBO_term} reads as
\begin{align}
\label{eq::ELBO}
      \mathcal{L}(x_{0:T}) &= \int q(x_{0:T}) \log p(x_0) dx_{0:T} \nonumber \\
      &~~~+ \int q(x_{0:T}) \sum\nolimits_{t=1}^T \log p(x_t|x_{t-1}) dx_{0:T} \nonumber \\
      &~~~+ \int q(x_{0:T}) \sum\nolimits_{t=1}^T \log p(z_t|x_t) dx_{0:T} \nonumber \\ 
      &~~~- \int q(x_{0:T})\log q(x_{0:T}) dx_{0:T}.
\end{align}

The VI literature employs two broad techniques to specify the joint proposal distribution $q(x_{0:T})$ in the ELBO: selecting a manageable parametric form~\cite{blei2017variational,barfoot2020exactly,seo2024sequential}, or employing the mean field approximation~\cite{svensen2007pattern}. The first approach involves using a parameterized distribution, such as a Gaussian distribution. In the second approach, the joint proposal distribution is factorized into independent distributions for each time variable, i.e., $q(x_{0:T}) = \prod_{t=0}^{T}q(x_t)$. These simplifications make the optimization process more tractable but may reduce approximation accuracy due to the inflexible distributional form and the loss of temporal dependencies.

Unlike existing techniques, in this study, we adopt a non-parametric method based on SVGD, which provides flexibility in approximating complex posterior structures. In addition, we impose a first-order Markov property on the proposal distribution to preserve temporal dependencies between adjacent states, an assumption that is naturally aligned with the sequential nature and underlying dynamics of the system, following the approach inspired by~\cite{courts2023variational} as follows.

\begin{assump}
\label{assump::MAP}{(First-order Markov property on proposal distribution). The proposal distribution $q(x_{0:T})$ in~\eqref{eq::Stein_MAP_opt1} is chosen such that
 \begin{align}
\label{eq::Assumption_proposal}
 q(x_t|x_{t-1}) = q(x_t|x_{t-1},x_{0:t-2}).
 \end{align}}
\end{assump}

Under Assumption~\ref{assump::MAP}, the following lemma establishes the \textbf{key structural result} that distinguishes our approach from standard VI methods.

\begin{lem}
\label{lem::proposal} {(The form of optimal proposal distribution). Let Assumption~\ref{assump::MAP} hold. Then, the optimal proposal distribution $q(x_{0:T})$ in~\eqref{eq::Stein_MAP_opt1} is factorized as
 \begin{align}
 \label{eq::Proposal_distribution}
 q(x_{0:T}) = q(x_0)\prod\nolimits_{t=1}^{T} q(x_t|x_{t-1}).
 \end{align}}
\end{lem}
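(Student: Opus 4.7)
The plan is to derive the factorization directly from the chain rule of probability combined with the Markov property in Assumption~\ref{assump::MAP}, without needing to inspect the ELBO~\eqref{eq::ELBO} itself. The key observation is that Assumption~\ref{assump::MAP} is imposed as a structural constraint on the admissible proposals entering the ELBO maximization in~\eqref{eq::Stein_MAP_opt1}, so whatever the optimizer turns out to be, it lies in this restricted class and inherits whatever factorization the class enjoys. The proof therefore reduces to showing that every distribution in this class admits the stated factorization; no calculus of variations, Lagrange multipliers, or explicit manipulation of $\mathcal{L}(x_{0:T})$ is required.

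First, I would apply the standard chain rule of probability, which holds for any joint distribution, to write $q(x_{0:T}) = q(x_0)\prod_{t=1}^{T} q(x_t \mid x_{0:t-1})$. This step is purely mechanical and introduces no approximation. Second, I would invoke Assumption~\ref{assump::MAP}: observing that the conditioning set $\{x_{t-1}\}\cup x_{0:t-2}$ coincides with $x_{0:t-1}$, the assumption $q(x_t \mid x_{t-1}) = q(x_t \mid x_{t-1},x_{0:t-2})$ is equivalent to the first-order Markov property $q(x_t \mid x_{0:t-1}) = q(x_t \mid x_{t-1})$. Substituting this identity into the chain-rule expansion yields $q(x_{0:T}) = q(x_0)\prod_{t=1}^{T} q(x_t \mid x_{t-1})$, which is the claimed form.

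Finally, to close the argument for the \emph{optimal} proposal, I would note that the feasible set in~\eqref{eq::Stein_MAP_opt1} is restricted by Assumption~\ref{assump::MAP}; hence the maximizer $q^{\star}(x_{0:T})$ necessarily satisfies the Markov condition and, by the derivation above, admits the factorization in~\eqref{eq::Proposal_distribution}.

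The main obstacle, and it is a very mild one, is simply to unpack Assumption~\ref{assump::MAP} correctly. The way it is written, $q(x_t \mid x_{t-1}) = q(x_t \mid x_{t-1},x_{0:t-2})$, is a slightly nonstandard phrasing of the Markov property; one must recognize that it is equivalent to the more familiar statement $q(x_t \mid x_{0:t-1}) = q(x_t \mid x_{t-1})$, after which the chain-rule substitution is immediate. Beyond this rewriting, no deeper technical machinery is needed.
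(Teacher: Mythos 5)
Your proof is correct and arrives at the same factorization by a more direct route than the paper. You apply the chain rule $q(x_{0:T}) = q(x_0)\prod_{t=1}^{T} q(x_t\mid x_{0:t-1})$, recognize Assumption~\ref{assump::MAP} as the standard first-order Markov property $q(x_t\mid x_{0:t-1}) = q(x_t\mid x_{t-1})$, and substitute; the claim about the \emph{optimal} proposal then follows because the maximization in~\eqref{eq::Stein_MAP_opt1} is carried out over the constrained class, so the maximizer inherits the factorization. The paper instead routes the argument through the ELBO: it expands the entropy term $-\int q(x_{0:T})\log q(x_{0:T})\,dx_{0:T}$ by the chain rule and invokes the conditional differential entropy inequality $-\int\!\!\int p(x,y)\log p(x\mid y)\,dx\,dy \le -\int p(x)\log p(x)\,dx$ (equality iff independence), noting that Assumption~\ref{assump::MAP} forces equality in each term. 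The logical core is identical in both cases (chain rule plus the Markov restriction), but your version makes explicit that the lemma is really a property of the admissible class rather than an output of the optimization, whereas the paper's entropy-inequality framing additionally indicates that conditioning on extra history can only move the entropy contribution to the ELBO in one direction, which serves as the informal justification for imposing the restriction in the first place. Neither argument needs variational calculus, and your reading of the assumption as the standard Markov property is exactly the intended one.
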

\begin{proof}
 See Appendix~\ref{sec::appen_B}.
\end{proof}

\textbf{Significance of Lemma~\ref{lem::proposal}:} This result is \textbf{not} a standard VI factorization—it establishes a sequential conditional structure that fundamentally differs from both parametric VI and mean field approximations. Unlike mean field approaches that sacrifice temporal structure through independence ($q(x_{0:T}) = \prod_t q(x_t)$), the factorization in~\eqref{eq::Proposal_distribution} \emph{preserves temporal dependencies} through conditional distributions $q(x_t|x_{t-1})$ while enabling \emph{sequential recursive computation}. This structure is essential for: (i) justifying the SVGD particle updates in Eq.~\eqref{eq::SVGD_newupdate} that condition on particles from previous time steps, (ii) deriving the factorized ELBO in Lemma~\ref{lem::factKL} with marginalized conditional KL terms, and (iii) enabling Viterbi-style dynamic programming over time-evolving particle distributions.

According to Lemma~\ref{lem::proposal}, the proposal distribution $q(x_{0:T})$ in~\eqref{eq::ELBO} admits this factorized structure. Consequently, the ELBO can be expressed as a summation of factorized KL divergence terms, each marginalized with respect to the previous optimal distribution, as stated in the following result.

\begin{lem}\label{lem::factKL} {(The factorization of marginalized conditional KL divergence). Let Assumption~\ref{assump:smoothness},~\ref{assum::obs_indep}, and~\ref{assump::MAP} hold. Given the optimal proposal distribution~\eqref{eq::Proposal_distribution}, the ELBO in~\eqref{eq::Stein_MAP_opt1} is factorized as
\begin{align}
\label{eq::ELBO_factorized}
 &\mathcal{L}(x_{0:T}) = -\text{KL} \bigr[q(x_0) \parallel p(x_0)\bigr] \nonumber \\
 &- \int \text{KL} \bigr[q(x_1|x_0) \parallel p(z_1,x_1|x_0)\bigr] q(x_0) dx_0 \\
&- \sum_{t=2}^T\! \int\!\! \text{KL} \bigr[q(x_t|x_{t-1})\!\!\parallel\! p(z_t,x_t|x_{t-1})\bigr] q(x_{t-1}|x_{t-2}) dx_{t-1}\!,\! \nonumber
\end{align}}
where $p(z_t,x_t|x_{t-1})=p(z_t|x_t)p(x_t|x_{t-1})$.
\end{lem}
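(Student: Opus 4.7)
The plan is to derive~\eqref{eq::ELBO_factorized} by plugging the Markov factorisation $q(x_{0:T})=q(x_0)\prod_{t=1}^{T}q(x_t|x_{t-1})$ from Lemma~\ref{lem::proposal} into the expanded ELBO~\eqref{eq::ELBO}, marginalising out the variables that do not appear in each summand, and recognising the resulting expressions as KL divergences. The key algebraic lever is the decomposition $\log q(x_{0:T})=\log q(x_0)+\sum_{t=1}^{T}\log q(x_t|x_{t-1})$ under Assumption~\ref{assump::MAP}, which splits the entropy integral in~\eqref{eq::ELBO} into pieces that align, term by term, with the three log-probability integrals already indexed by $t$.

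First, for the $t=0$ contribution, integrating $x_{1:T}$ out of $\int q(x_{0:T})\log p(x_0)\,dx_{0:T}$ and of the matching entropy piece leaves $\int q(x_0)\log[p(x_0)/q(x_0)]\,dx_0=-\text{KL}[q(x_0)\|p(x_0)]$. For each $t\ge 1$, I would integrate out every index other than $\{t-1,t\}$; by the Markov factorisation the remaining joint marginal is $q(x_{t-1},x_t)=q(x_{t-1})\,q(x_t|x_{t-1})$. Collecting the three $t$-indexed contributions (the transition log-likelihood, the observation log-likelihood, and the conditional entropy $-\log q(x_t|x_{t-1})$) and using Assumption~\ref{assum::obs_indep} to merge $p(z_t|x_t)p(x_t|x_{t-1})$ into $p(z_t,x_t|x_{t-1})$ gives the integrand $q(x_t|x_{t-1})\log\bigl[p(z_t,x_t|x_{t-1})/q(x_t|x_{t-1})\bigr]$, whose inner integration over $x_t$ is precisely $-\text{KL}[q(x_t|x_{t-1})\|p(z_t,x_t|x_{t-1})]$. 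The outer weight at $t=1$ is $q(x_0)$ as stated, and for $t\ge 2$ it is the marginal $q(x_{t-1})$ inherited from the Markov proposal, which the lemma expresses in the one-step conditional form $q(x_{t-1}|x_{t-2})$ that is the natural object for the forthcoming Viterbi recursion.

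The steps that require care rather than deep ideas are: (i) justifying the interchange of the $dx_{0:T}$ integral with the finite sum over $t$ so that the term-by-term rearrangement is valid, which follows from the finiteness of the ELBO together with the positivity of the factors; and (ii) reconciling the outer weight $q(x_{t-1})$ produced by the marginalisation with the conditional notation $q(x_{t-1}|x_{t-2})$ used in the statement, which is the sequential object arising under Assumption~\ref{assump::MAP} and is the one actually propagated by the dynamic programme of Section~\ref{sec::method}. Assumption~\ref{assump:smoothness} plays no algebraic role here; it becomes relevant only when SVGD is invoked in the next subsection, so it does not enter this bookkeeping proof.
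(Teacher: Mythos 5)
Your proposal is correct and follows essentially the same route as the paper's proof in Appendix~C: substitute the Markov factorization of $q(x_{0:T})$ from Lemma~\ref{lem::proposal} into the expanded ELBO~\eqref{eq::ELBO}, split the entropy term via $\log q(x_{0:T})=\log q(x_0)+\sum_{t}\log q(x_t|x_{t-1})$, marginalize each summand down to the variables it actually depends on, regroup by time index, and read off the KL divergences. Your point (ii) about reconciling the true marginal $q(x_{t-1})$ with the conditional weight $q(x_{t-1}|x_{t-2})$ appearing in the statement is well taken --- the paper passes over this with an appeal to ``the law of total probability'' and the sequential convention, so your explicit flagging of it is, if anything, more careful than the published argument.
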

\medskip
\begin{proof}
 See Appendix~\ref{sec::appen_C}.
\end{proof}

\bigskip

\subsubsection{SVGD-Based Sequential Approximation of Optimal Distributions}
The factorized ELBO structure in Lemma~\ref{lem::factKL} reveals that the optimal proposal distribution can be computed sequentially by solving a series of marginalized conditional KL minimization problems. However, computing these optimal conditional distributions $q^{\star}(x_t|x_{t-1})$ analytically remains intractable due to the complex, potentially multimodal nature of the target distributions $p(z_t,x_t|x_{t-1})$. 

To address this computational challenge, we leverage SVGD's non-parametric approximation capabilities to sequentially construct particle-based representations of $q^{\star}(x_t|x_{t-1})$ for each time step. This approach transforms the continuous optimization problem into a manageable particle-based computation while preserving the temporal dependencies captured in the factorized ELBO.

The sequential particle generation procedure is as follows:
\begin{enumerate}
\item \textbf{Initialization ($t=1$):} Given the initial state $x_0^{\text{true}}$ is known, we model its prior as a sharply concentrated Gaussian, $p(x_0)=\mathcal{N}(x_0; x_0^{\text{true}}, \varepsilon\mathbf{I})$, $\varepsilon \ll 1$, then $q^{\star}(x_0) = p(x_0)$\footnote{This setup is essential for Theorem~\ref{thm::ELBO_MAP}.}. For $t=1$, given $x_0^{\text{true}}$, compute the distribution $q^{\star}(x_1|x_0) = \frac{1}{N_s} \sum\nolimits_{i=1}^{N_s} \delta(x_1 - x_1^i)$, where particles $\{x_1^i\}_{i=1}^{N_s}$ are updated using SVGD with the following update direction:
\begin{align} \label{eq::SVGD_initial_newupdate}
     &\hat{\phi}^{\star}(x_1^i) = \frac{1}{N_s}\! \sum_{k=1}^{N_s} \Bigl[\kappa(x_1^i, x_1^k) \nabla_{x_1^k} \! \log p(z_1, x_1^k|x_0^{\text{true}}) \nonumber \\
     &\qquad\qquad\qquad\qquad~~~~ + \nabla_{x_1^k} \kappa(x_1^i, x_1^k)\Bigl].
\end{align}

\item \textbf{Sequential Particle Update ($t > 1$):} For $2 \leq t \leq T$, given the set of particles $\{x_{t-1}^i\}_{i=1}^{N_s}$ that empirically represent $q(x_{t-1}|x_{t-2}) \approx \frac{1}{N_s}\sum\nolimits_{i=1}^{N_s} \delta(x_{t-1} - x_{t-1}^i)$, we aim to find a new set of particles $\{x_{t}^i\}_{i=1}^{N_s}$. These new particles should empirically estimate the optimal conditional distribution $q^\star(x_t|x_{t-1})$, which is the solution to the following minimization problem:
    \begin{align}\label{eq::sequential_q_estimator}
    q^\star(x_t|x_{t-1})=&\arg\!\!\!\!\min_{q(x_t|x_{t-1})} \int \text{KL} \bigr[q(x_t|x_{t-1})\\
    &\!\parallel \!p(z_t,x_t|x_{t-1})\bigr] \nonumber\times q(x_{t-1}|x_{t-2})\, \text{d}x_{t-1}
    \end{align}
Lemma~\ref{lem::conditional_svgd} below presents the specific perturbation direction required to evolve the particles $\{x_{t-1}^i\}_{i=1}^{N_s}$ to $\{x_{t}^i\}_{i=1}^{N_s}$, thereby empirically fitting $q^\star(x_t|x_{t-1})$ as defined in~\eqref{eq::sequential_q_estimator}.
\end{enumerate}

\begin{lem}
\label{lem::conditional_svgd}{(KL descent of marginalized conditional SVGD update). Let Assumption~\ref{assump:smoothness} and~\ref{assump::MAP} hold, and define the marginalized KL divergence objective at time $t$ as 
\begin{align}
    \mathcal{J}_t(q) \!\!=\!\!\! \int \!\!\text{KL} \bigr[q(x_t|x_{t-1}) \!\!\parallel\! p(z_t, x_t|x_{t-1})\!\bigr] q(x_{t-1}|x_{t-2}) dx_{t-1}\!.\!\! \nonumber
\end{align}
Given a set of particles $\{x_{t-1}^j\}_{j=1}^{N_s} $ representing the distribution $q(x_{t-1}|x_{t-2})$, let the SVGD update direction be defined as 
\begin{align} \label{eq::SVGD_newupdate}
    &\!\hat{\phi}^{\star}(x_t^i) = \!\frac{1}{N_s^2}\! \sum_{k=1}^{N_s} \! \sum_{j=1}^{N_s} \Bigl[\kappa(x_t^i, x_t^k) \nabla_{x_t^k} \! \log p(z_t, x_t^k|x_{t-1}^j) \nonumber \\
    &\qquad\qquad\qquad\qquad~~~~ + \nabla_{x_t^k} \kappa(x_t^i, x_t^k)\Bigl],
\end{align}
and apply the particle update:
\begin{align}
    x_t^i \leftarrow x_t^i + \epsilon \, \hat{\phi}^{\star}(x_t^i). \nonumber
\end{align}
so that
\begin{align}
    q_\epsilon(x_t | x_{t-1}) = \frac{1}{N_s} \sum\nolimits_{i=1}^{N_s} \delta\Bigl(x_t - (x_t^i + \epsilon \, \hat{\phi}^{\star}(x_t^i))\Bigl). \nonumber
\end{align}
Then, under assumption of the boundedness of the kernel $\kappa(\cdot, \cdot)$, the marginalized KL objective satisfies
\begin{align}
    \left. \frac{d}{d\epsilon} \mathcal{J}_t(q_\epsilon) \right|_{\epsilon = 0} < 0, \nonumber
\end{align}
unless $q(x_t|x_{t-1}) = p(z_t, x_t|x_{t-1})$ almost surely under $q(x_{t-1}|x_{t-2})$.}
\end{lem}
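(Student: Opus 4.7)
The plan is to reduce the marginalized KL objective $\mathcal{J}_t$ to the standard unmarginalized SVGD descent analysis of~\cite{liu2016stein} pointwise in the conditioning variable $x_{t-1}$, and then integrate the resulting identity against $q(x_{t-1}|x_{t-2})$. The key observation is that the outer density $q(x_{t-1}|x_{t-2})$ in $\mathcal{J}_t(q)$ is independent of $\epsilon$, so the entire $\epsilon$-dependence lives in the inner conditional KL factor and the variational calculation factors cleanly through the marginalization.

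First I would justify interchanging differentiation and the outer integral. Assumption~\ref{assump:smoothness} supplies Lipschitz gradients for $\log p(z_t,x_t|x_{t-1})$ in $x_t$, and together with the boundedness of $\kappa$ and its derivatives, this yields a dominating function for the $\epsilon$-derivative of the inner KL on a neighborhood of $\epsilon=0$. Applying the Liu--Wang identity pointwise then gives
\begin{align*}
 &\left.\frac{d}{d\epsilon}\mathrm{KL}\bigl[q_\epsilon(\cdot|x_{t-1})\parallel p(z_t,\cdot|x_{t-1})\bigr]\right|_{\epsilon=0}\\
 &\qquad= -\mathbb{E}_{x_t\sim q(\cdot|x_{t-1})}\bigl[\operatorname{tr}(\mathcal{A}_{x_{t-1}}\hat\phi^{\star})(x_t)\bigr],
\end{align*}
where $\mathcal{A}_{x_{t-1}}\phi(x)=\phi(x)\nabla_{x}\log p(z_t,x|x_{t-1})^{\top}+\nabla_{x}\phi(x)$ is the Stein operator associated with the conditional target. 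Substituting this into $d\mathcal{J}_t/d\epsilon$ and swapping the integrals replaces the per-particle score by its $q(x_{t-1}|x_{t-2})$-average; after empirical approximation with the particles $\{x_{t-1}^j\}$ and $\{x_t^k\}$, the resulting optimization over unit-norm $\phi\in\mathcal{H}^d$ is solved in closed form by the reproducing-kernel argument of~\cite{liu2016stein}, and the maximizer is exactly the $\hat\phi^{\star}$ of~\eqref{eq::SVGD_newupdate}. Evaluating the derivative at this optimum yields $\left.d\mathcal{J}_t(q_\epsilon)/d\epsilon\right|_{\epsilon=0}=-\|\hat\phi^{\star}\|_{\mathcal{H}^d}^{2}\le 0$.

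The hard part is upgrading this inequality to \emph{strict} descent and characterizing the equality case. The RKHS norm on the right is the squared empirical marginalized kernelized Stein discrepancy; with a characteristic kernel it vanishes only when the conditional Stein identity holds pointwise, i.e., when $q(x_t|x_{t-1})=p(z_t,x_t|x_{t-1})$ for $q(x_{t-1}|x_{t-2})$-almost every $x_{t-1}$. The subtlety is that marginalization could a priori conflate two conditionals that only agree on $x_{t-1}$-average; this is resolved by exploiting linearity of the Stein operator in $\phi$, which preserves the conditional identity pointwise in $x_{t-1}$, so that the strict-descent property survives under the outer integral and delivers the claim.
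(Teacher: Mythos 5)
Your proposal follows essentially the same route as the paper's proof: interchange the $\epsilon$-derivative with the outer integral over $q(x_{t-1}|x_{t-2})$, apply the Liu--Wang KL-descent identity pointwise in $x_{t-1}$ to obtain the Stein-operator form of the derivative, and conclude strict descent at the optimal RKHS direction unless $q(x_t|x_{t-1})=p(z_t,x_t|x_{t-1})$ almost surely under $q(x_{t-1}|x_{t-2})$. If anything you are more careful than the paper on two points it leaves implicit --- the domination argument for swapping derivative and integral, and the fact that vanishing of the (marginalized) kernelized Stein discrepancy separates distributions only for a characteristic kernel, not merely a bounded one.
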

\begin{proof}
    See Appendix~\ref{sec::appen_D}.
\end{proof}

\medskip
This sequential SVGD procedure generates particle sets $\{x_t^i\}_{i=1}^{N_s}$ for each time step $t \in \{1, \ldots, T\}$ that collectively provide a finite discrete approximation of the optimal proposal distribution $q^{\star}(x_{0:T})$. The resulting particle-based representation transforms the intractable continuous MAP optimization problem into a discrete combinatorial problem that can be efficiently solved using dynamic programming, as detailed in the following subsection.

\subsection{Particle-Based Dynamic Programming for MAP Sequence Recovery}\label{subsec::SteinMAPDP}

The sequential SVGD procedure described in the previous subsection generates particle sets $\{x_t^i\}_{i=1}^{N_s}$ for each time step $t \in \{1, \ldots, T\}$ that empirically represent the optimal conditional distributions $q^{\star}(x_t|x_{t-1}) \approx \frac{1}{N_s}\sum\nolimits_{i=1}^{N_s} \delta(x_t - x_t^i)$. These particles serve as samples from the optimal distributions, providing a finite discrete approximation of the continuous state space at each time step. This particle-based representation transforms the continuous MAP optimization problem in~\eqref{eq::MAP_est_formulation} into a discrete combinatorial problem over finite particle sets.

Since $\text{KL} \bigr[q^{\star}(x_{0:T})\parallel p(x_{0:T}|z_{1:T}) \bigr] \rightarrow 0$ implies $q^{\star}(x_{0:T}) \rightarrow p(x_{0:T}|z_{1:T})$ in distribution~\cite{blei2017variational}, we have $q^{\star}(x_{0:T}) \propto p(x_{0:T}|z_{1:T})$ asymptotically. Consequently, given the particle sets $\{x_t^i\}_{i=1}^{N_s}$ for all $t \in \{1, \ldots, T\}$, an approximation to the MAP trajectory $x^{\text{MAP}}_{0:T}$ in the second optimization in~\eqref{eq::MAP_est_formulation} is
\begin{align} \label{eq::Viterbi_cost1}
    \hat{x}^{\text{MAP}}_{0:T} = \arg \!\!\!\!\!\!\!\!\!\!\!\!\!\!\!\!\!\!\!\!\!\!\!\max_{x_{0:T} \in \{x_0^{\text{true}}\} \times \bigotimes_{k=1}^{T} \{x_k^{i}\}_{i=1}^{N_s}} \!\!\!\!\!\!\!\!\!\!\!\!\!\!p(x_{0:T}|z_{1:T}),
\end{align}
where $\bigotimes_{k=1}^{T}$ is the Cartesian product over time steps from $1$ to $T$. This implies that the trajectory $x_{1:T}$ is selected from the product of particle sets across all time steps. As long as the support of $q^{\star}(x_{0:T})$ includes the support of $p(x_{0:T} | z_{1:T})$, the estimate $\hat{x}^{\text{MAP}}_{0:T}$ converges asymptotically to $x_{0:T}^{\text{MAP}}$ as $N_s \rightarrow\infty$.

\begin{figure}[!t]
    \centering
    \begin{minipage}[t]{0.45\textwidth}
        \centering
        \includegraphics[width=\textwidth]{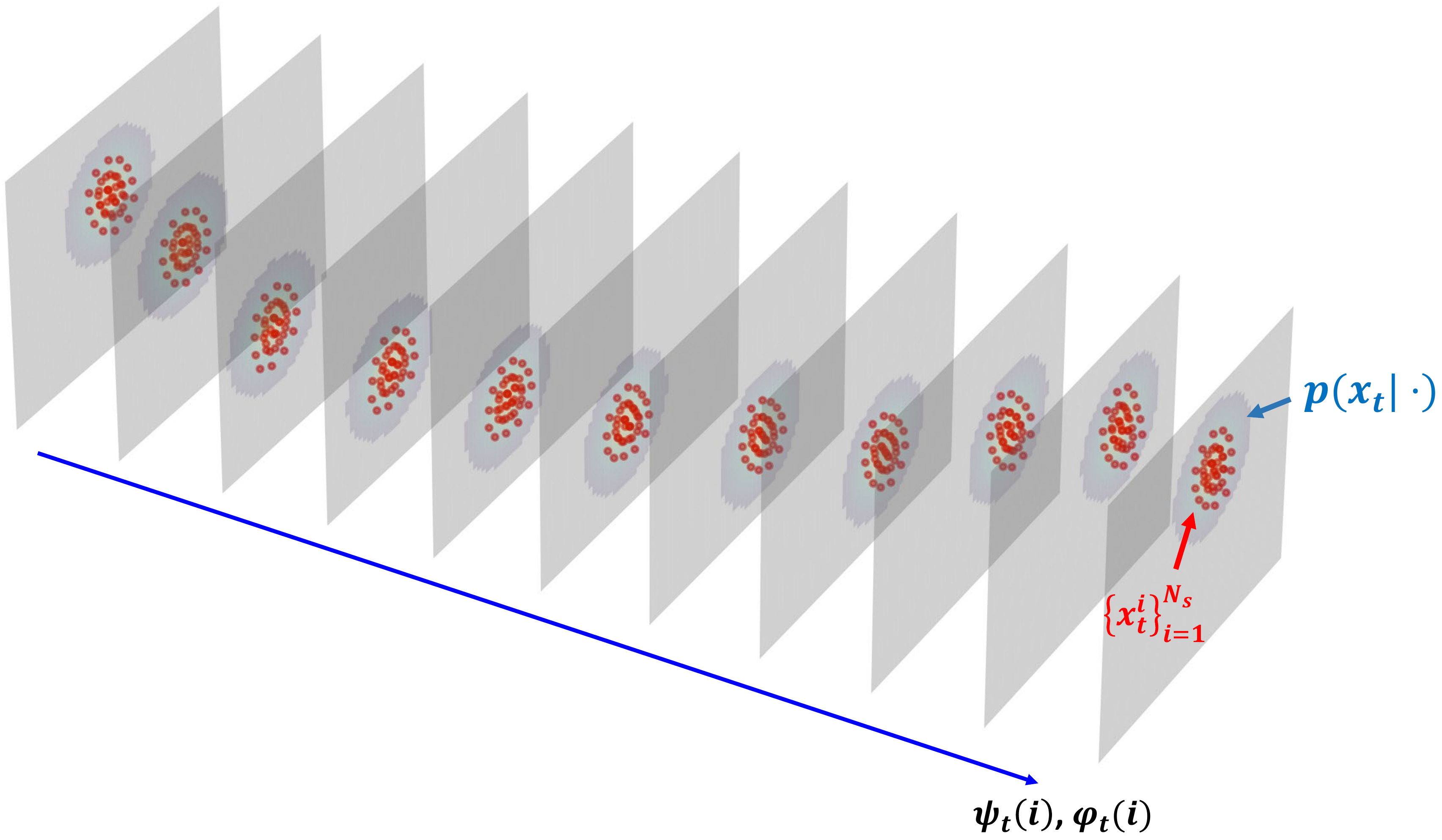}
        {{\scriptsize (a) Sequential particle update to obtain a finite discrete approximation of the state space.}}
        \label{fig::particle_DP1}
    \end{minipage}
    \hskip\baselineskip
    \begin{minipage}[t]{0.43\textwidth}
        \centering
        \includegraphics[width=\textwidth]{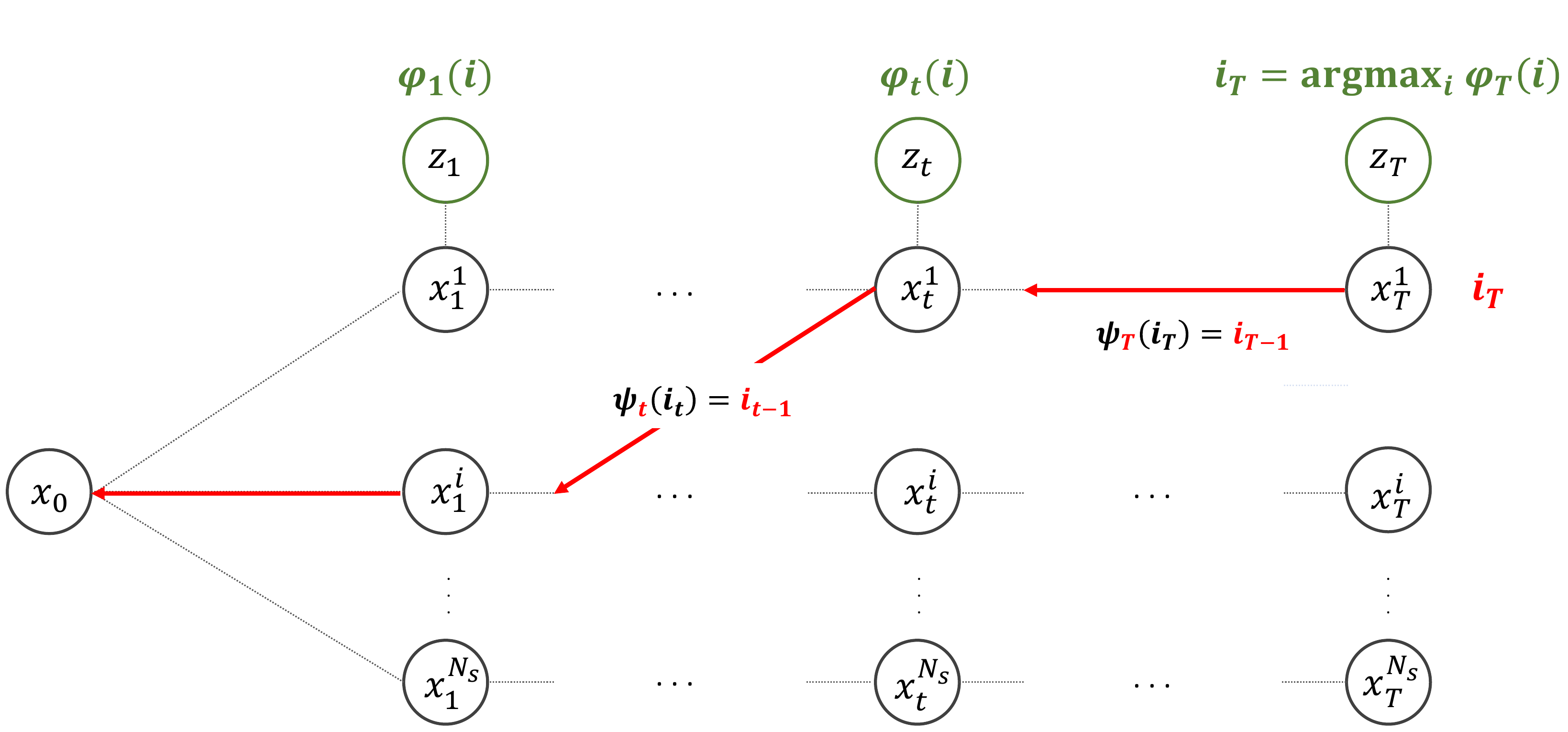}
        {{\scriptsize (b) Backtracking the MAP sequence via the memorization table $\psi(i)$.}}
        \label{fig::particle_DP2}
    \end{minipage}
    \caption{{\small Integration of SVGD-generated sequential particle sets with dynamic programming to yield globally optimal MAP trajectories.}}
    \label{fig::particle_DP}
\end{figure}

Taking the logarithm of the posterior in~\eqref{eq::Viterbi_cost1} and exploiting the first-order Markov structure of the model~\eqref{eq::SSM-x} and the conditional independence of observations (Assumption~\ref{assum::obs_indep}), we obtain, up to an additive constant:
\begin{align}
    \log p(x_{0:T}|z_{1:T}) \propto \log p(x_0) +
    \sum_{t=1}^T\nolimits \log p(x_t, z_t|x_{t-1}).
\end{align}

Therefore, the MAP trajectory approximation becomes:
\begin{align}
    \hat{x}^{\text{MAP}}_{0:T} = \arg \!\!\!\!\!\!\!\!\!\!\!\!\!\!\!\!\!\!\!\!\!\!\!\!\max_{x_{0:T} \in \{x_0^{\text{true}}\} \times \bigotimes_{k=1}^{T} \{x_k^{i}\}_{i=1}^{N_s}} \!\!\Bigl[\log p(x_0) + \!\sum_{t=1}^T  \log p(z_t,x_t|x_{t-1})\Bigr]\!.
\end{align}

\subsubsection{Viterbi-Style Dynamic Programming Algorithm}
The discrete optimization problem above can be efficiently solved using a Viterbi-style dynamic programming algorithm that leverages the temporal structure inherent in the factorized ELBO. The algorithm consists of two main phases: forward recursion for computing optimal accumulated scores, and backtracking for reconstructing the optimal sequence.

Finally, $\hat{x}^{\text{MAP}}_{0:T} = (\hat{x}^{\text{MAP}}_0, \hat{x}^{\text{MAP}}_1, \ldots, \hat{x}^{\text{MAP}}_T)$ is efficiently found by dynamic programming over this discretized particle set. The steps are as follows:

\textbf{Forward Recursion.} For $2 \leq t \leq T$ and for each particle $x_t^i$ (where $1 \leq i \leq N_s$), we compute $\psi_t(i)$ (the index of the optimal preceding particle) and $\varphi_t(i)$ (the maximum accumulated score to reach $x_t^i$) as:
\begin{subequations} \label{eq::forward_recursion}
\begin{align}
\psi_t(i)\! &= \arg\!\!\!\!\!\!\!\max_{j \in \{1,\dots,N_s\}} \Bigl\{\log p(x^i_t|x^j_{t-1}) + \varphi_{t-1}(j) \Bigl\}\!,\!\! \label{eq::forward_recursion1}\\
\varphi_t(i)\! &= \log p(z_t,x^i_t|x^{\psi_t(i)}_{t-1}) + \varphi_{t-1} \bigl( \psi_t(i) \bigl), \label{eq::forward_recursion2}
\end{align}
\end{subequations}
where $\varphi_1(i)$ is initialized based on $\log p(z_1,x_1^i|x_0^{\text{true}})$. This recursion ensures that $\varphi_t(i)$ stores the maximum score for any path ending at particle $x_t^i$, by considering all possible preceding particles $x_{t-1}^j$ and their optimal accumulated scores $\varphi_{t-1}(\psi_t(i))$.

\textbf{Backtracking the MAP Sequence.} After computing $\varphi_T(i)$ for all $i$, the index of the optimal final particle is found as $i_T = \arg\!\max_{i} \varphi_T(i)$. The complete MAP sequence $\hat{x}^{\text{MAP}}_{0:T}$ is then reconstructed by tracing back through the memorization table $\psi_t(i)$ from $t = T-1$ down to $1$:
\begin{align}
\hat{x}^{\text{MAP}}_t = x_{t}^{i_t}, \quad \text{where}\quad i_t = \psi_{t+1}(i_{t+1}), \nonumber
\end{align}
with $\hat{x}^{\text{MAP}}_0 = x_0^{\text{true}}$. This backtracking step ensures that the reconstructed path corresponds to the sequence of particles that yielded the maximum accumulated score, thereby identifying the MAP sequence estimator.

This two-stage approach offers a robust and computationally efficient solution for MAP sequence estimation in multimodal settings. First, SVGD flexibly captures complex posterior structures with a particle (support) set; then, particle-based dynamic programming decodes the optimal trajectory from these samples.

\subsubsection{Theoretical Guarantees}
The correctness of our particle-based dynamic programming approach is established through the connection between ELBO maximization and MAP trajectory recovery in the small-$\varepsilon$ regime, as formalized in the following theorem.

In summary, \textsf{Stein-MAP-Seq} computes the MAP sequence using a compact set of $N_s$ particles at each time step $t$. The optimal sequence $\hat{x}_{0:T}^{\mathrm{MAP}}$ is obtained by backtracking through the memorization table $\psi_t(i)$, which is constructed from the accumulated scores $\varphi_t(i)$ in Algorithm~\ref{alg:SteinMAPDP}. In the small-$\varepsilon$ regime, $\hat{x}_{0:T}^{\mathrm{MAP}}$ coincides with the maximizer of the ELBO in Lemma~\ref{lem::factKL}, as stated in Theorem~\ref{thm::ELBO_MAP}. For notational simplicity in Theorem~\ref{thm::ELBO_MAP}, we let $s$ denote an arbitrary candidate trajectory from $\mathcal{S}$, with $s^\star$ denoting the maximizer, i.e., $s^\star = \hat{x}_{0:T}^{\mathrm{MAP}}$.

\begin{algorithm}[t]
\caption{\textsf{Stein-MAP-Seq}}
\label{alg:SteinMAPDP}
\SetAlgoLined
\LinesNumbered
\DontPrintSemicolon
\SetKwInOut{Input}{Input}\SetKwInOut{Output}{Output}
\Input{Transition probability $p(x_t|x_{t-1})$, initial state $x_0^{\text{true}}$, likelihood function $p(z_t|x_t)$, observations $z_{1:T}$, the number of particles $N_s$, the step size $\epsilon$.}
\BlankLine
\tcp{1) Initialization $(t=1)$}
\For{$i = 1,2,\dots,N_s$ \textbf{in parallel}}
    {
        $x_1^i \leftarrow  x_1^i + \epsilon \hat{\phi}^{\star}(x_1^i)$, \\
        where $\hat{\phi}^{\star}(x_1)$ from \eqref{eq::SVGD_initial_newupdate} given $x_0^{\text{true}}$
    }
\For{$i \leftarrow 1$ to $N_s$}
{$\varphi_1(i) = \log p(z_1,x_1^i|x_0^{\text{true}})$}
\tcp{2) Sequential Particle Update}
\For{$t=2,\dots,T$}{
    \For{$i = 1,2,\dots,N_s$ \textbf{in parallel}}
        {
            $x_t^i \leftarrow  x_t^i + \epsilon \hat{\phi}^{\star}(x_t^i)$, \\
            where $\hat{\phi}^{\star}(x_t)$ from \eqref{eq::SVGD_newupdate} given $\{x_{t-1}^j\}_{j=1}^{N_s}$
        }
    \tcp{3) Forward Recursion}
    \For{$i \leftarrow 1$ to $N_s$}{$\psi_t(i)\leftarrow\eqref{eq::forward_recursion1}$ \\
    $\varphi_t(i)\leftarrow\eqref{eq::forward_recursion2}$
    }
}
\tcp{4) Backtracking}
$i_T = \arg\!\max_{i} \varphi_T(i)$ \\
$\hat{x}^{\text{MAP}}_T = x^{i_T}_T$ \\
\For{$t=T-1,\dots,1$}{
    $i_t = \psi_{t+1}(i_{t+1})$\\
    $\hat{x}^{\text{MAP}}_t = x_{t}^{i_t}$   
}
$\hat{x}^{\text{MAP}}_0 = x_0^{\text{true}}$ \\ 
\Return{$\hat{x}^{\text{MAP}}_{0:T} = \{\hat{x}^{\text{MAP}}_0,\hat{x}^{\text{MAP}}_1,\dots,\hat{x}^{\text{MAP}}_T\}$}
\end{algorithm}

\begin{thm}\label{thm::ELBO_MAP}
{(Equivalence of ELBO Maximization and the MAP Trajectory over Particle Sets in the Small-$\varepsilon$ Regime). Let Assumption~\ref{assump:smoothness} and~\ref{assump::MAP} hold, and let $\mathcal{S} \subseteq \{x_0^{i}\}_{i=1}^{N_s} \times \cdots \times \{x_T^{i}\}_{i=1}^{N_s}$ denote the set of admissible trajectories over the particle sets. For $s=(s_0,\ldots,s_T) \in \mathcal{S}$, define the trajectory score as
\begin{align}
    J(s) = \log p(s_0) + \sum_{t=1}^T\nolimits \log p(s_t, z_t|s_{t-1}). \nonumber
\end{align}
For $\varepsilon>0$, define $q_{\varepsilon}(x_0)=\mathcal{N}(x_0;s_0,\varepsilon \mathbf{I})$, $q_{\varepsilon}(x_t|x_{t-1})=\mathcal{N}(x_t;s_t,\varepsilon \mathbf{I})$ for all $t=1,\dots,T$, and, from~\eqref{eq::ELBO_factorized} in Lemma~\ref{lem::factKL}, let
\begin{align}
    \mathcal{L}_{\varepsilon}(s) := \mathcal{L}_{\varepsilon}(q_{\varepsilon}) + (T+1)H_{\varepsilon}, \nonumber
\end{align}
where 
\begin{align}
    \mathcal{L}_{\varepsilon}(q_{\varepsilon}) &\!=\! \mathbb{E}_{q_{\varepsilon}(x_0)}[\log p(x_0)] \!+\! \mathbb{E}_{q_{\varepsilon}(x_0)}\mathbb{E}_{q_{\varepsilon}(x_1|x_0)}[\log p(z_1,x_1|x_0)] \nonumber \\
    &+ \sum_{t=2}^T\nolimits \mathbb{E}_{q_{\varepsilon}(x_{t-1}|x_{t-2})}\mathbb{E}_{q_{\varepsilon}(x_{t}|x_{t-1})}[\log p(z_t,x_t|x_{t-1})], \nonumber    
\end{align}
and $H_{\varepsilon}$ is the entropy of a Gaussian, which does not depend on $s$ and is therefore constant. Then, for every $s \in \mathcal{S}$,
\begin{align}
    \lim_{\varepsilon \rightarrow 0}\mathcal{L}_{\varepsilon}(s) = J(s). \nonumber
\end{align}

Moreover, $\mathcal{S}$ is finite, and suppose that there exists a unique $s^{\star} \in \arg\max_{s\in \mathcal{S}}J(s)$ with gap $\delta := J(s^{\star}) - \max_{s \neq s^{\star}} J(s) > 0$. Then there exists $\varepsilon_0 > 0$ such that for all $0<\varepsilon<\varepsilon_0$,
\begin{align}
    \mathcal{L}_{\varepsilon}(s^{\star})>\mathcal{L}_{\varepsilon}(s) ~~\text{for all}~~s\neq s^{\star}. \nonumber
\end{align}
}
\end{thm}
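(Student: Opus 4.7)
The plan is to establish the theorem in two logically independent steps: a \emph{pointwise Gaussian-concentration} estimate giving $\mathcal{L}_{\varepsilon}(s)\to J(s)$ for each fixed trajectory, and a \emph{finite-set preservation-of-maximizer} argument that uses the gap $\delta>0$. The smoothness hypotheses in Assumption~\ref{assump:smoothness} supply the only nontrivial analytic input; everything else is elementary.

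\textbf{Step 1 (pointwise limit).} Fix $s=(s_0,\dots,s_T)\in\mathcal{S}$. I would treat each expectation appearing in $\mathcal{L}_{\varepsilon}(q_{\varepsilon})$ separately. For the prior term, $q_{\varepsilon}(x_0)=\mathcal{N}(x_0;s_0,\varepsilon\mathbf{I})$, so by Assumption~\ref{assump:smoothness} and Taylor's theorem with Lagrange remainder, $\log p(x_0)=\log p(s_0)+\nabla\!\log p(s_0)^{\!\top}(x_0-s_0)+R(x_0)$ with $|R(x_0)|\le \tfrac{L}{2}\|x_0-s_0\|^2$. Integrating under the Gaussian kills the linear term and leaves $\mathbb{E}_{q_{\varepsilon}(x_0)}[\log p(x_0)]=\log p(s_0)+O(\varepsilon n_x)$. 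For the later terms, I exploit the fact that $q_{\varepsilon}(x_t|x_{t-1})=\mathcal{N}(x_t;s_t,\varepsilon\mathbf{I})$ does not actually depend on $x_{t-1}$. Applying the same Taylor estimate first in the $x_t$-variable (uniformly in $x_{t-1}$ over a compact neighborhood, using the local Lipschitz gradient of $\log p(z_t,\cdot|\cdot)$ in both arguments) and then in the $x_{t-1}$-variable yields
\begin{equation*}
\mathbb{E}_{q_{\varepsilon}(x_{t-1}|x_{t-2})}\!\mathbb{E}_{q_{\varepsilon}(x_t|x_{t-1})}\!\bigl[\log p(z_t,x_t|x_{t-1})\bigr]=\log p(z_t,s_t|s_{t-1})+O(\varepsilon).
\end{equation*}
Summing the $T+1$ contributions gives $\mathcal{L}_{\varepsilon}(q_{\varepsilon})=J(s)+O(\varepsilon)$. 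Since $H_{\varepsilon}$ is an additive constant in $s$ (a Gaussian entropy that depends only on $\varepsilon$ and $n_x$), it contributes the same shift to every trajectory and can be absorbed into the definition of $\mathcal{L}_{\varepsilon}(s)$ on the "score" scale that governs the comparisons below; with this bookkeeping, $\mathcal{L}_{\varepsilon}(s)\to J(s)$ as $\varepsilon\downarrow 0$.

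\textbf{Step 2 (preservation of the maximizer).} Since $\mathcal{S}$ is finite, pointwise convergence is automatically uniform: for any prescribed $\eta>0$ there exists $\varepsilon_0>0$ such that $|\mathcal{L}_{\varepsilon}(s)-J(s)|<\eta$ for every $s\in\mathcal{S}$ and every $0<\varepsilon<\varepsilon_0$. Taking $\eta=\delta/2$ and using the assumed gap, for any $s\neq s^{\star}$,
\begin{equation*}
\mathcal{L}_{\varepsilon}(s) < J(s)+\tfrac{\delta}{2}\le J(s^{\star})-\tfrac{\delta}{2} < \mathcal{L}_{\varepsilon}(s^{\star}),
\end{equation*}
which is precisely the desired strict inequality, valid for all $\varepsilon\in(0,\varepsilon_0)$.

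\textbf{Main obstacle.} The substantive technical point is the passage from continuous Gaussian expectations to pointwise evaluations at $s_t$, i.e.\ the uniform remainder bound on the double expectation. Assumption~\ref{assump:smoothness} delivers Lipschitz-gradient control on $\log p(z_t,\cdot|\cdot)$ and thus the quadratic remainder, but one must be careful that the $O(\varepsilon)$ error in the inner integral is uniform in $x_{t-1}$ over the support relevant for the outer Gaussian---this is precisely where a local (rather than global) Lipschitz bound would require a short compactness argument, e.g.\ restricting to a shrinking neighborhood of $s_{t-1}$ where the outer Gaussian places nearly all of its mass and controlling tails by standard Gaussian concentration. Secondary bookkeeping concerns---such as the $\varepsilon$-dependence of the prior $p(x_0)=\mathcal{N}(x_0;x_0^{\text{true}},\varepsilon\mathbf{I})$---are inconsequential because $s_0=x_0^{\text{true}}$ is common to all trajectories and so contributes a shift identical for every $s\in\mathcal{S}$, which cancels in the strict-ordering comparison.
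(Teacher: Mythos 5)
Your proposal is correct and follows essentially the same two-step route as the paper's Appendix~E proof: a pointwise limit $\mathcal{L}_{\varepsilon}(s)\to J(s)$ for each fixed trajectory, followed by uniform convergence over the finite set $\mathcal{S}$ combined with the gap $\delta$ to preserve the maximizer (the paper uses a $\delta/3$ sandwich where you use $\delta/2$; both work). The only difference is cosmetic: the paper invokes the Gaussian mollifier/approximate-identity property by citation, whereas you make that convergence explicit via a Taylor expansion with the Lipschitz-gradient remainder bound, and you correctly flag the uniformity-in-$x_{t-1}$ issue that the paper's citation-based argument leaves implicit.
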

\medskip
\begin{proof}
    See Appendix~\ref{sec::appen_E}.
\end{proof}

Theorem~\ref{thm::ELBO_MAP}, combined with Lemma~\ref{lem::conditional_svgd}, establishes rigorous convergence guarantees for \textsf{Stein-MAP-Seq}. Lemma~\ref{lem::conditional_svgd} ensures that the sequential SVGD updates provably decrease the marginalized conditional KL divergence at each time step, guaranteeing that the generated particles $\{x_t^{i}\}_{i=1}^{N_s}$ concentrate around the modes of the target conditional posteriors $p(z_t, x_t|x_{t-1})$. Theorem~\ref{thm::ELBO_MAP} then establishes that the Viterbi-style dynamic programming over these particle sets recovers the trajectory that maximizes the ELBO, which converges to the true log-posterior $J(s)$ as $\varepsilon \to 0$. Together, these results provide end-to-end theoretical justification: the SVGD-generated particles provably approximate optimal conditional distributions (Lemma~\ref{lem::conditional_svgd}), and the dynamic programming algorithm provably identifies the globally optimal MAP trajectory within this discrete particle support (Theorem~\ref{thm::ELBO_MAP}), with approximation error $|\mathcal{L}_\varepsilon(s) - J(s)| \to 0$ as particle quality improves.

Having established the theoretical foundations and convergence guarantees, we now analyze the computational complexity of \textsf{Stein-MAP-Seq}.

\subsection{Computational Analysis of \textsf{Stein-MAP-Seq}}\label{subsec::Computation}

Table~\ref{Table_Comp} summarizes the computational complexity of the main components of \textsf{Stein-MAP-Seq}. The dominant cost in the SVGD update arises from the kernel interaction term, which scales quadratically with the number of particles $N_s$ and linearly with the state dimension $n_x$, resulting in an $\mathcal{O}(N_s^2 n_x)$ complexity. Although this double summation in Eq.~\eqref{eq::SVGD_newupdate} can be computationally demanding, it is consistent with standard SVGD-based methods and, importantly, is fully parallelizable. Both the log-posterior gradient evaluations and the pairwise kernel interactions are independent across particles, making them well-suited for efficient GPU acceleration. To complement this theoretical analysis, Appendix~\ref{sec::appen_F} reports GPU-based per-step runtime evaluations of~\eqref{eq::SVGD_newupdate} as a function of the number of particles and state dimension (Fig.~\ref{fig:appendix:com}), demonstrating stable and predictable runtime behavior in practice.

Beyond parallelization, \textsf{Stein-MAP-Seq} further mitigates computational cost through the mode-seeking yet dispersive behavior of SVGD. By concentrating particles around multiple dominant modes while maintaining separation between them, SVGD produces a compact set of high-quality candidates for MAP trajectory estimation. As a result, \textsf{Stein-MAP-Seq} operates reliably with a significantly smaller number of particles than MC sampling–based MAP-Seq methods.

The primary computational bottleneck in MAP sequence estimation arises from the forward recursion in~\eqref{eq::forward_recursion}, which incurs an $\mathcal{O}(N_s^2)$ cost per time step. Crucially, the reduction in the effective particle count $N_s$ directly translates into a substantial reduction in this dominant cost. Moreover, the forward recursion remains amenable to parallelization through independent score computations followed by max-reduction operations, and its complexity is independent of the state dimension $n_x$. The backtracking step scales linearly with the trajectory length $T$ and contributes negligible overhead. Consequently, \textsf{Stein-MAP-Seq} achieves computationally efficient MAP sequence estimation by combining parallelizable SVGD updates with particle-efficient mode discovery, effectively reducing the cost of the dominant forward recursion even in high-dimensional systems. This distinction explains why the practical efficiency of \textsf{Stein-MAP-Seq} stems not from asymptotic complexity reduction, but from particle efficiency and parallel execution.

Overall, the above analysis highlights that the computational efficiency of \textsf{Stein-MAP-Seq} is closely tied to its parallel-friendly structure. As a result, \textsf{Stein-MAP-Seq} is naturally suited for efficient execution on parallel computing architectures. Although SVGD requires synchronization to aggregate kernel interactions, this synchronization is limited to standard reduction operations and does not involve resampling or particle selection. In contrast, Monte Carlo sampling methods such as particle filters~\cite{gustafsson2010particle} rely on resampling procedures that introduce non-deterministic state selection, resulting in variable computation paths across runs. Moreover, particles cannot be processed independently because resampling depends on global statistics, such as normalized weights and cumulative sums, which significantly limit parallel efficiency in practice.

\begin{table}[t]
\centering
\caption{Computational complexity of \textsf{Stein-MAP-Seq} components}
\label{Table_Comp}
\small
\resizebox{0.48\textwidth}{!}{\begin{tabular}{ccc}
    \hline
    \textbf{Component} & \textbf{Time Complexity} & \textbf{Parallelizable\footnotemark} \\
    \hline
    Log-posterior gradient (in Eq.~\eqref{eq::SVGD_newupdate}) & $\mathcal{O}(N_s n_x)$ & Yes (particle-wise independent)\\
    Kernel interaction (in Eq.~\eqref{eq::SVGD_newupdate}) & $\mathcal{O}(N_s^2 n_x)$ & Yes (pairwise independent; reduction) \\
    Forward recursion (Eq.~\eqref{eq::forward_recursion}) & $\mathcal{O}(N_s^2)$ & Yes (per time step; max-reduction) \\
    Backtracking & $\mathcal{O}(T)$ & No (time-serial) \\
    \hline
    \end{tabular}}
\end{table}
\footnotetext{The \emph{parallelizable} column indicates execution structure and does not change the asymptotic computational complexity.}


\section{Demonstrations}\label{sec::results}

In this section, we rigorously evaluated the proposed method on challenging multimodal scenarios, including (A) nonlinear dynamics with \emph{ambiguous} measurements, (B) pose estimation under \emph{unknown} data associations with \emph{outliers}, (C) range-only (wireless) localization under temporary \emph{unobservability}, and (D) high-dimensional manipulator with \emph{kinematic redundancy}.

We compared the performance of several state‑estimation methods:
\begin{itemize}
  \item \textbf{Extended Kalman Filter (EKF)}~\cite{gelb1974applied} and \textbf{Particle Filter (PF)}~\cite{doucet2001introduction}, both MMSE estimators, and \textbf{Extended Kalman Smoother (EKS)}, a smoothing-based MMSE sequence estimator;
  \item \textbf{Iterated EKF (iEKF)}, a local MAP estimator\footnotemark{}\footnotetext{The iterative procedure can be interpreted as the Gauss-Newton method for finding the MAP estimate at the correction step~\cite{sarkka2023bayesian}.}, and \textbf{Iterated EKS (iEKS)}, a local MAP sequence estimator;
  \item \textbf{GTSAM}~\cite{gtsam}, batch MAP sequence estimator via factor-graph optimization;
  \item \textbf{PF‑based MAP estimate (PF‑MAP)}~\cite{driessen2008particle} and \textbf{PF‑based MAP‑sequence estimate (PF‑MAP‑Seq)}~\cite{godsill2001maximum};
  \item \textbf{Stein Particle Filter (SPF)}~\cite{maken2022stein} and its MAP variant \textbf{SPF‑MAP}, which applies the PF‑MAP procedure to the SPF output.
\end{itemize}

All methods were implemented in Python and evaluated on a laptop equipped with an Intel Core i7-10510U CPU and 32 GB of RAM. Our method was implemented in PyTorch and executed on a GPU to accelerate the parallel particle updates in~\eqref{eq::SVGD_newupdate}. This GPU acceleration affects only computational speed and does not modify the algorithmic structure or estimation results. For PF-based methods, stratified resampling was applied at each time step. For the Stein-based methods, the step size was set to $\epsilon = 0.005$, and the number of iterations was fixed to $100$ across all scenarios. We employ a radial basis function (RBF) kernel to model smooth interactions among particles. For GTSAM, we adopt the Levenberg–Marquardt optimizer with a maximum of 50 iterations to improve robustness under strong nonlinearities.

\begin{figure}[t]
    \centering
    \includegraphics[width=0.92\linewidth]{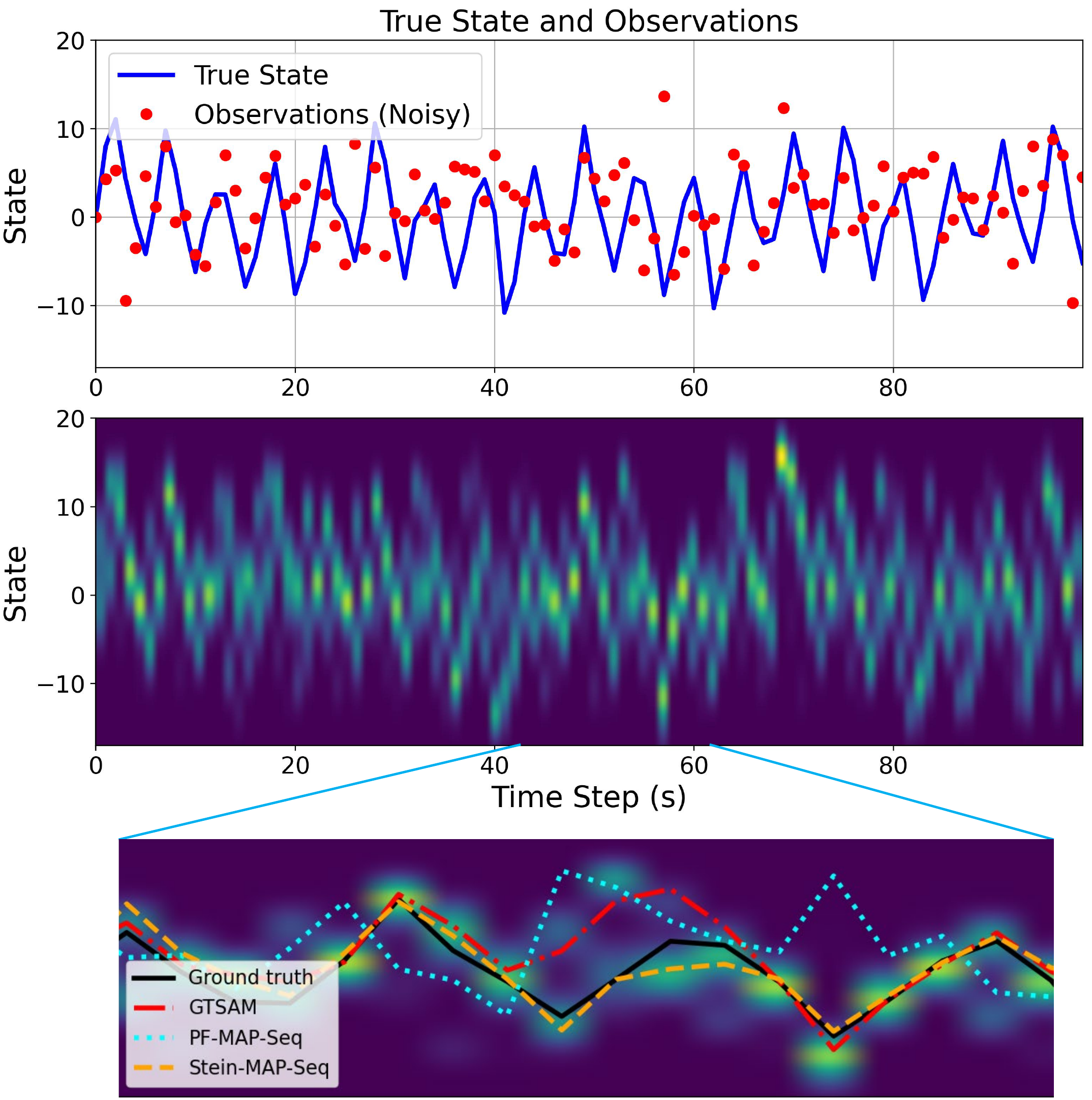}
    \caption{\textbf{(Top)} True state trajectory and noisy observations (red dots) with sign ambiguity, resulting in a bimodal posterior distribution. \textbf{(Middle)} Time-indexed grid-based posterior density illustrating the evolution of bimodality. \textbf{(Bottom)} One simulation over the interval ($43$–$62\mathrm{s}$) showing that \textsf{Stein-MAP-Seq} resolves bimodality and recovers the correct trajectory, whereas baseline MAP-Seq estimators exhibit incorrect mode selection.}
    \label{fig:Scenario1}
\end{figure}

\subsection{1D Nonlinear Dynamics with Ambiguous Measurement}
In the first example, we consider a benchmark problem from~\cite{garcia2016iterated}, where the system model~\eqref{eq::SSM} is given by\footnotemark{}\footnotetext{The models correspond to the form $x_t \sim p(x_t|x_{t-1}) = \mathcal{N}(f(x_{t-1}),\, v_{t-1})$ and $z_t \sim p(z_t|x_t) = \mathcal{N}(h(x_t),\, r_t).$}
\begin{subequations}
\begin{align}
    x_t &= 0.9 x_{t-1} \!+\! \frac{10x_{t-1}}{1+x_{t-1}^2} \!+ 8\cos{\bigl(1.2(t\!-\!1)\bigl)} + v_{t-1},\! \label{eq::sim_m1} \\ \label{eq::sim_m2} 
    z_t &= 0.05 x_t^2 + r_t, 
\end{align}
\end{subequations}
with $v_{t-1}\sim\mathcal{N}(0, 5)$, $r_t\sim\mathcal{N}(0, 4^2)$, and $x_0\sim\mathcal{N}(0, 5)$. The significance of this example is that the term $x_t^2$ in~\eqref{eq::sim_m2} introduces a sign ambiguity, e.g., $x_t=+a$ and $x_t=-a$ yield the same measurement, resulting in a bimodal posterior (see Fig.~\ref{fig:Scenario1}, middle). Moreover, the large process uncertainty spreads the predicted prior across competing modes, while the high measurement uncertainty weakens observational correction, preventing reliable local mode discrimination.

To evaluate the state estimation methods, we generated $50$ independent trajectories of $100$ time steps each. For each estimator, we computed the root‑mean‑square error (RMSE) of the state estimates, averaged over all Monte Carlo runs. The results in Fig.~\ref{fig:Scenario1}, bottom, Fig.~\ref{fig::Scenario1-2}, and Table~\ref{Table_Ex1} demonstrate that \textsf{Stein-MAP-Seq} consistently outperformed competing estimators, highlighting its robustness to bimodal posterior distributions under high process and measurement uncertainty. This performance advantage arises from the complementary interaction between SVGD’s mode-seeking behavior and Viterbi-style dynamic programming. SVGD concentrates particles around dominant posterior modes using only a small particle set, while Viterbi's forward recursion exploits this focused representation to recover the globally optimal trajectory over the sampled space.

\begin{figure}[t]
    \centering
    \includegraphics[width=0.95\linewidth]{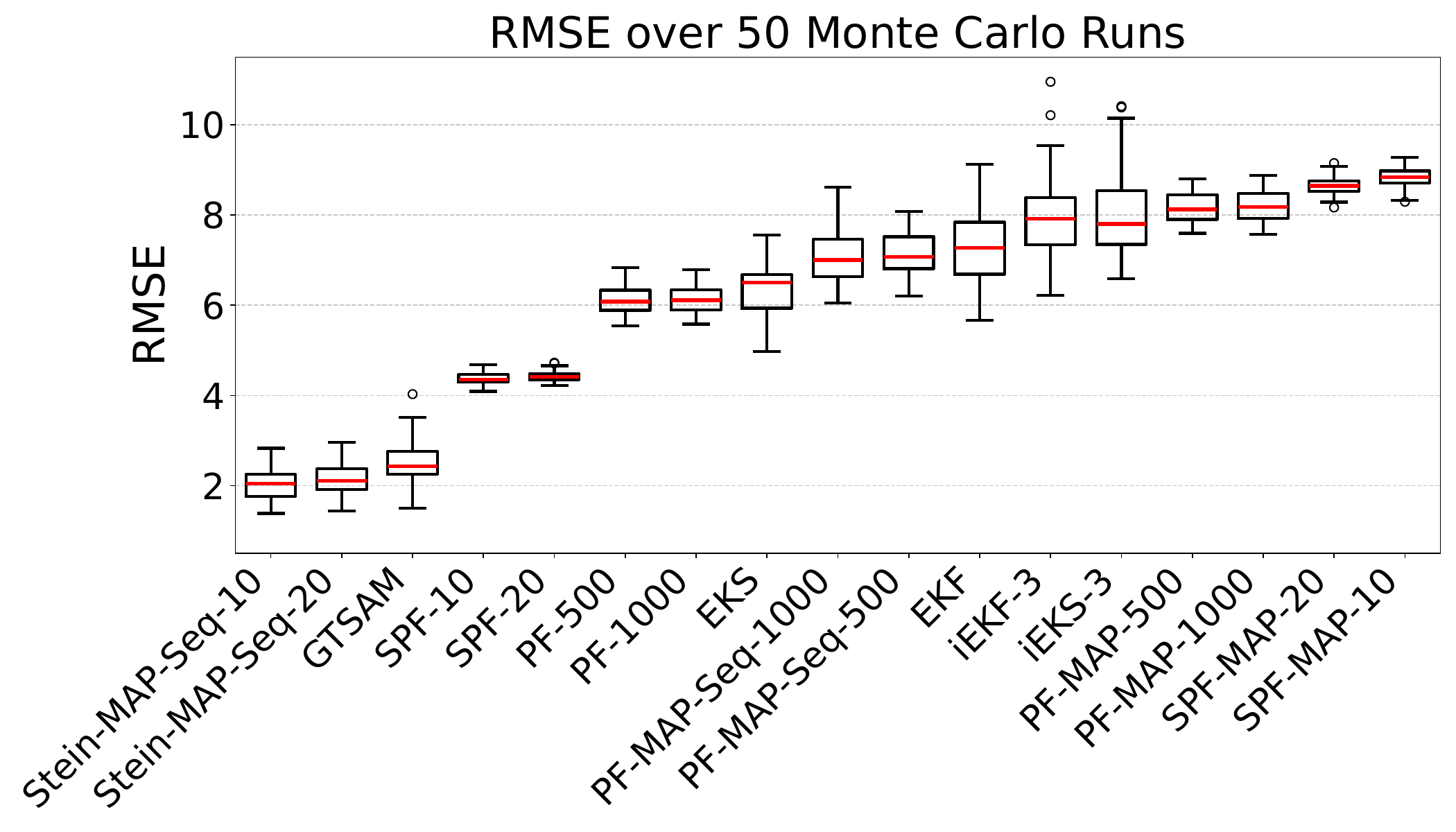}
    \caption{RMSE results over $50$ Monte Carlo simulations with $100$ time steps are reported for a bimodal posterior under high process and measurement uncertainty. Finite-state estimators are generally more robust than Gaussian-assumed estimators in multimodal settings, except for point-wise MAP estimators. Point-wise MAP estimators frequently select incorrect modes and therefore perform worse than MMSE estimators, which effectively average across modes. Among MAP sequence estimators, \textsf{Stein-MAP-Seq} and GTSAM achieve the best performance, whereas PF-based MAP methods exhibit the highest RMSE due to unstable mode selection.}
    \label{fig::Scenario1-2}
\end{figure}

The methods, such as EKF, EKS, iEKF-3, and iEKS-3, rely on Gaussian (unimodal) assumptions and local linearization, making them prone to converge to locally optimal trajectories. In this setup, large process noise spreads the prior across competing modes, while high measurement uncertainty weakens observational correction, preventing both filtering and smoothing from resolving the sign ambiguity. iEKF-3 and iEKS-3 were initialized with EKF and iEKF-3 estimates, respectively, and refined the trajectory through iterative forward–backward local linearization, where the number following the method name denotes the iteration count. However, because the initial trajectories were already biased toward incorrect modes, the subsequent local optimization in iEKF-3 and iEKS-3 reinforced these errors, resulting in worse performance than EKF and EKS, as shown in Table~\ref{Table_Ex1}.

\begin{table}[t]
\begin{center}
\caption{Robustness Evaluation for Scenario (A), Averaged RMSE}
\label{Table_Ex1}
\scriptsize 
\resizebox{0.45\textwidth}{!}{
    \begin{tabular}{c c c|c c c c} 
    \hline
    \multicolumn{3}{c|}{\textbf{Gaussian-assumed Estimation}} 
    & \multicolumn{4}{c}{\textbf{Finite-state Estimation}} \\
    \cline{1-3}\cline{4-7}
    EKF & EKS & GTSAM & $N_s$ & PF & PF-MAP & PF-MAP-Seq \\
    \hline
    &   && 5e2 & $6.1288$ & $8.1654$ & $7.0949$ \\
    $7.2289$ & $6.3422$ & $2.5564$ & 1e3 & $6.1320$ & $8.1829$ & $7.0696$ \\
    &   && 2e3 & $\mathbf{6.1281}$ & $8.1863$ & $7.1363$ \\
    \hline
    iEKF-3 & iEKS-3 & GTSAM-Stein & $N_s$ & SPF & SPF-MAP & \textsf{Stein-MAP-Seq} \\
    \hline
    & && 1e1 & $4.3719$ & $8.8307$ & $\mathbf{2.0462}$ \\
    $7.9307$ & $8.1320$ & $\mathbf{1.5509}$ & 2e1 & $4.4282$ & $8.6583$ & $2.1447$\\
    & && 4e1 & $4.4964$ & $8.8143$ & $2.2761$\\
    \hline
    \end{tabular}}
\end{center}
\normalsize
\end{table}

GTSAM performed batch optimization over the entire trajectory. However, under high process and measurement uncertainty, the resulting factor graph became weakly constrained, making the optimization highly sensitive to initialization. As a result, GTSAM frequently converged to an incorrect mode (see Fig.~\ref{fig:Scenario1}, bottom), yielding suboptimal trajectory estimates. When initialized with the trajectory estimate obtained from \textsf{Stein-MAP-Seq} (see Table~\ref{Table_Ex1}), GTSAM-Stein consistently converged to the correct mode, demonstrating the effectiveness of \textsf{Stein-MAP-Seq} as an initialization strategy for nonconvex batch MAP optimization.

For PF-MAP-Seq, even with a large number of particles ($\geq 500$) drawn via random sampling, its mode-seeking capability was markedly weaker than that of \textsf{Stein-MAP-Seq}, which operated with significantly fewer particles ($\leq 50$). This behavior arose from particle path degeneracy induced by resampling and sampling variability under a bimodal posterior, where importance sampling failed to concentrate particles around a single dominant trajectory consistently. As a result, MAP decoding became unstable and prone to mode switching, producing zig-zag trajectories (see Fig.~\ref{fig:Scenario1}, bottom). In contrast, SVGD promoted coherent particle concentration around dominant modes and enabled reliable MAP sequence recovery. For PF-MAP and SPF-MAP, the estimators converged to incorrect modes due to the absence of explicit temporal dependency modeling, as they operated on marginal state estimates rather than entire trajectories. Consequently, their estimation performance was the worst among all methods, even inferior to MMSE-based estimators such as EKF, PF, and SPF.

\begin{figure}[t]
    \centering
    \includegraphics[width=0.95\linewidth]{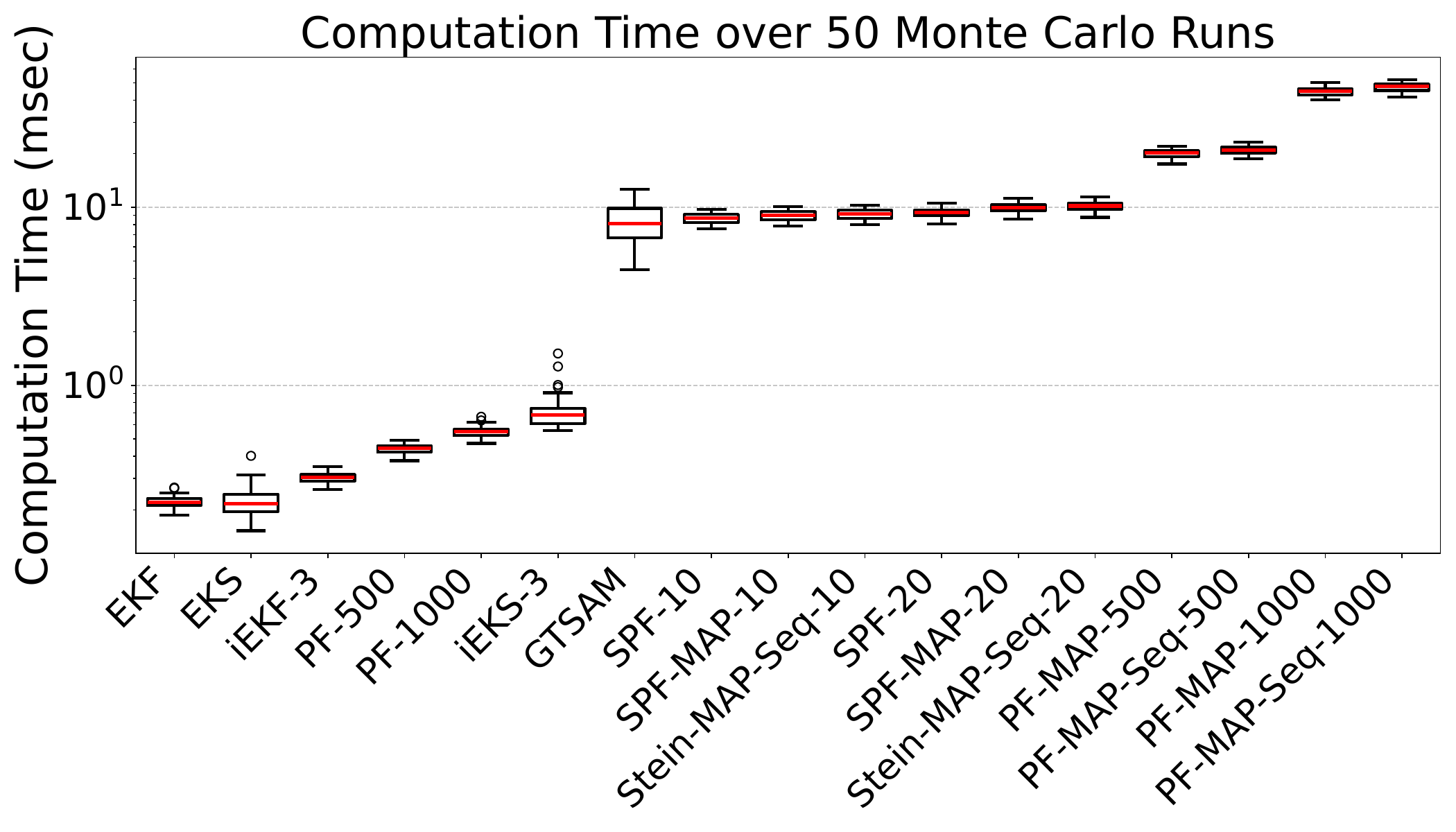}
    \caption{The averaged computation time per step is reported over $50$ Monte Carlo simulations. Filtering-based methods are more computationally efficient than MAP-Seq estimators, since MAP-Seq estimation requires trajectory-level inference. PF-MAP-Seq estimators incur the highest computational cost due to the large number of particles required, whereas \textsf{Stein-MAP-Seq} achieves competitive performance with fewer particles and computational cost comparable to GTSAM.}
    \label{fig::Scenario1-3}
\end{figure}

Among the Stein variants, \textsf{Stein-MAP-Seq} exhibited superior performance compared to SPF and SPF-MAP, despite their explicit modeling of temporal dependencies across the trajectory. Notably, \textsf{Stein-MAP-Seq} achieved higher estimation accuracy using significantly fewer particles. This advantage arises because, unlike MMSE-oriented estimators that require accurate approximation of the full posterior distribution, MAP-Seq estimation benefited from the mode-seeking behavior and the repulsive interactions in~\eqref{eq::SVGD_newupdate}, enabling reliable forward recursion and, consequently, accurate MAP trajectory recovery.

As shown in Fig.~\ref{fig::Scenario1-3}, filtering-based methods incurred the lowest computation time, whereas MAP-Seq estimators were inherently more expensive due to trajectory hypothesis management. Among finite-state estimators, the Stein series maintained a moderate computational cost with online forward recursion that scaled favorably with the number of particles, remaining comparable to GTSAM in practice while significantly outperforming PF-based MAP estimators. In contrast, PF-MAP-Seq exhibited rapidly increasing computation time as the particle count grew, reflecting the high computational burden of forward recursion over a large set of MAP candidate trajectories. Overall, these results indicated that \textsf{Stein-MAP-Seq} achieved a favorable accuracy–computation trade-off with a reduced number of particles required for reliable MAP-Seq estimation.

\subsection{Pose Estimation under Unknown Data Association} \label{sec::pos_est}

In the second example, we consider pose estimation on SE(2) under unknown data association with outliers. The discrete‐time pose is updated by
\begin{align}
    \begin{bmatrix}
        x_t \\
        y_t \\
        \theta_t
    \end{bmatrix}
    \!\!\!=\!\!\! 
    \begin{bmatrix}
    x_{t-1} + v_t\,\Delta t \,\cos\theta_t \\
    y_{t-1} + v_t\,\Delta t \,\sin\theta_t \\
    \theta_{t-1} + \omega_t\,\Delta t,
    \end{bmatrix},
\end{align}
with the pose process noise $\mathcal{N}(\mathbf{0},\mathbf{Q}_{t-1})$ where $\mathbf{Q}_{t-1}=\mathrm{diag}\bigl[(\alpha_x\,\Delta t)^2,\;(\alpha_y\,\Delta t)^2,\;(\alpha_\theta\,\Delta t)^2\bigr]$. The state vector is $\mathbf{x}_t = \bigl[x_t,\;y_t,\;\theta_t\bigr]^{\top}$ $(m,\,m,\,rad)$ with control inputs 
$\bigl[v_t,\;\omega_t\bigr]^{\top}$ $(m/s,\,rad/s)$, and $\Delta t$ denotes the sampling period. Here, $\alpha_x$ and $\alpha_y$ quantify translational noise per unit time along the body‑fixed $x$ and $y$ axes, and $\alpha_\theta$ quantifies rotational noise per unit time. In our simulation, we use $\alpha_x = \alpha_y = 0.3$, $\alpha_\theta = 0.105$, $\Delta t = 0.1$, and initialize the covariance as 
$\Sigma_0 = \mathrm{diag}(0.2^2,\,0.2^2,\,0.035^2)$.

The range and bearing measurements to landmark $l$ are given by:
\begin{subequations}
\begin{align} \label{eq::Obs_Prob1}
    z^r_{t,l} &= \bigl\lVert \Gamma_l - P_t \bigr\rVert_{2} + r^r_{t,l}, \\
    z^b_{t,l} &= \operatorname{atan2}\bigl(y_t - \Gamma_{l,y},\,x_t - \Gamma_{l,x}\bigr) + r^b_{t,l},
\end{align}
\end{subequations}
where $z^r_{t,l} \in \mathbb{R}_{\geq 0}\,(m)$, $z^b_{t,l} \in [-\pi, \pi]\,(rad)$, $\Gamma_l = [\Gamma_{l,x}, \Gamma_{l,y}]^{\top} \in \real^2$ is the $l$-th anchor position, and $P_t = [x_t,\,y_t]^\top \in \real^2$ is the robot’s position at time $t$. Here, $\|\cdot\|_2$ denotes the Euclidean norm, and the measurement noises are $r^r_{t,l}\sim \mathcal{N}(0, 1^2)$ and $r^b_{t,l}\sim \mathcal{N}(0, 0.44^2)$.

In this scenario, the robot receives observations from four landmarks with unknown data associations. For a given measurement $\mathbf{z}_t = [\,z^r_{t},\,z^b_{t}\,]^\top$, the identity of the generating landmark $l$ is unknown. To address this, we marginalize over $l$, yielding the association‑marginalized (data-association-free) likelihood below:
\begin{align}
    p(\mathbf{z}_t \mid \mathbf{x}_t)
    = \sum_{l=1}^{4}\nolimits p\bigl(\mathbf{z}_t \mid \mathbf{x}_t, l\bigr)\,p(l)\,,  
\end{align}
where $p(l)=1/4$. This formulation results in an equally weighted mixture of the four single-landmark likelihoods, producing a multimodal posterior distribution with multiple peaks, analogous to a Gaussian mixture model. 

\begin{figure*}[!t]
    \centering
    \begin{minipage}[t]{0.22\textwidth}
        \centering
        \includegraphics[width=\textwidth]{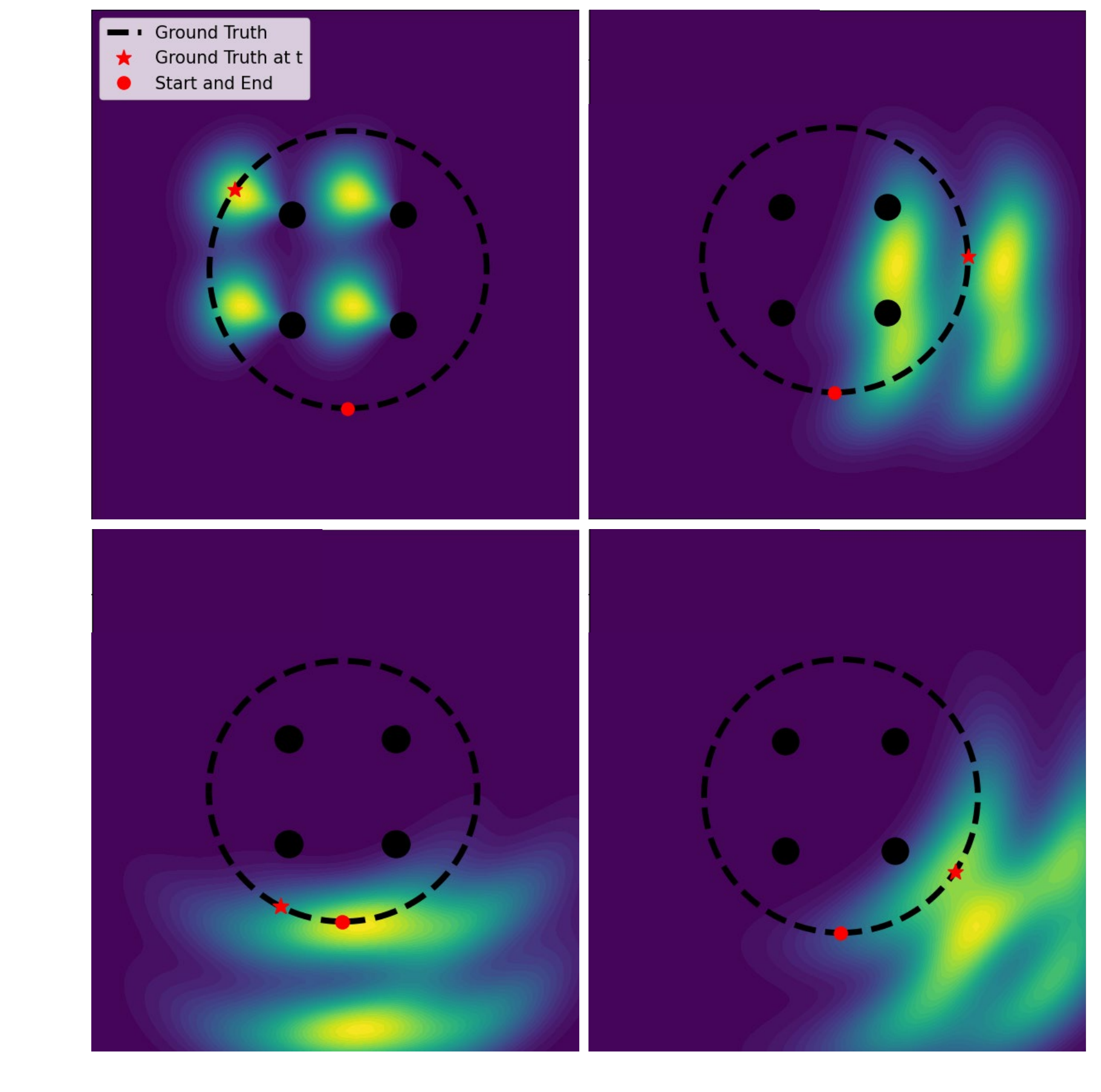}
        {{\scriptsize (a) Multimodal Likelihoods}}
    \end{minipage}
    \hskip\baselineskip
    \begin{minipage}[t]{0.22\textwidth}
        \centering
        \includegraphics[width=\textwidth]{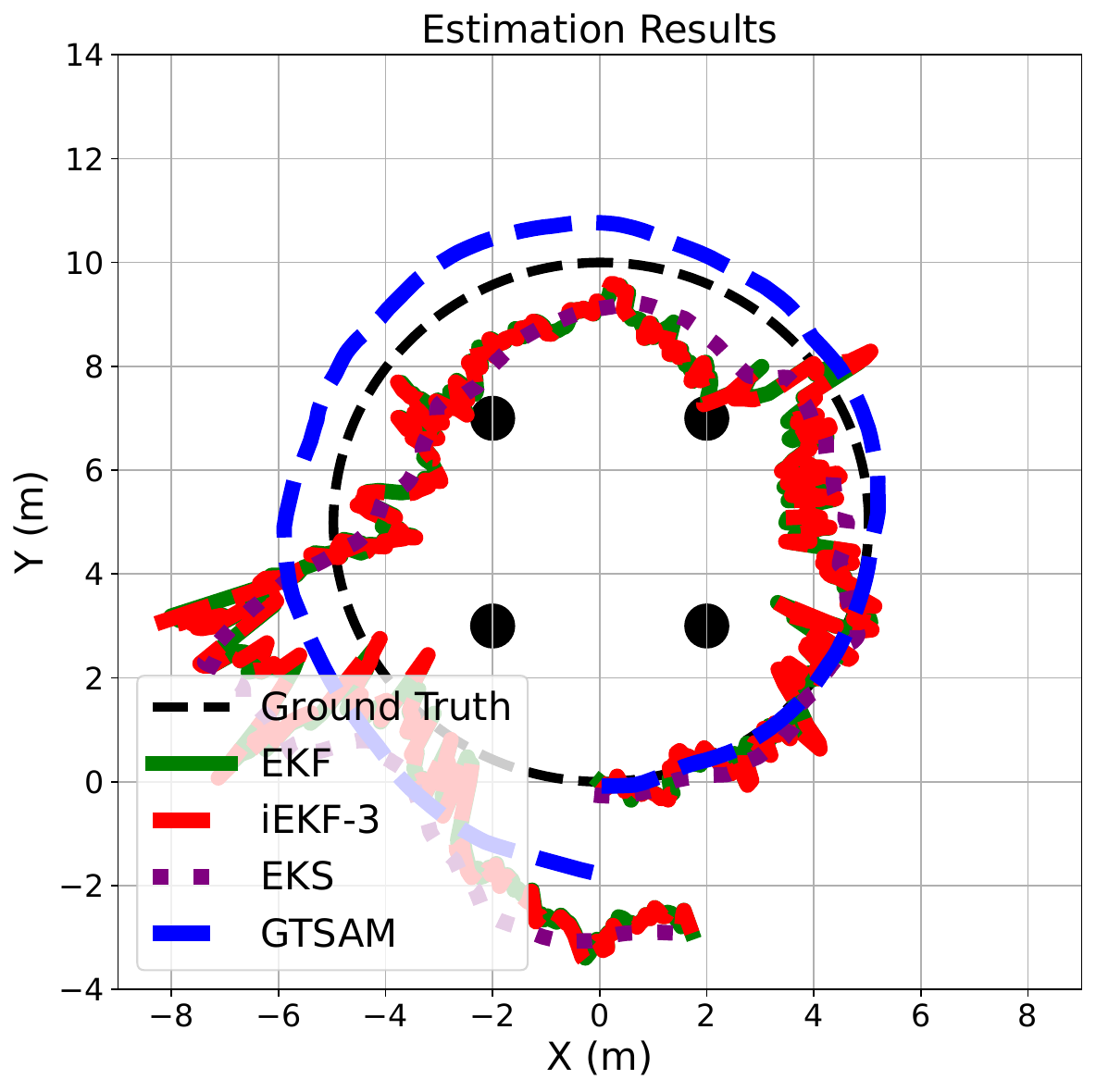}
        {{\scriptsize (b) Gaussian-assumed Estimation}}
    \end{minipage}
    \hskip\baselineskip
    \begin{minipage}[t]{0.22\textwidth}
        \centering
        \includegraphics[width=\textwidth]{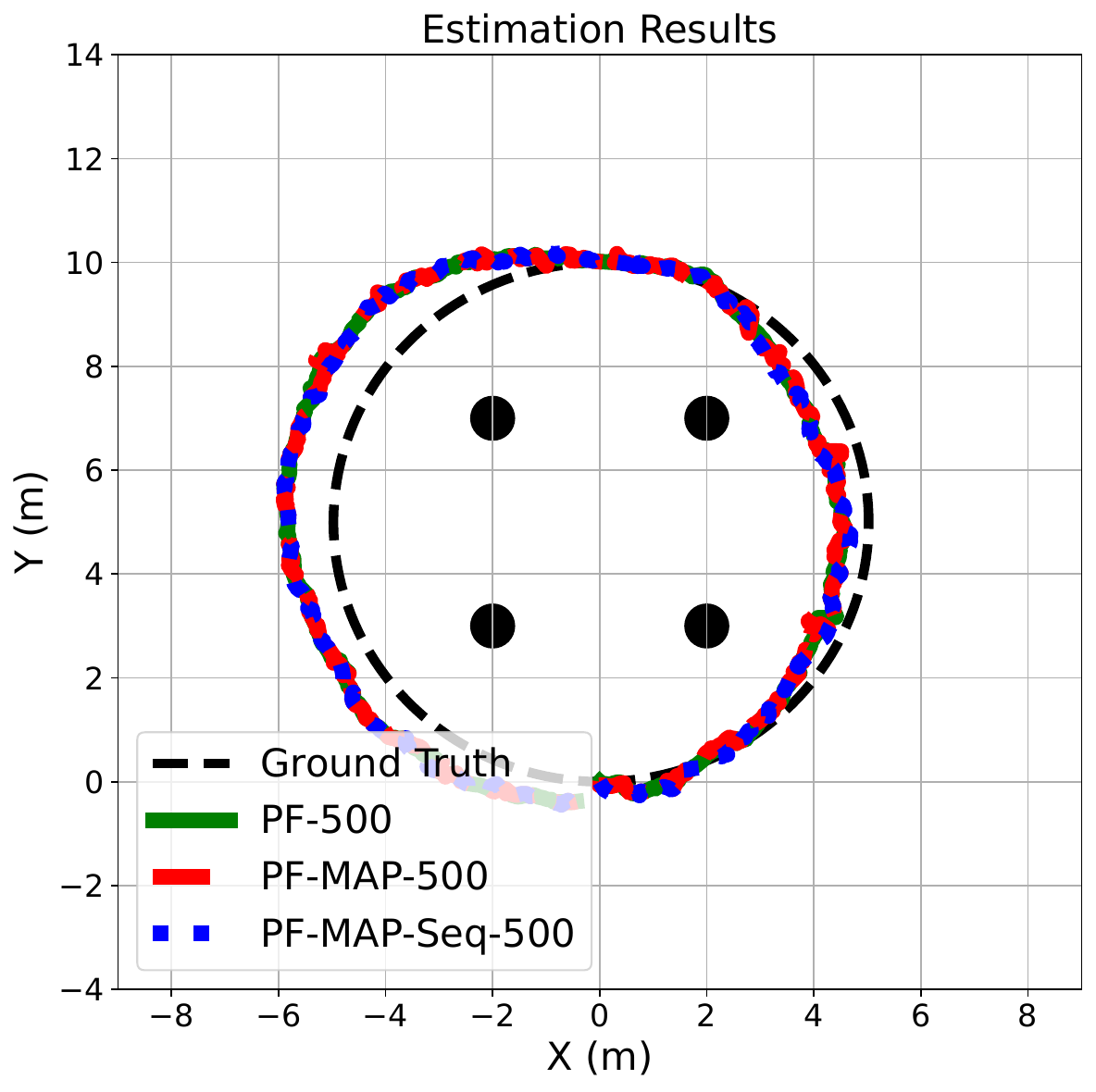}
        {{\scriptsize (c) PF-based Estimation}}
    \end{minipage}
    \hskip\baselineskip
    \begin{minipage}[t]{0.22\textwidth}
        \centering
        \includegraphics[width=\textwidth]{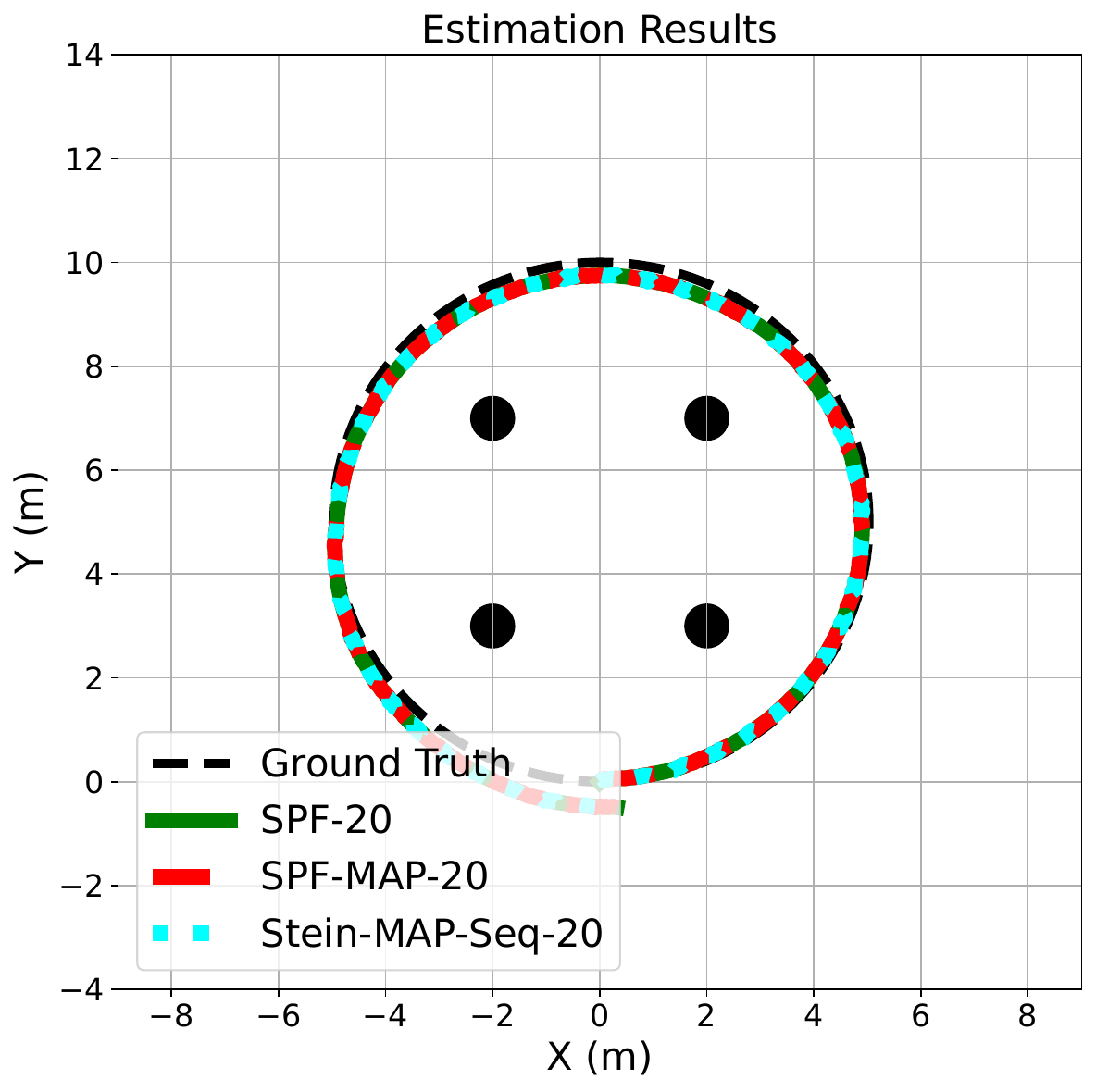}
        {{\scriptsize (d) Stein-based Estimation}}
    \end{minipage}
    \caption{{\small (a) A robot follows a counter-clockwise reference trajectory (black dashed curve) while executing noisy motion dynamics at every time step, causing the realized trajectory to gradually deviate from the noise-free reference. At each step, the robot receives a range–bearing measurement from one of four landmarks with unknown data association and heavy-tailed outliers, resulting in a highly multimodal measurement likelihood that varies across locations. (b) Trajectory estimates produced by Gaussian-assumed methods, illustrating bias and inconsistency under nonlinear motion and multimodal measurement likelihoods. (c) PF-based estimators with 500 particles improve robustness compared to Gaussian methods but still exhibit noticeable trajectory distortion due to weight degeneracy and resampling effects. (d) Stein-based estimators with 20 particles accurately recover the reference trajectory while remaining robust to multimodality and outliers, demonstrating reliable estimation with a small number of particles.}}
    \label{fig:Scenario2}
\end{figure*}

\begin{figure}[!t]
    \centering
    \includegraphics[width=0.99\linewidth]{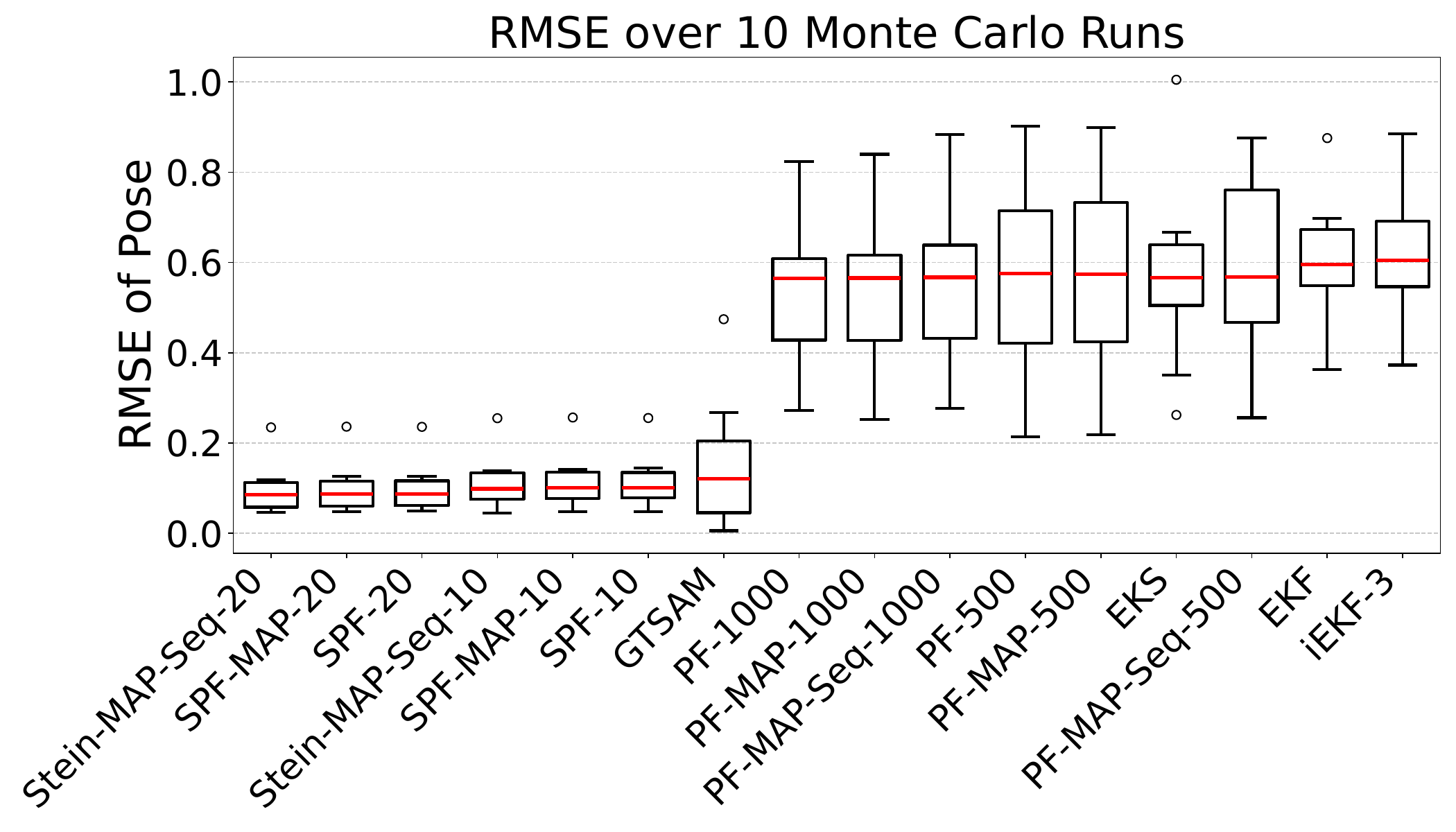}
    \caption{Results from $10$ Monte Carlo simulations over $630$ time steps are reported for pose estimation under unknown data association with heavy-tailed outliers. Stein-based methods demonstrate improved robustness and lower estimation error, whereas PF-based and Gaussian methods show increased sensitivity to ambiguous data association and heavy-tailed noise. Among Gaussian-assumed estimators, GTSAM benefits from trajectory-level smoothing, leading to improved performance relative to filtering-based approaches despite its reliance on unimodal assumptions.}
    \label{fig:Scenario2-2}
\end{figure}

To evaluate robustness under multimodal uncertainty, we consider a long-horizon localization task with nonlinear motion dynamics and constant turning. Translational and rotational process noise are injected at every time step, resulting in gradual uncertainty accumulation over time. At each step, the observed landmark is randomly selected, inducing abrupt changes in the measurement model and significantly reducing the temporal consistency of observations. In addition, measurement noise follows a heavy-tailed mixture distribution, introducing large outliers ($\times4$) with a $10\%$ probability in both range and bearing measurements. Collectively, these factors yield a likelihood function that is highly multimodal and heavy-tailed over time, posing substantial challenges for accurate and robust state estimation.

To evaluate the state estimation methods, we generated $10$ independent trajectories of $630$ time steps each. For each estimator, we computed the RMSE of the estimated states with respect to the noise-free reference trajectory and averaged the results over all Monte Carlo runs. The results in Fig.~\ref{fig:Scenario2}, Fig.~\ref{fig:Scenario2-2}, and Table~\ref{Table_Ex2} demonstrated that \textsf{Stein-MAP-Seq} consistently outperformed Gaussian-assumed methods, PF-based methods, and batch MAP estimation. This performance highlighted its strong robustness to highly multimodal and heavy-tailed posterior distributions arising from unknown data association with outliers.

\begin{table}[!t]
\begin{center}
\caption{Robustness Evaluation for Scenario (B), Averaged Pose RMSE}
\label{Table_Ex2}
\scriptsize 
\resizebox{0.48\textwidth}{!}{
    \begin{tabular}{c c c c|c c c c} 
    \hline
    \multicolumn{4}{c|}{\textbf{Gaussian-assumed Estimation}} 
    & \multicolumn{4}{c}{\textbf{Finite-state Estimation}} \\
    \cline{1-4}\cline{5-8}
    EKF & EKS & EKS-GT & GTSAM & $N_s$ & PF & PF-MAP & PF-MAP-Seq \\
    \hline
    &&&   & 5e2 & $0.5609$ & $0.5638$ & $0.5825$ \\
    $0.5903$ & $0.5442$ & $\mathbf{0.0179}$ & $0.1520$ & 1e3 & $\mathbf{0.5337}$ & $0.5370$ & $0.5512$ \\
    &&&   & 2e3 & $0.6223$ & $0.6239$ & $0.6457$ \\
    \hline
    iEKF-3 & iEKS-3 & iEKS-3-GT & GTSAM-Stein & $N_s$ & SPF & SPF-MAP & \textsf{Stein-MAP-Seq} \\
    \hline
    && & & 1e1 & $0.1146$ & $0.1142$ & $0.1121$ \\
    $0.6259$ & $0.6392$ & $0.0769$ & $0.1022$ & 2e1 & $0.0996$ & $0.0993$ & $\mathbf{0.0971}$\\
    && & & 4e1 & $0.0996$ & $0.0989$ & $0.0983$\\
    \hline
    \end{tabular}}
\end{center}
\normalsize
\end{table}

As illustrated in Fig.~\ref{fig:Scenario2}, (b) and Fig.~\ref{fig:Scenario2-2}, Gaussian-based estimators exhibited limited robustness in the presence of multi-peak posterior distributions. In particular, these methods tended to collapse onto a single mode, resulting in biased trajectory estimates. To isolate the effect of initialization in iterative smoothers, we additionally evaluated EKS-GT and iEKS-3-GT, both initialized at the noise-free ground-truth trajectory, as shown in Table~\ref{Table_Ex2}. While EKS-GT achieved accurate estimates, iEKS-3-GT produced slightly degraded results as the number of iterations increased. This behavior arose because repeated re-linearization around incorrect linearization points caused the estimate to drift toward spurious local modes.

As shown in Fig.~\ref{fig:Scenario2}, (b) and Table~\ref{Table_Ex2}, GTSAM improved trajectory smoothness by jointly optimizing the entire trajectory. In this simulation, since GTSAM does not support association-marginalized likelihoods, we adopted a nearest-landmark association strategy based on the observed measurement $\mathbf{z}_t$, rather than employing robust loss functions. Under this setting, GTSAM exhibited noticeable bias because high process uncertainty weakened the constraints imposed by the motion model, causing the estimator to rely more heavily on measurements. Consequently, when data associations were incorrect or measurements contained large outliers, these effects dominated the nonlinear least-squares objective. As a result, the optimizer often converged to a suboptimal local minimum that explained the measurements. These results indicated that the robustness of batch MAP estimation was limited in long-horizon localization problems characterized by high process uncertainty and outlier-contaminated observations.

As shown in Fig.~\ref{fig:Scenario2}, (c), PF-based estimators improved robustness compared to Gaussian-assumed methods by explicitly representing multiple hypotheses. With a sufficiently large number of particles, PF-based estimators were able to recover the trajectory and mitigate severe bias. However, despite this improvement, PF-based methods still exhibited noticeable trajectory distortion due to heavy-tailed outliers. This effect progressively reduced particle diversity over long horizons and led to premature mode collapse. Consequently, even with large particle sets, as reported in Table~\ref{Table_Ex2}, PF-based estimators struggled to maintain a coherent global trajectory, highlighting fundamental limitations in robustness.

As shown in Fig.~\ref{fig:Scenario2}, (d), Stein-based estimators consistently recovered the reference trajectory with high accuracy. Unlike PF-based methods, Stein-based approaches maintained a compact yet expressive particle set, effectively preserving multiple modes. This property enabled Stein-based estimators to avoid particle degeneracy and the associated trajectory distortions. As shown in Table~\ref{Table_Ex2}, accurate trajectory reconstruction was achieved with a substantially smaller number of particles, demonstrating both robustness and computational efficiency. These results highlighted the advantage of \textsf{Stein-MAP-Seq} in maintaining global trajectory consistency under persistent multimodality, making it particularly well-suited for long-horizon state estimation problems characterized by ambiguous measurements and heavy-tailed outliers.

\begin{figure}[t]
    \centering
    \includegraphics[width=0.99\linewidth]{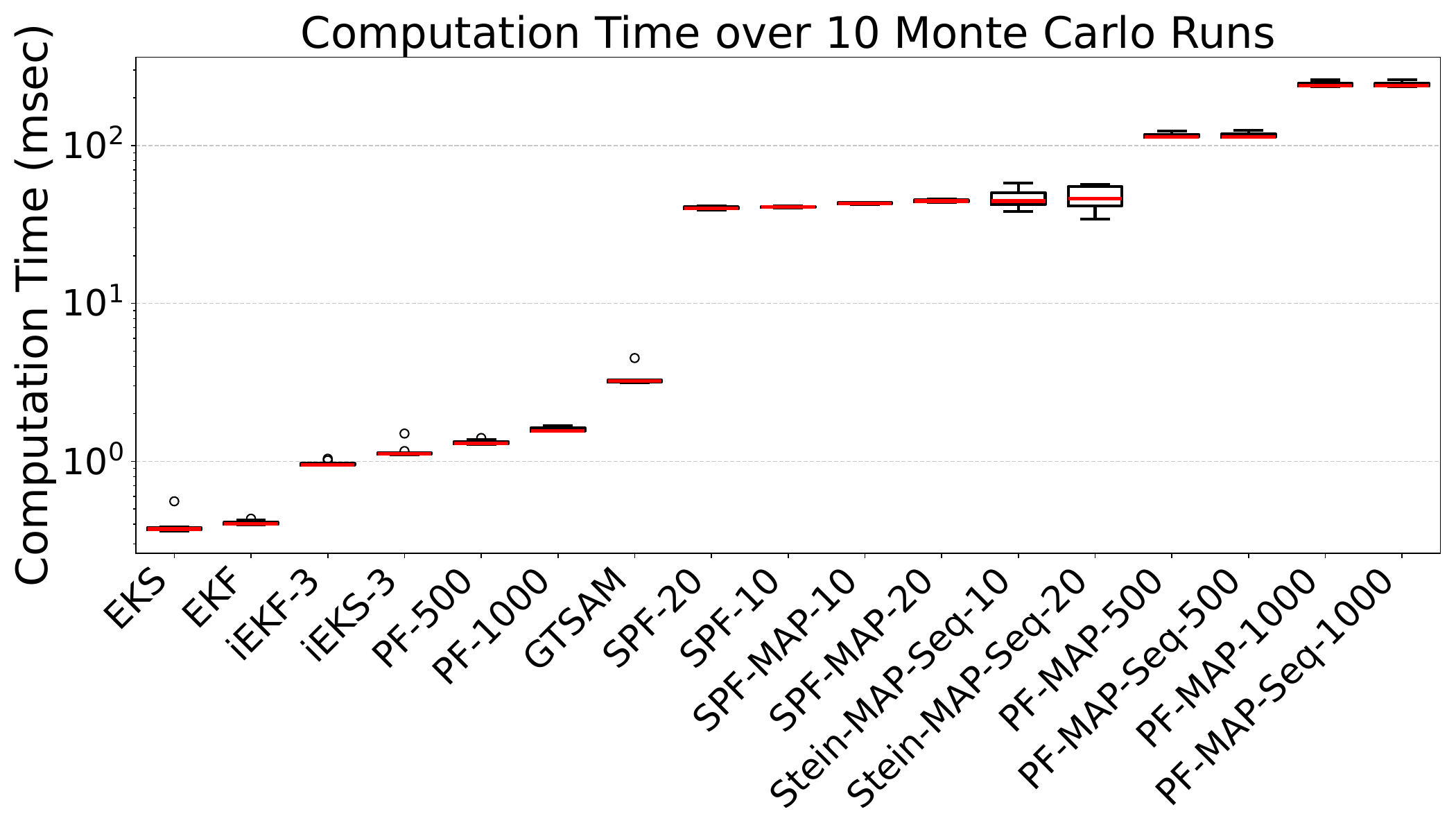}
    \caption{The averaged computation time per step is reported over $10$ Monte Carlo simulations, each consisting of $630$ time steps. Gaussian filtering and smoothing methods exhibit sub-millisecond runtimes due to their closed-form updates. GTSAM incurs a higher per-step computational cost than filtering–smoothing methods. PF-based MAP methods scale with the number of particles, resulting in rapidly increasing computation times due to forward recursion over large particle sets. Stein-based methods exhibit moderate computational costs by relying on a relatively small number of particles.}
    \label{fig:Scenario2-3}
\end{figure}

Alongside estimation accuracy, Fig.~\ref{fig:Scenario2-3} compared the per-step computational cost of the considered methods, revealing a clear accuracy–efficiency trade-off. Consistent with previous scenario, in the 3D case, Gaussian-assumed methods achieved the lowest computational cost due to closed-form updates and local temporal processing, but exhibited limited robustness under multimodal uncertainty. GTSAM incurred higher computational cost than filtering–smoothing methods, which improved robustness but remained sensitive to long-horizon multimodality. PF-based MAP estimators exhibited a rapid increase in computation time as the particle count increased and failed to maintain global trajectory coherence despite the higher cost. In contrast, \textsf{Stein-MAP-Seq} achieved strong robustness with moderate computational cost using significantly fewer particles, offering a favorable accuracy–efficiency trade-off in long-horizon, multimodal settings.

\begin{figure}[t]
    \centering
    \includegraphics[width=0.48\textwidth]{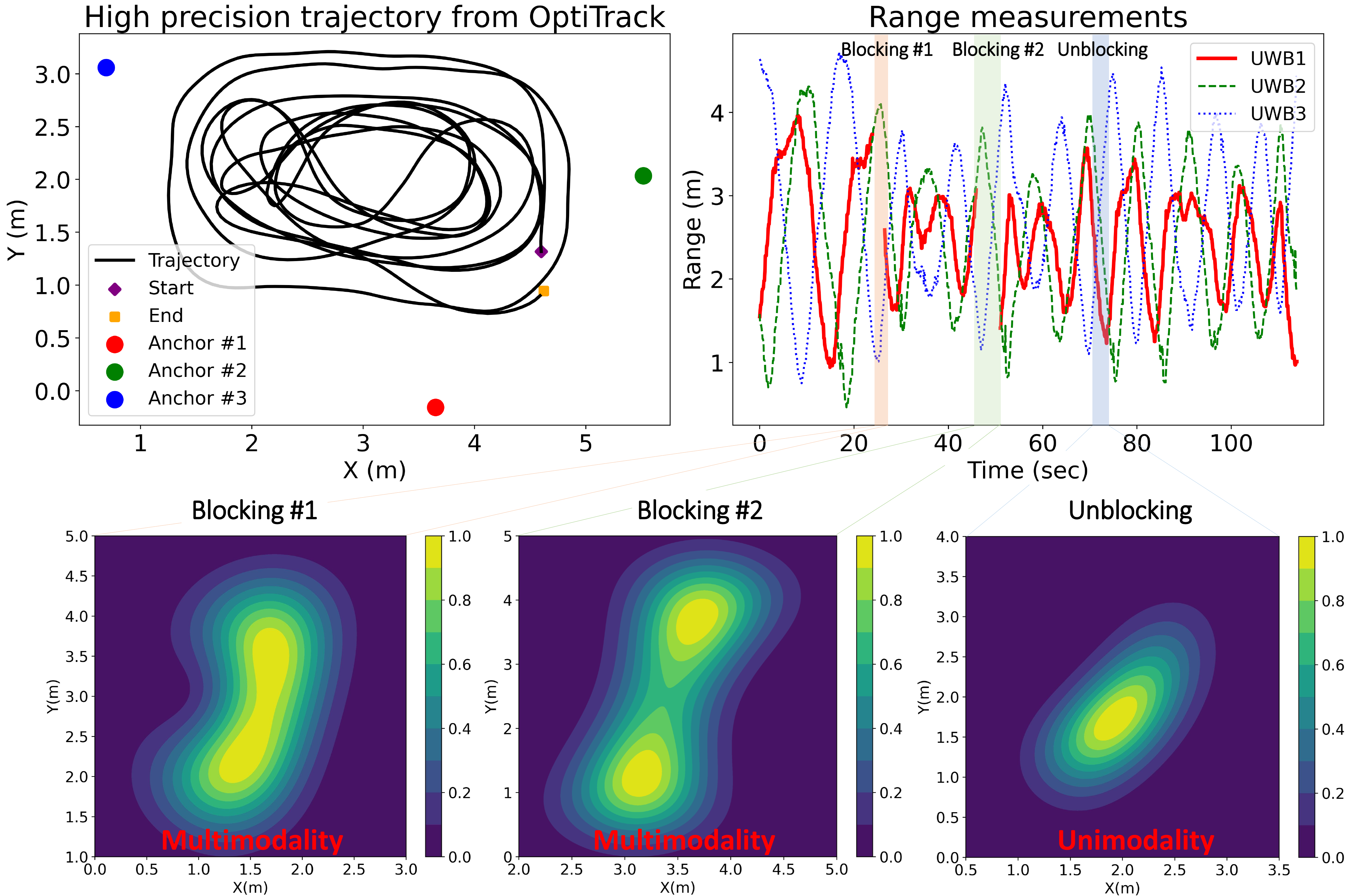}
    \caption{{\small A target traversed an arbitrary trajectory captured by the OptiTrack system at $120$Hz, while a UWB sensor provided range measurements at $10$Hz. During the $114.5\,s$ experiment, one tag and three anchors remained fully connected except during temporary blocking periods (i.e., intervals of unobservability). Anchor \#1 was blocked multiple times, leaving connections only to anchors \#2 and \#3, which induced multimodality in the measurement likelihood. The contour plots of the likelihood function during Blocking \#1 and Blocking \#2 show multimodal distributions, whereas the Unblocking period exhibits a unimodal distribution.}}
    \label{fig:Exp_scenario}
\end{figure}

\subsection{Range-only Localization under Temporary Unobservability}

In the third example, we consider a range-only localization task~\cite{olson2006robust,chen2022cooperative}. In our experiment, a target equipped with a DWM1000 Ultra-wideband (UWB) transceiver (tag) traversed along an \emph{arbitrary trajectory} in an $8\,$m $\times~8\,$m indoor environment with 3 UWB anchors. Here, UWB provides range measurements at 10 Hz and can also communicate their IDs, eliminating the need for data association considerations. There were no obstacles between the anchors and the tag to mitigate the biased range measurements. For a precise reference trajectory comparison, we employed the OptiTrack optical motion capture system with 12 infrared cameras to achieve high precision localization with an accuracy of $10^{-4}\;$m, and a sampling rate of $120$ Hz.

\begin{figure}[!t]
    \centering
    \includegraphics[width=0.42\textwidth]{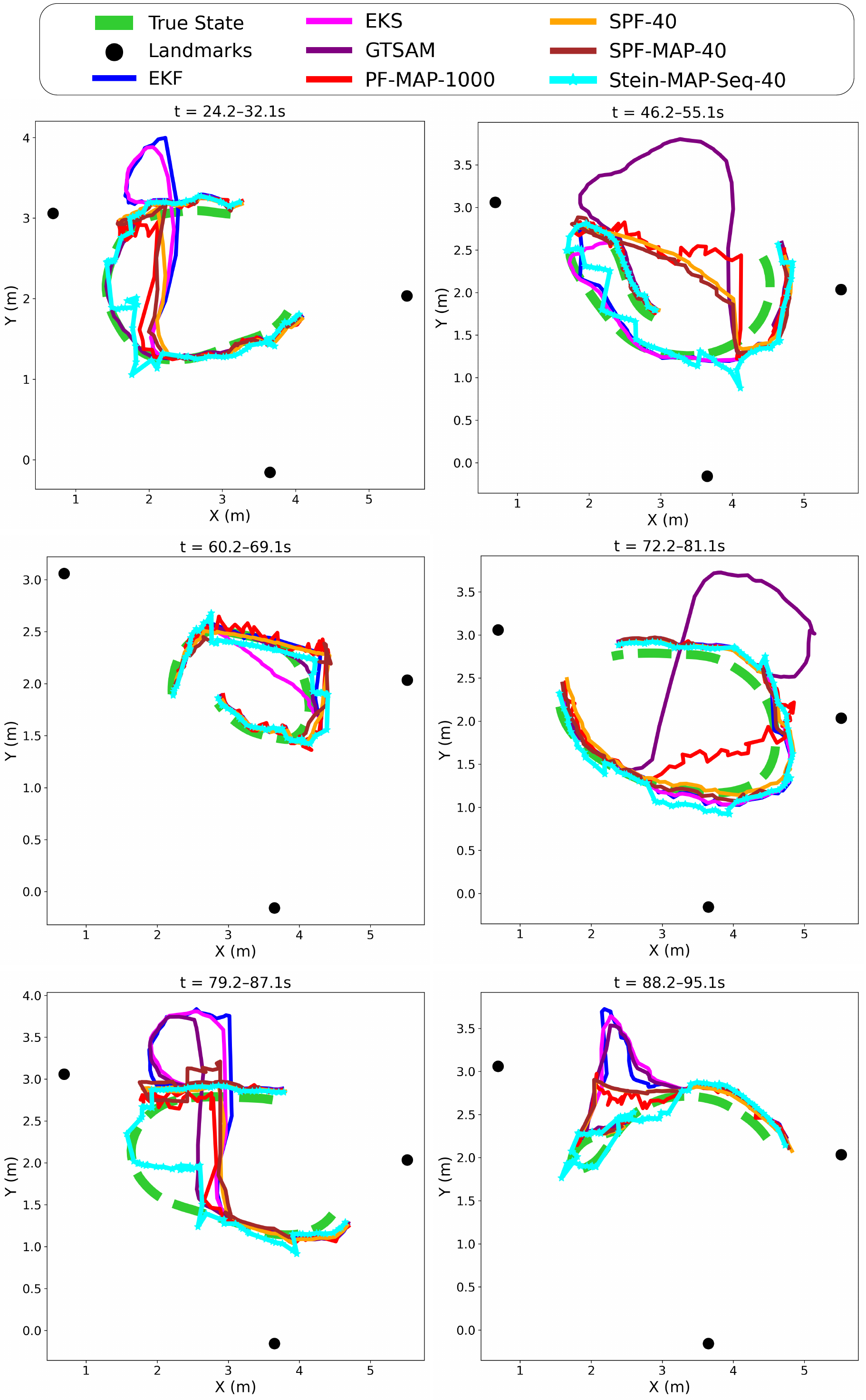}
    \caption{{\small Estimation results at $10$ Hz across $6$ blocking periods. EKF, PF, SPF, and GTSAM exhibit trajectory fragmentation at certain blocking intervals due to likelihood-driven updates under symmetric multimodal ambiguity. In contrast, \textsf{Stein-MAP-Seq} maintains stable MAP sequence estimates across all blocking periods.}}
    \label{fig:Scenario3}
\end{figure}

We localize a target that is \emph{not} equipped with any proprioceptive sensor, such as an IMU or wheel encoders, making the localization problem particularly challenging due to the absence of motion priors. Consequently, we adopt a zero-velocity motion model, given by  
\begin{align}
\mathbf{x}_t = \mathbf{x}_{t-1} + \mathbf{v}_{t-1},
\end{align}
where the process noise is modeled as $\mathbf{v}_{t-1} \sim \mathcal{N}(\mathbf{0}, \mathbf{Q}_{t-1})$, with $\mathbf{Q}_{t-1} = \mathrm{diag}\bigl[(\alpha_x\,\Delta t)^2,\;(\alpha_y\,\Delta t)^2\bigr]$. The state vector $\mathbf{x}_t = \bigl[x_t,\;y_t\bigr]^{\top} \in \real^2\,(m)$ represents the 2D position at time $t$. We use line-of-sight (LoS) range measurements of the form:
\begin{align}
    z_{t,l} = \bigl\lVert \Gamma_l - \mathbf{x}_t \bigr\rVert_{2} + r_{t,l},
\end{align}
where $z_{t,l} \in \mathbb{R}_{\geq 0}\,(m)$ is the LoS range to the $l$-th anchor at time $t$, $\Gamma_l = [\Gamma_{l,x}, \Gamma_{l,y}]^{\top} \in \real^2$ denotes the anchor position, $\|\cdot\|_2$ denotes the Euclidean norm, and the measurement noise is $r_{t,l}\sim \mathcal{N}(0, \sigma_z^2)$. The likelihood function over all range measurements is expressed as:
\begin{align}
    p(\mathbf{z}_t|\mathbf{x}_t) = \prod_{\forall l}\nolimits p(z_{t,l}|\mathbf{x}_t),
\end{align}
In this experiment, we set $\alpha_x=\alpha_y=1.0$, $\Delta t=0.1$, and $\sigma_{z} = 0.5$ to model measurement noise and compensate for minor range bias.

In this scenario, some anchors may be blocked by dynamic or static obstacles, leading to a multimodal distribution due to the underdetermined nature of distance measurements. In the experiment, we dropped measurements from anchor \#1 at specific time instances to simulate real-world scenarios for robustness analysis against a \emph{symmetric} multimodality, see Fig~\ref{fig:Exp_scenario}. 

We evaluated the accuracy of our positioning estimates against high-precision position data using the RMSE metric. Fig.~\ref{fig:Scenario3} shows the estimated positions at $10$ Hz produced by each method during each blocking period. The blocking durations were $2.5$s, $5.0$s, $4.0$s, $4.0$s, $3.0$s, and $3.5$s ($6$ periods in total). The high-precision trajectories, sampled at $120$ Hz, are shown as green dashed lines.

This scenario evaluated range-only localization under temporary unobservability, where measurements were intermittently unavailable, resulting in periods of weak observability and unknown motion. During these intervals, similar to previous scenarios, Gaussian-assumed methods quickly diverged due to linearization errors and their inability to represent multiple hypotheses.

In this scenario, GTSAM exhibited worse performance than filtering–smoothing methods. This behavior arose from the absence of effective motion constraints, combined with a lack of informative measurement factors during periods of temporary unobservability. As a result, GTSAM often converged to globally incorrect trajectories and failed to recover even after observability was restored.

PF-based methods exhibited different failure modes. PF methods relied on prior-based proposals with likelihood-weighted importance sampling, as shown in Fig.~\ref{fig:Scenario3-2}, which caused particles to prematurely collapse onto locally dominant but globally incorrect modes during unobservable intervals. Once particles were depleted, recovery remained difficult even after measurements became available again. Consequently, PF-MAP-Seq produced worse estimation results than both PF and PF-MAP due to frequent mode switching.

In contrast, \textsf{Stein-MAP-Seq} maintained multiple MAP-consistent hypotheses during unobservable periods. By conditioning on the previous MAP estimates, the proposed method remained robust to temporary loss of observability and consistently recovered the correct trajectory once measurements were reintroduced. Table~\ref{Table_Ex3} and Fig.~\ref{fig:Scenario3} demonstrated that \textsf{Stein-MAP-Seq} consistently maintained robustness against symmetric multimodal posteriors compared with the other methods.

Regarding computational cost, as shown in Table~\ref{Table_Ex3_2}, GTSAM incurred increased computation time due to a larger number of optimization iterations caused by the absence of informative motion priors and observations during the blocking period. In contrast, \textsf{Stein-MAP-Seq} achieved a moderate and predictable computational cost, consistent with its performance in the previous scenario, and outperformed other MAP-Seq estimators in terms of efficiency.

\begin{table}[t]
\begin{center}
\caption{Robustness Evaluation for Scenario (C), RMSE (m)}
\label{Table_Ex3}
\scriptsize 
\resizebox{0.48\textwidth}{!}{
    \begin{tabular}{c c c c|c c c c} 
    \hline
    \multicolumn{4}{c|}{\textbf{Gaussian-assumed Estimation}} 
    & \multicolumn{4}{c}{\textbf{Finite-state Estimation}} \\
    \cline{1-4}\cline{5-8}
    EKF & EKS & EKS-GT & GTSAM & $N_s$ & PF & PF-MAP & PF-MAP-Seq \\
    \hline
    &&&   & 1e3 & $0.2014$ & $0.1991$ & $0.4070$ \\
    $0.1951$ & $0.1955$ & $\mathbf{0.0886}$ & $0.2800$ & 2e3 & $0.2006$ & $\mathbf{0.1981}$ & $0.4217$ \\
    &&&   & 4e3 & $0.2019$ & $0.2003$ & $0.4244$ \\
    \hline
    iEKF-2 & iEKS-3 & iEKS-3-GT & GTSAM-Stein & $N_s$ & SPF & SPF-MAP & Stein-MAP-Seq \\
    \hline
    && & & 2e1 & $0.1921$ & $0.1969$ & $0.1722$ \\
    $0.1981$ & $0.1921$ & $0.1243$ & $0.1228$ & 3e1 & $0.1925$ & $0.1902$ & $0.1442$\\
    && & & 4e1 & $0.1949$ & $0.1858$ & $\mathbf{0.1369}$\\
    \hline
    \end{tabular}}
\end{center}
\normalsize
\end{table}

\begin{table}[t]
\begin{center}
\caption{Computation Time Evaluation for Scenario (C), (msec)}
\label{Table_Ex3_2}
\scriptsize 
\resizebox{0.48\textwidth}{!}{
    \begin{tabular}{c c c|c c c c} 
    \hline
    \multicolumn{3}{c|}{\textbf{Gaussian-assumed Estimation}} 
    & \multicolumn{4}{c}{\textbf{Finite-state Estimation}} \\
    \cline{1-3}\cline{4-7}
    EKF & EKS  & GTSAM & $N_s$ & PF & PF-MAP & PF-MAP-Seq \\
    \hline
    &&   & 1e3 & $2.463$ & $53.23$ & $58.94$ \\
    $0.1458$ & $0.1531$ & $36.69$  & 2e3 & $2.689$ & $148.12$ & $162.01$ \\
    &&   & 4e3 & $3.234$ & $460.36$ & $505.32$ \\
    \hline
    iEKF-2 & iEKS-3 & GTSAM-Stein & $N_s$ & SPF & SPF-MAP & Stein-MAP-Seq \\
    \hline
    &&  & 2e1 & $9.116$ & $9.736$ & $10.82$ \\
    $0.1905$ & $0.4505$ & $13.27$  & 3e1 & $11.60$ & $12.56$ & $13.60$\\
    &&  & 4e1 & $14.04$ & $15.28$ & $16.44$\\
    \hline
    \end{tabular}}
\end{center}
\normalsize
\end{table}

\begin{figure}[t]
    \centering
    \begin{minipage}[t]{0.475\textwidth}
        \centering
        \includegraphics[width=\textwidth]{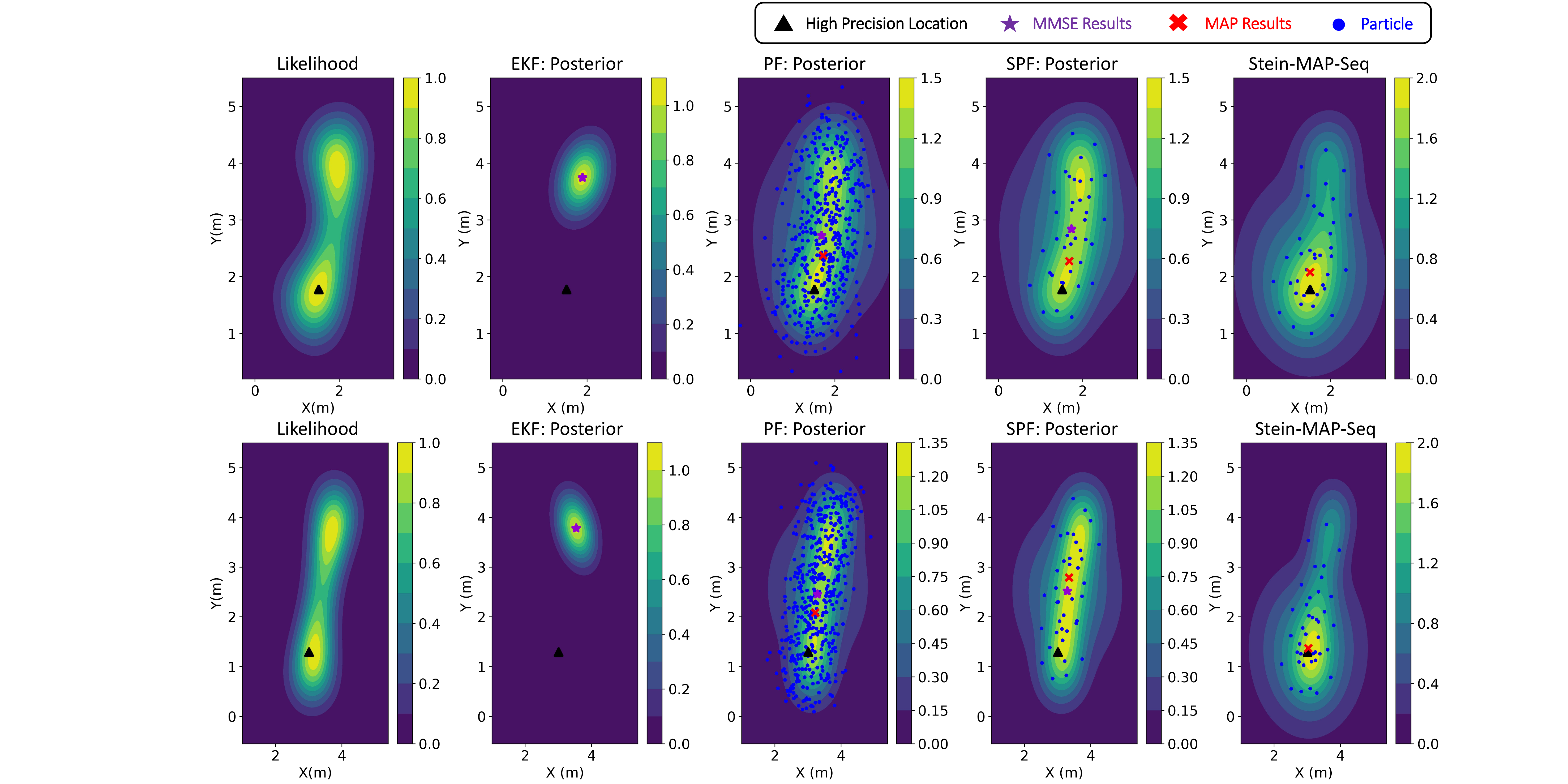}
        {{\small (a) Contour Plots @ $t=29.0s$ during Blocking \#1 period}}
    \end{minipage}
    \vskip\baselineskip
    \begin{minipage}[t]{0.475\textwidth}
        \centering
        \includegraphics[width=\textwidth]{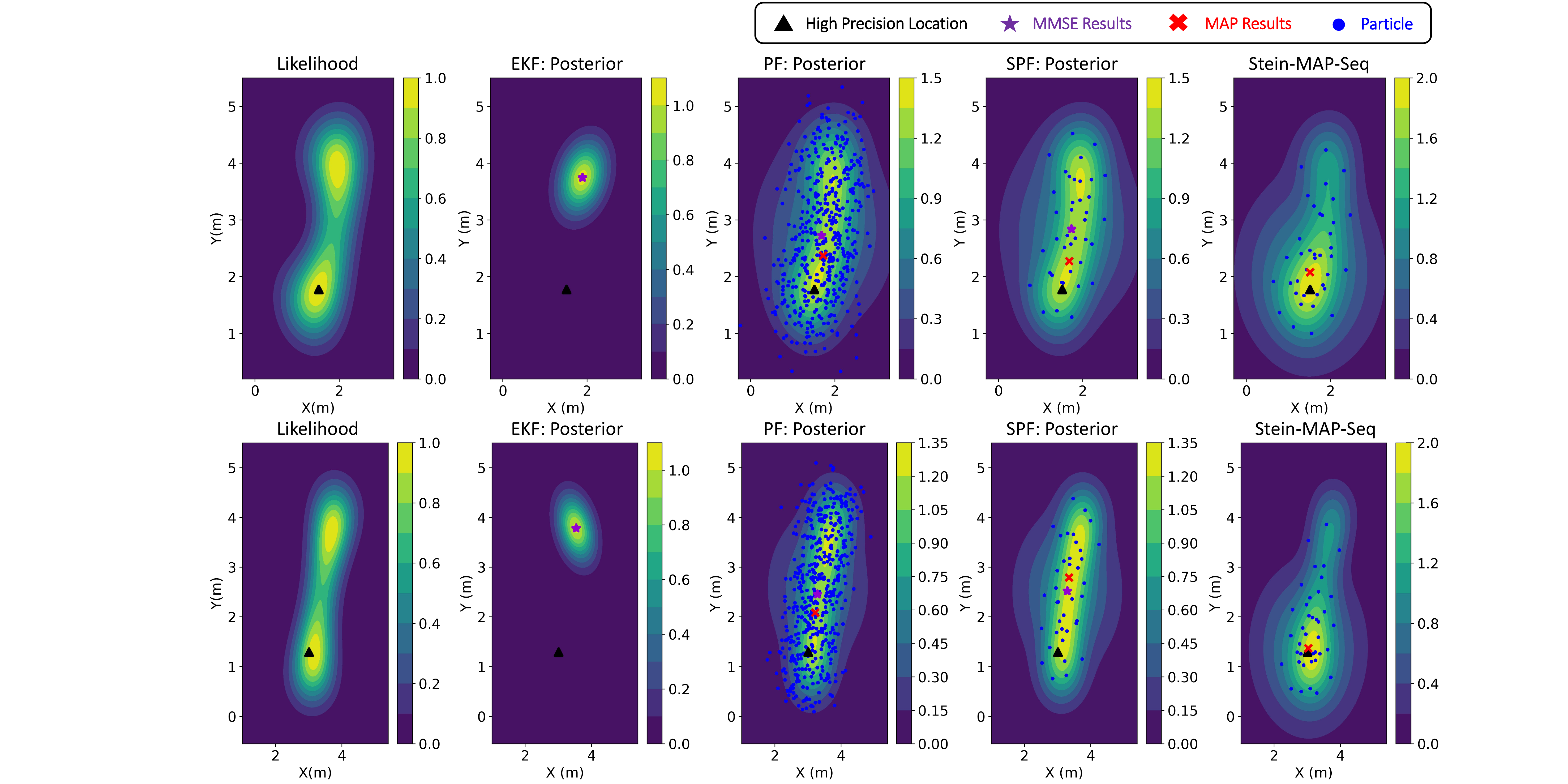}
        {{\small (b) Contour Plots @ $t=52.9s$ during Blocking \#2 period}}
    \end{minipage}
    \caption{{\small Posterior behaviors of EKF, PF, SPF and \textsf{Stein-MAP-Seq} during Blocking \#1 and \#2 periods. EKF collapses the posterior to a single incorrect mode due to local linearization, while PF relies on likelihood-based importance sampling, leading to premature mode commitment. SPF improves multimodal posterior representation but cannot handle symmetric uncertainty. In \textsf{Stein-MAP-Seq}, the displayed particles do \emph{not} represent posterior samples; instead, they visualize score-informed particle locations arising from the forward recursion toward a posterior defined conditionally on the previous MAP estimates $\mathbf{x}_{1:t-1}^{\star}$, thereby enabling accurate MAP sequence estimation.}}
    \label{fig:Scenario3-2}
\end{figure}

\subsection{High-Dimensional Manipulator with Kinematic Redundancy}

In the last example, we consider a \emph{high-dimensional nonlinear system} using a $7$-DoF robotic manipulator. The system state consists of joint positions and velocities, yielding a $14$-dimensional state vector, given by
\begin{align}
    \mathbf{x}_t =
    \begin{bmatrix}
        \mathbf{q}_t^\top &
        \dot{\mathbf{q}}_t^\top
    \end{bmatrix}^\top \in \mathbb{R}^{14},
\end{align}
where $\mathbf{q}_t, \dot{\mathbf{q}}_t \in \mathbb{R}^7$ denote the joint angles and joint angle velocities, respectively.

The manipulator dynamics are modeled using a discrete-time second-order system with sampling period $\Delta t = 0.02$,
\begin{subequations}
\begin{align}
    \mathbf{q}_t &= \mathbf{q}_{t-1} + \dot{\mathbf{q}}_{t-1} \Delta t, \\
    \dot{\mathbf{q}}_t &= \dot{\mathbf{q}}_{t-1} + \mathbf{u}_{t-1} \Delta t + \mathbf{v}_{t-1},
\end{align}
\end{subequations}
where $\mathbf{u}_{t-1} \in \mathbb{R}^7$ is the joint acceleration input and $\mathbf{v}_{t-1} \sim \mathcal{N}(\mathbf{0},\mathbf{Q}_{t-1})$ denotes Gaussian process noise. The covariance is chosen block-diagonal as $\mathbf{Q}_{t-1}=\mathrm{diag}(q_p\Delta t^2\mathbf{I}_7,~q_v\mathbf{I}_7)$ with $q_p=1.0$ and $q_v=5\times10^{-3}$. The control input is a smooth joint-wise sinusoid, $u_t^{(i)} = 0.5 \sin(0.2t + \phi_i),~~i=1,\dots,7$ where $\phi_i=i-1$ introduces fixed phase offsets across joints, resulting in rich excitation and strong nonlinear coupling in the forward kinematics.

At each time step, the system receives an exteroceptive observation of the end-effector position,
\begin{align}
    \mathbf{z}_t = \mathbf{h}(\mathbf{q}_t) + \mathbf{r}_t,
\end{align}
where $\mathbf{h}(\cdot)$ denotes the forward kinematics mapping and $\mathbf{r}_t \sim \mathcal{N}(\mathbf{0},\mathbf{R})$ with $\mathbf{R}=(0.1)^2\mathbf{I}_3$. The forward kinematics are defined by a Denavit–Hartenberg chain, yielding a nonlinear and kinematically redundant mapping from joint space to task space, which naturally induces multimodal posteriors in joint space as shown in Fig.~\ref{fig:Exp_scenario4}.

The initial state is distributed as $\mathbf{x}_0 \sim \mathcal{N}(\boldsymbol{\mu}_0,\boldsymbol{\Sigma}_0)$, where $\boldsymbol{\mu}_0$ initializes joint angles uniformly in $[-0.5,0.5]$ (rad) with zero velocities.
The initial covariance is block-diagonal, $\boldsymbol{\Sigma}_0=\mathrm{diag}(\sigma_q^2\mathbf{I}_7,~\sigma_{\dot{q}}^2\mathbf{I}_7)$, with $\sigma_q=1^\circ$ and $\sigma_{\dot{q}}=2^\circ$.

\begin{figure}[t]
    \centering
    \includegraphics[width=0.44\textwidth]{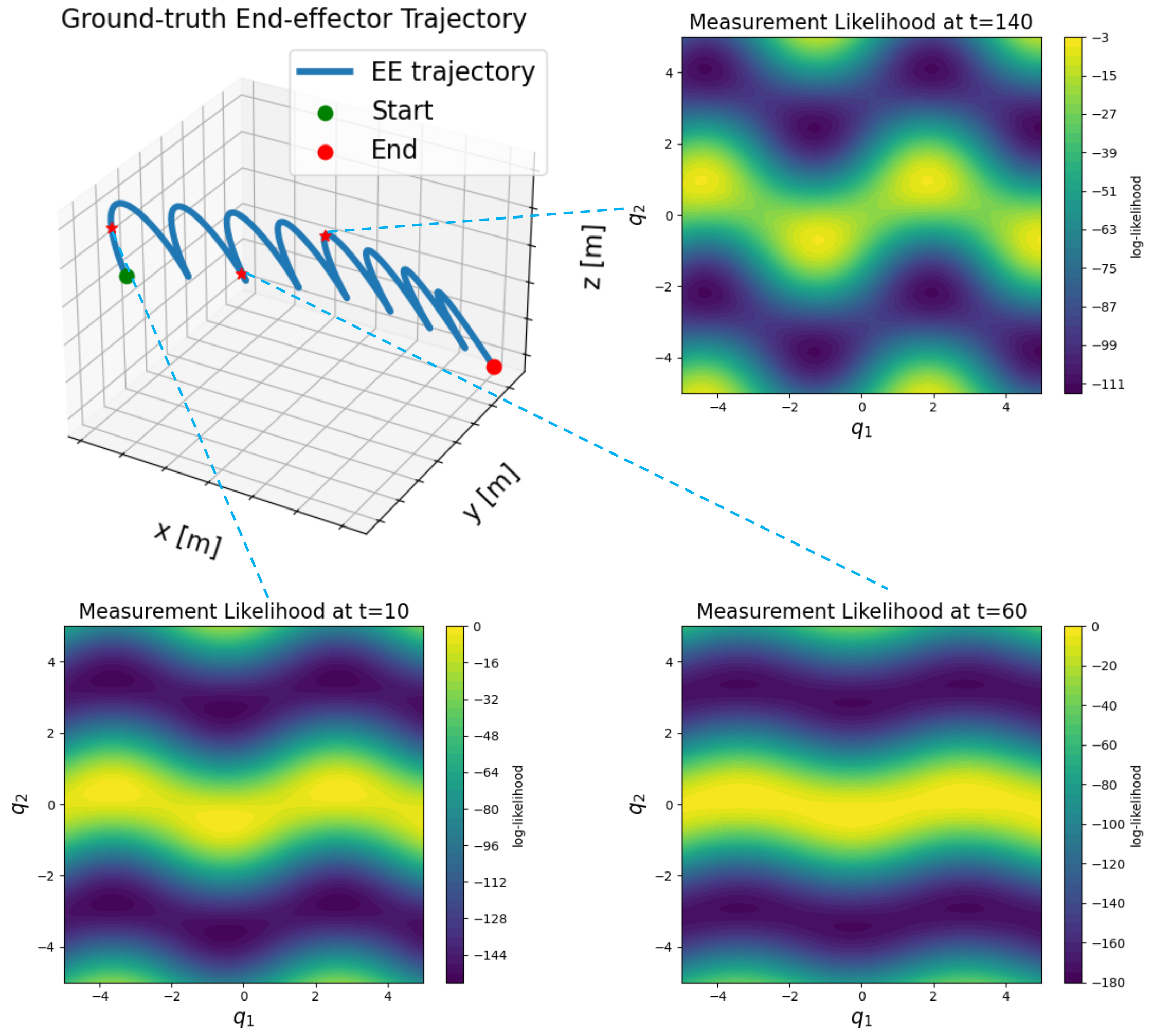}
    \caption{{\small Ground-truth end-effector trajectory in task space $\mathbb{R}^3$ (top left), together with measurement log-likelihood surfaces over the $(q_1,q_2)$ subspace at selected time steps (bottom and top right). Despite smooth motion in task space, the likelihood landscapes in joint space exhibit broad ridge-like structures and multiple local maxima, reflecting strong kinematic redundancy and multimodality.}}
    \label{fig:Exp_scenario4}
\end{figure}

\begin{figure}[!t]
    \centering
    \includegraphics[width=0.99\linewidth]{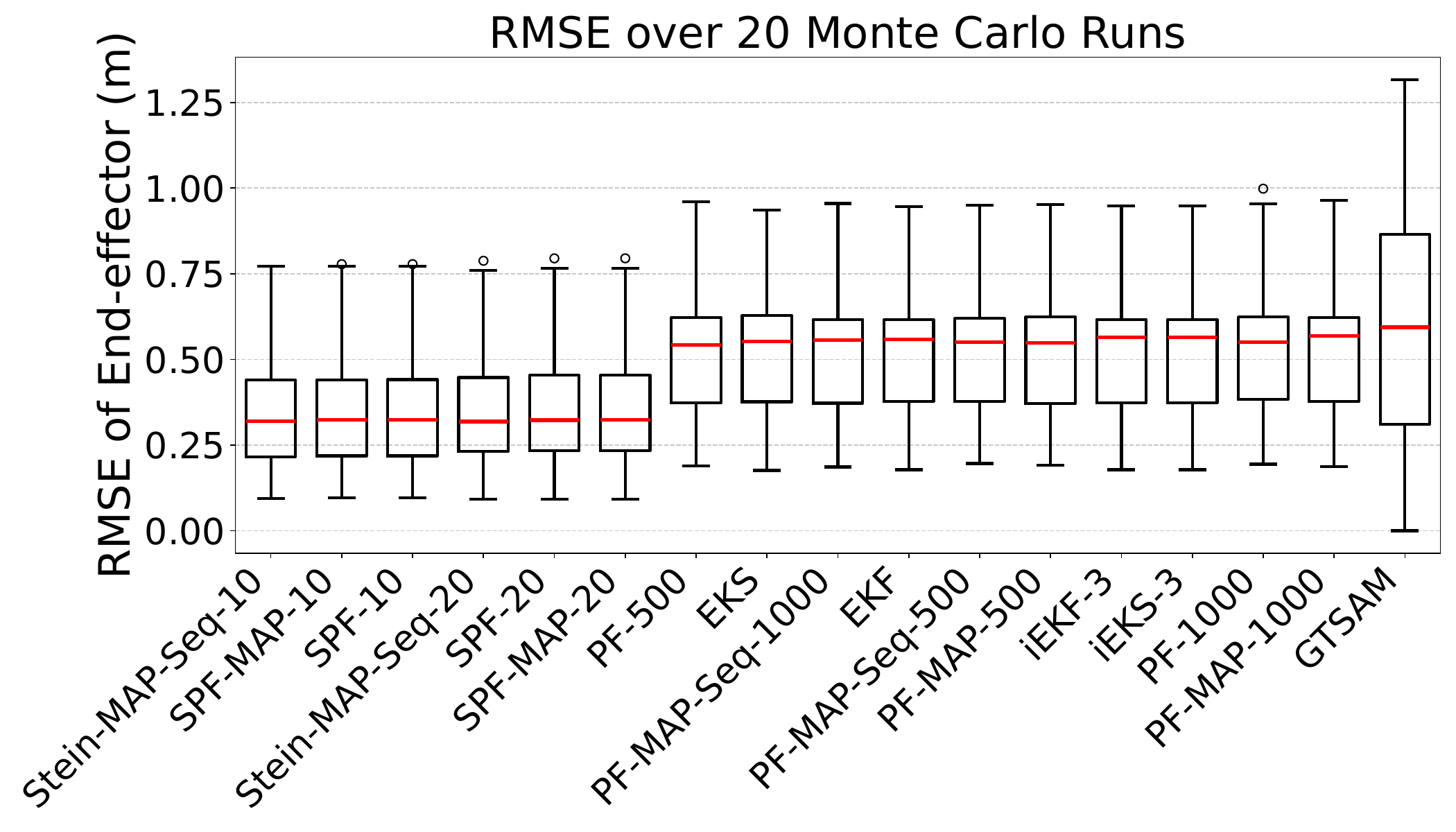}
    \caption{End-effector position RMSE over 20 Monte Carlo runs for all compared methods. All methods exhibit noticeable run-to-run variability due to kinematic redundancy and long-horizon error accumulation. Despite this variability, the Stein-based estimators achieve lower RMSE than the other methods.}
    \label{fig:Scenario4-2}
\end{figure}

Performance is evaluated using joint-space and end-effector RMSE, averaged over $20$ Monte Carlo runs. In this scenario, the $7$-DoF manipulator induces a highly multimodal posterior in joint space due to kinematic redundancy and end-effector–only observations. As illustrated in Fig.~\ref{fig:Exp_scenario4}, smooth and well-behaved motion in task space gives rise to broad, ridge-like likelihood structures with multiple local maxima in joint space, making state estimation particularly challenging.

Quantitative results in Fig.~\ref{fig:Scenario4-2} and Table~\ref{Table_Ex4} showed that \textsf{Stein-MAP-Seq} consistently achieved the lowest end-effector RMSE among all compared methods. All methods exhibited large variance because multiple joint configurations were consistent with the same end-effector measurement, resulting in strong multimodality. Unlike previous scenarios, GTSAM showed inferior performance under kinematic redundancy.

PF-based estimators required a substantially larger number of particles to maintain trajectory coherence in this $14$-dimensional state space. As shown in Fig.~\ref{fig:Scenario4-2}, even with an increased particle count, PF-based MAP estimators suffered from degraded accuracy, reflecting particle path degeneracy and unstable mode selection. In contrast, \textsf{Stein-MAP-Seq} achieved superior performance using as few as $10$–$30$ particles, highlighting the advantage of mode-seeking behavior combined with forward recursion in maintaining mode-coherent support for MAP sequence estimation, even in high-dimensional scenarios.

Joint-space RMSE results in Table~\ref{Table_Ex4_2} further revealed an important distinction between task-space and configuration-space accuracy. Even when comparable end-effector RMSE was achieved, most estimators exhibited significantly larger joint-space errors, indicating inconsistency in resolving kinematic redundancy. In contrast, \textsf{Stein-MAP-Seq} achieved the lowest joint-space RMSE among MAP-Seq estimators, demonstrating its ability to maintain consistency in both task space and configuration space. Moreover, the trajectory estimates produced by \textsf{Stein-MAP-Seq} provided effective initialization for GTSAM, guiding the optimizer toward correct joint-space modes and thereby improving configuration-space estimation accuracy.

Finally, the computational comparison in Fig.~\ref{fig:Scenario4-3} showed that \textsf{Stein-MAP-Seq} achieved a favorable accuracy–efficiency trade-off. Despite operating in a $14$-dimensional state space, a small number of particles was sufficient to represent dominant modes, thereby maintaining coherent MAP trajectory hypotheses. Moreover, even as the state dimension $n_x$ increased, particle updates were performed in parallel on computing hardware, maintaining computational cost comparable to that of lower-dimensional scenarios. Consequently, the runtime remained comparable to GTSAM and was substantially lower than that of PF-based MAP estimators. These results demonstrated that \textsf{Stein-MAP-Seq} scaled effectively to high-dimensional, kinematically redundant systems and provided robust MAP-Seq estimation.

\begin{table}[t]
\begin{center}
\caption{\scriptsize Robustness Evaluation for Scenario (D), End-effector RMSE (m)}
\label{Table_Ex4}
\scriptsize 
\resizebox{0.48\textwidth}{!}{
    \begin{tabular}{c c c c|c c c c} 
    \hline
    \multicolumn{4}{c|}{\textbf{Gaussian-assumed Estimation}} 
    & \multicolumn{4}{c}{\textbf{Finite-state Estimation}} \\
    \cline{1-4}\cline{5-8}
    EKF & EKS & EKS-GT & GTSAM & $N_s$ & PF & PF-MAP & PF-MAP-Seq \\
    \hline
    &&&   & 5e2 & $0.6115$ & $0.6134$ & $0.6120$ \\
    $0.6078$ & $0.6007$ & $\mathbf{0.0617}$ & $0.9198$ & 1e3 & $0.5946$ & $0.5912$ & $0.5789$ \\
    &&&   & 2e3 & $0.5409$ & $0.5393$ & $\mathbf{0.5370}$ \\
    \hline
    iEKF-3 & iEKS-3 & iEKS-3-GT & GTSAM-Stein & $N_s$ & SPF & SPF-MAP & Stein-MAP-Seq \\
    \hline
    && & & 1e1 & $0.4712$ & $0.4708$ & $0.4686$ \\
    $0.6087$ & $0.6087$ & $0.1242$ & $0.5647$ & 2e1 & $0.4652$ & $0.4652$ & $0.4624$\\
    && & & 3e1 & $0.3584$ & $0.3584$ & $\mathbf{0.3560}$\\
    \hline
    \end{tabular}}
\end{center}
\normalsize
\end{table}

\begin{table}[t]
\begin{center}
\caption{\scriptsize Robustness Evaluation for Scenario (D), Joint Angles $\mathbf{q}_t$ RMSE (rad)}
\label{Table_Ex4_2}
\scriptsize 
\resizebox{0.48\textwidth}{!}{
    \begin{tabular}{c c c c|c c c c} 
    \hline
    \multicolumn{4}{c|}{\textbf{Gaussian-assumed Estimation}} 
    & \multicolumn{4}{c}{\textbf{Finite-state Estimation}} \\
    \cline{1-4}\cline{5-8}
    EKF & EKS & EKS-GT & GTSAM & $N_s$ & PF & PF-MAP & PF-MAP-Seq \\
    \hline
    &&&   & 5e2 & $0.8258$ & $0.9622$ & $0.9581$ \\
    $0.8534$ & $0.9132$ & $\mathbf{0.1334}$ & $4.4841$ & 1e3 & $0.7792$ & $0.7942$ & $0.8465$ \\
    &&&   & 2e3 & $\mathbf{0.7329}$ & $0.7713$ & $0.8668$ \\
    \hline
    iEKF-3 & iEKS-3 & iEKS-3-GT & GTSAM-Stein & $N_s$ & SPF & SPF-MAP & Stein-MAP-Seq \\
    \hline
    && & & 1e1 & $0.1922$ & $0.1921$ & $0.1918$ \\
    $0.8668$ & $1.0430$ & $ 0.2320$ & $0.9636$ & 2e1 & $0.1848$ & $0.1848$ & $0.1843$\\
    && & & 3e1 & $0.1728$ & $0.1728$ & $\mathbf{0.1720}$\\
    \hline
    \end{tabular}}
\end{center}
\normalsize
\end{table}

\medskip

\subsection{Discussion on Empirical Performance and Design Insights of \textsf{Stein-MAP-Seq}}
\label{subsec::E}

This subsection discusses the empirical performance of \textsf{Stein-MAP-Seq} with a focus on the role of kernel choice, particle count, and kernel hyperparameters. Beyond reporting estimation accuracy, our goal is to provide practical insights into why \textsf{Stein-MAP-Seq} performs favorably in multimodal MAP-Seq estimation, clarify its limitations, and discuss practical considerations arising from modeling assumptions in real robotic systems.

\subsubsection{Effect of Kernel Choice}
The kernel function plays a fundamentally different role in MAP-Seq estimation than in MMSE-based estimation. While MMSE-oriented methods aim to approximate the full posterior distribution and are therefore sensitive to kernel-induced distributional shaping, MAP-Seq estimation primarily benefits from coherent particle concentration around dominant modes rather than uniform posterior coverage. To assess the impact of kernel choice, we conducted an ablation study comparing RBF, IMQ, and Matérn ($\nu=2/3$) kernels~\cite{gorham2017measuring,seeger2004gaussian}, all of which preserve the same parallelizable pairwise interaction structure while introducing heavy-tailed repulsion (IMQ) and more localized particle interactions with finite correlation length (Matérn), respectively. As reported in Table~\ref{Table_Ex6}, performance differences across kernels are relatively small, indicating that kernel shape alone is not a dominant factor in MAP-Seq estimation.

\begin{figure}[!t]
    \centering
    \includegraphics[width=0.99\linewidth]{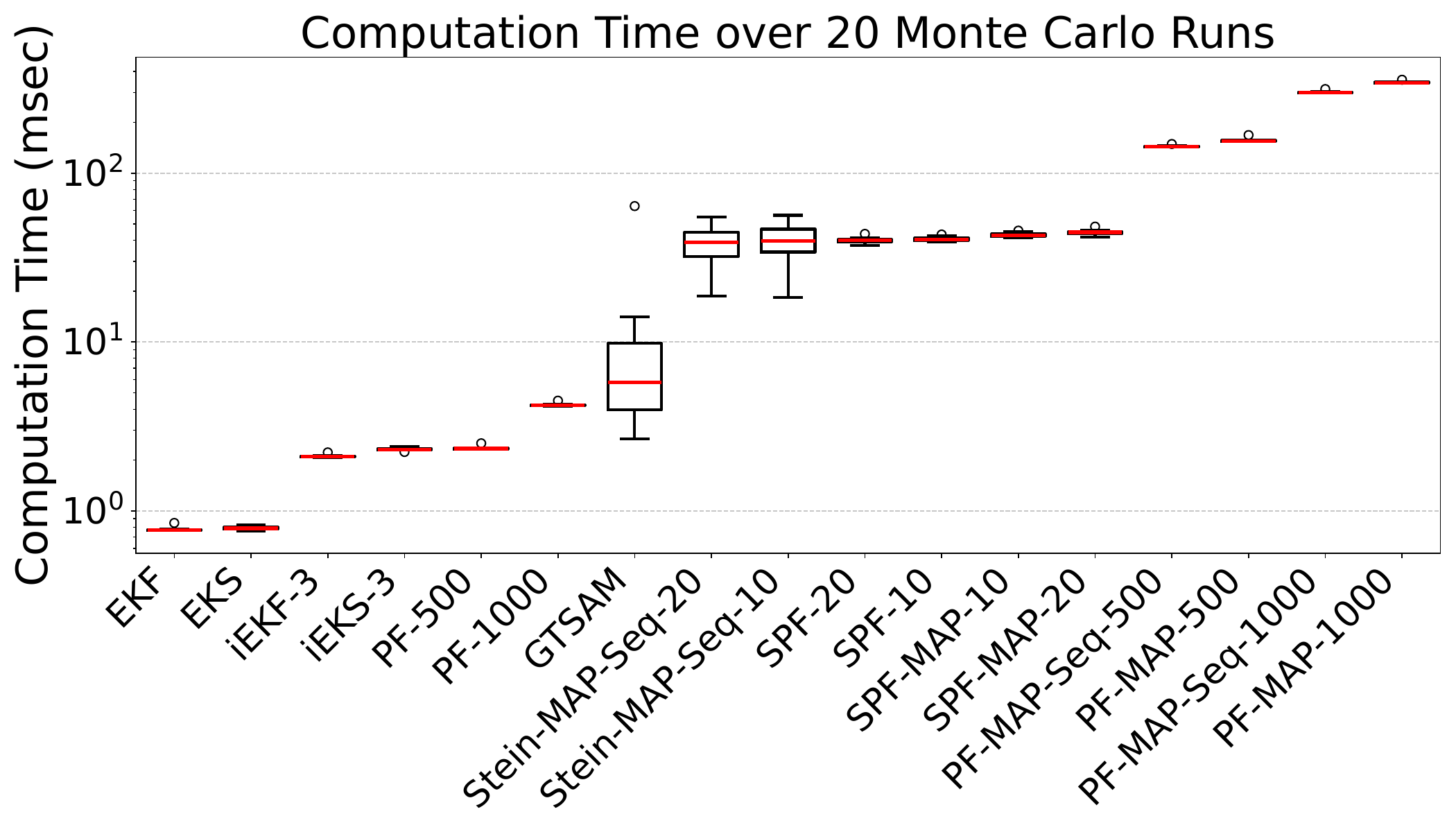}
    \caption{The averaged computation time per time step is reported over $20$ Monte Carlo simulations, each consisting of $250$ time steps. GTSAM exhibits variable computation time due to differences in convergence behavior across runs. \textsf{Stein-MAP-Seq}, despite operating in a high-dimensional state space, achieves comparable computation time by using a small number of particles and fully parallelizable particle updates. In contrast, PF-based MAP sequence estimators incur the highest computational cost as a result of their reliance on large particle counts.}
    \label{fig:Scenario4-3}
\end{figure}

\subsubsection{Estimation Accuracy and Particle Count}
Unlike MC sampling-based MAP-Seq estimators, \textsf{Stein-MAP-Seq} does not exhibit monotonic performance improvement with increasing particle count. Empirically, a relatively small number of particles (typically $10-50$), when combined with an appropriate kernel bandwidth, suffices to achieve accurate MAP trajectory recovery across all tested scenarios. As the particle count increases, the repulsive interaction induced by the kernel term in Eq.~\eqref{eq::SVGD_newupdate} becomes more pronounced, potentially spreading particles into lower-density regions of the state space. While such diversity is advantageous for MMSE-based estimation, it can adversely affect MAP-Seq estimation by increasing path ambiguity during the forward recursion in Eq.~\eqref{eq::forward_recursion}. Consequently, excessive particle counts may reduce trajectory coherence and lead to increased estimation error. This trend is quantitatively supported by Table~\ref{Table_Ex6}, which shows that increasing the particle count yields only marginal performance improvements or even degrades estimation accuracy. These results highlight a fundamental distinction between \textsf{Stein-MAP-Seq} and MC sampling-based MAP-Seq estimators, which typically rely on large particle sets.

\subsubsection{Kernel Bandwidth and Particle Count}
Kernel bandwidth controls the trade-off between mode concentration and separation by adjusting particle repulsion. Smaller bandwidths yield more concentrated particle clusters, whereas larger bandwidths induce stronger repulsive interactions and improved mode separation. MAP-Seq estimation is therefore more sensitive to bandwidth choice than MMSE-based estimation, reflecting its mode-seeking nature, where trajectory decoding depends on temporally consistent relative scores rather than marginal posterior coverage. These trends are quantitatively confirmed in Table~\ref{Table_Ex6}. For the RBF kernel, increasing the bandwidth from $0.5h$ to $3h$ substantially reduces RMSE even with fewer particles, while smaller bandwidths require larger particle sets to achieve comparable performance. Overall, kernel bandwidth and particle count jointly shape trajectory-level score accumulation, providing practical design guidance for MAP-Seq estimation.

\begin{table}[t]
\centering
\caption{Ablation study on kernel type, kernel bandwidth, and particle count.
Results are averaged over Monte Carlo runs in Scenario~A.
Unless otherwise specified, median heuristic bandwidth $h$ and $N_s=10$ are used.}
\label{Table_Ex6}
\scriptsize
\resizebox{0.49\textwidth}{!}{
\begin{tabular}{c c c c}
\hline
Ablation Factor & Kernel & Setting & RMSE \\
\hline
Kernel type     & RBF                & $N_s=10$      & 2.2643 \\
Kernel type     & IMQ                & $N_s=10$      & 2.2512 \\
Kernel type     & Matérn             & $N_s=10$      & 2.2344 \\
\hline
Bandwidth scale & RBF                & $0.5h, N_s=10$        & 2.4887 \\
Bandwidth scale & RBF                & $3h, N_s=10$          & 2.0462 \\
Bandwidth scale \& Particle count & RBF                & $0.5h, N_s=40$        & 2.2507 \\
Bandwidth scale \& Particle count & RBF                & $3h, N_s=40$          & 2.1970 \\
\hline
Particle count  & RBF                & $N_s=20$      & 2.2108 \\
Particle count  & RBF                & $N_s=40$      & 2.1369 \\
Particle count  & IMQ                & $N_s=20$      & 2.1557 \\
Particle count  & IMQ                & $N_s=40$      & 2.1780 \\
Particle count  & Matérn             & $N_s=20$      & 2.1232 \\
Particle count  & Matérn             & $N_s=40$      & 2.0714 \\
\hline
\end{tabular}
}
\end{table}

\subsubsection{Practical Considerations Beyond Ideal Smoothness and Independence Assumptions} 

\textsf{Stein-MAP-Seq} assumes locally smooth system dynamics and conditionally independent observations; however, these assumptions are often violated in real robotic systems due to nonsmooth or discontinuous dynamics and temporally correlated high-rate sensor measurements. Importantly, \textsf{Stein-MAP-Seq} does not rely on global smoothness or strict observation independence. Instead, it operates through local gradient-based particle updates and trajectory-level score accumulation, making it tolerant to moderate deviations from these assumptions, with locally defined or surrogate gradients being sufficient for MAP-Seq estimation. In practice, local nonsmoothness typically affects only short time intervals and does not invalidate the overall MAP trajectory as long as the dominant mode remains temporally consistent, while mild observation correlations mainly change how confident the estimate is, rather than where the mode is. Such effects often manifest as noisier gradients or increased uncertainty and can be mitigated through kernel bandwidth adjustment and particle interaction regularization. As with classical filtering and batch MAP methods, extreme violations may degrade performance, and characterizing behavior under severe nonsmooth dynamics and strongly correlated observations remains an important direction for future work.

\subsubsection{Complementarity with Batch MAP Optimization}
The comparison with batch MAP optimization using GTSAM reveals complementary strengths. While GTSAM jointly optimizes the entire trajectory, its performance is sensitive to initialization under strongly multimodal and nonconvex objectives. In our demonstrations, initializing GTSAM with the \textsf{Stein-MAP-Seq} trajectory guided the optimizer toward the correct mode, resulting in faster convergence and improved estimation accuracy. These results suggest that \textsf{Stein-MAP-Seq} can serve not only as a standalone MAP-Seq estimator, but also as an effective global mode-discovery front-end for multimodal batch MAP optimization.


\section{Conclusion} \label{sec::conclusion}
State estimation in autonomous systems remains a fundamental challenge, particularly in real-world settings characterized by complex dynamics and highly multimodal uncertainty. In this work, we introduced a computationally efficient MAP sequence estimation framework that explicitly addresses multimodal posterior distributions while reducing computational and storage requirements. The proposed approach is grounded in a sequential variational inference formulation that explicitly accounts for temporal dependencies induced by system dynamics.

Building on this formulation, we addressed MAP sequence estimation through a parallelizable particle-based gradient flow embedded within a Viterbi-style dynamic programming framework. This combination enables efficient mode concentration and globally consistent trajectory selection over long horizons. We rigorously evaluated the proposed method in a range of challenging multimodal scenarios, including nonlinear systems with ambiguous measurements, pose estimation under unknown data association, range-only localization under temporary unobservability, and high-dimensional manipulator estimation with kinematic redundancy.

Across all scenarios, \textsf{Stein-MAP-Seq} consistently demonstrated strong robustness and scalability. Moreover, beyond empirical accuracy, this work provides practical design insights for MAP sequence estimation in multimodal settings. Our results indicate that MAP-Seq estimation benefits primarily from coherent mode concentration and temporal consistency, rather than accurate approximation of the full posterior distribution. Accordingly, \textsf{Stein-MAP-Seq} achieves reliable MAP trajectory recovery using a small number of finite states, with performance governed more by kernel-induced mode structure than particle count alone. Moreover, the proposed method complements batch MAP optimization by providing robust global initialization under strongly multimodal and nonconvex objectives, highlighting its utility both as a standalone estimator and as a front-end for existing optimization-based frameworks.


\appendix

\subsection{Auxiliary results and proofs}\label{sec::appen_A}
\begin{lem}\label{lem::ELBO} {(Evidence Lower Bound). The relationship between proposal distribution $q(x_{0:T})$ and $p(x_{0:T}| z_{1:T})$ can be derived as
\begin{align} \label{eq::KLD_4}
 \!\!\!\text{KL} \bigr[q(x_{0:T})\parallel p(x_{0:T}|z_{1:T}) \bigr] \!=\! -\mathcal{L}(x_{0:T}) \!+\! \log p(z_{1:T}),
\end{align}}
where $ \mathcal{L}(x_{0:T})$ us given in~\eqref{eq::ELBO_term}.
\end{lem}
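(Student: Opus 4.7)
The plan is to recognize this as the standard ELBO identity and derive it directly from the definition of KL divergence combined with Bayes' rule. First I would write out the KL divergence explicitly as
\begin{align*}
\text{KL}\bigl[q(x_{0:T})\parallel p(x_{0:T}|z_{1:T})\bigr] = \int q(x_{0:T}) \log \frac{q(x_{0:T})}{p(x_{0:T}|z_{1:T})}\, dx_{0:T}.
\end{align*}
Then I would apply Bayes' rule to the conditional posterior, writing $p(x_{0:T}|z_{1:T}) = p(x_{0:T}, z_{1:T})/p(z_{1:T})$, and substitute this into the denominator of the log ratio.

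Next, the key manipulation is to split the resulting logarithm into two terms: one matching the integrand of the ELBO and one involving the log evidence. Specifically, I would rewrite
\begin{align*}
\log \frac{q(x_{0:T})}{p(x_{0:T}|z_{1:T})} = \log \frac{q(x_{0:T})}{p(x_{0:T}, z_{1:T})} + \log p(z_{1:T}),
\end{align*}
and then distribute the integral over the sum. The first piece, when multiplied by $-1$, is exactly $\mathcal{L}(x_{0:T})$ as defined in~\eqref{eq::ELBO_term}. The second piece involves $\log p(z_{1:T})$, which is a constant with respect to $x_{0:T}$, so it factors out of the integral.

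Finally, I would invoke the normalization property of the proposal distribution, $\int q(x_{0:T})\, dx_{0:T} = 1$, to conclude that the second piece simplifies to $\log p(z_{1:T})$, yielding the claimed identity~\eqref{eq::KLD_4}. There is no real obstacle here since the derivation is purely algebraic manipulation of densities; the only care needed is making sure the signs line up correctly when identifying the ELBO term, and implicitly assuming that $p(x_{0:T}, z_{1:T})$ and $q(x_{0:T})$ share sufficient support so that the log ratios are well defined (which is a standard assumption already compatible with Assumption~\ref{assump:smoothness}).
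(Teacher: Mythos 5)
Your proposal is correct and follows essentially the same route as the paper: both arguments rest on the factorization $p(x_{0:T}|z_{1:T}) = p(x_{0:T}, z_{1:T})/p(z_{1:T})$, the splitting of the logarithm, the normalization $\int q(x_{0:T})\,dx_{0:T}=1$, and the observation that $\log p(z_{1:T})$ is constant; the paper merely presents the algebra in the reverse direction (starting from the log-posterior identity and taking expectations under $q$, rather than expanding the KL divergence directly). No gap.
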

\begin{proof}
 To address the connection between $p(x_{0:T}| z_{1:T})$ and $q(x_{0:T})$, the representation of $\log p(x_{0:T}| z_{1:T})$ can be established using the relationship between joint and conditional probabilities given by
\begin{align}
\label{eq::KLD_1}
    \log p(x_{0:T}| z_{1:T}) = \log p(x_{0:T}, z_{1:T}) - \log p(z_{1:T}),
\end{align}
The proposal distribution $q(x_{0:T})$ is then introduced by subtracting of $\log q(x_{0:T})$ from both sides of~\eqref{eq::KLD_1}, that is,
\begin{align}
\label{eq::KLD_2}
    \!\!\log \frac{p(x_{0:T}| z_{1:T})}{q(x_{0:T})} = \log \frac{p(x_{0:T}, z_{1:T})}{q(x_{0:T})} - \log p(z_{1:T}).
\end{align}
Note that the expectation of both sides of~\eqref{eq::KLD_2} w.r.t. $q(x_{0:T})$,
\begin{align}
\label{eq::KLD_3}
    &\int q(x_{0:T})\,\log\frac{p(x_{0:T}| z_{1:T})}{q(x_{0:T})} dx_{0:T}  \\
    &~~= \int q(x_{0:T})\, \log \frac{p(x_{0:T}, z_{1:T})}{q(x_{0:T})} dx_{0:T} - \log p(z_{1:T}), \nonumber
\end{align}
where $\log p(z_{1:T})$ on the right-hand side of~\eqref{eq::KLD_3} is independent of $x_{0:T}$. Therefore, $\int q(x_{0:T})\log p(z_{1:T})dx_{0:T}=\log p(z_{1:T})$. Moreover, the left-hand side of~\eqref{eq::KLD_3} can be expressed using the KL divergence:
\begin{align}
    &\text{KL} \bigr[q(x_{0:T}) \parallel p(x_{0:T}|z_{1:T})\bigr]  \\
    &= -\int q(x_{0:T})\, \log\frac{p(x_{0:T}, z_{1:T})}{q(x_{0:T})} dx_{0:T} + \log p(z_{1:T}) \geq 0. \nonumber
\end{align}
\end{proof}

\subsection{Proof of Lemma~\ref{lem::proposal}}\label{sec::appen_B}
The optimal distribution $q^{\star}(x_{0:T})$ is given as
\begin{align}
  q^{\star}(x_{0:T}) = \arg \! \max_{\!\!\!\!\!q(x_{0:T})}\mathcal{L}(x_{0:T}),
\end{align}
where
\begin{align} \label{eq::App_ELBO}
  \mathcal{L}(x_{0:T}) &= \int q(x_{0:T}) \log p(x_0) dx_{0:T} \nonumber \\
  &~~~~+ \int q(x_{0:T}) \sum\nolimits_{t=1}^T \log p(x_t|x_{t-1}) dx_{0:T} \nonumber \\
  &~~~~+ \int q(x_{0:T}) \sum\nolimits_{t=1}^T \log p(z_t|x_t) dx_{0:T} \nonumber \\ &~~~~- \int q(x_{0:T})\log q(x_{0:T}) dx_{0:T}.
\end{align}
The last integral term in~\eqref{eq::App_ELBO} can be represented by
\begin{align}
    - &\int q(x_{0:T})\log q(x_{0:T}) dx_{0:T} = -\int q(x_0)\log q(x_0) dx_0 \nonumber \\&- \sum\nolimits_{t=2}^T \int q(x_{0:T})\log q(x_t|x_{t-1},x_{0:t-2}) dx_{0:T}.
\end{align}
Using the conditional differential entropy~\cite{ash2012information} as given by
\begin{align}
    \!\!\!\!\!-\int \!\!\! \int p(x,y) \log p(x|y) dx dy \leq\! -\! \int\! p(x)\log p(x) dx,
\end{align}
where equality if and only if $x$ and $y$ are independent, we have
\begin{align} \label{eq::App_cde}
    - \int q(x_{0:T})\log q(x_t|x_{t-1},x_{0:t-2}) dx_{0:T} \nonumber \\ \leq - \int q(x_{0:T})\log q(x_t|x_{t-1}) dx_{0:T}.
\end{align}
Under Assumption~\ref{assump::MAP}, there exists
\begin{align}
    q(x_t|x_{t-1}) = q(x_t|x_{t-1},x_{0:t-2}),
\end{align}
and thus $q(x_{0:T})$ is given via~\eqref{eq::App_cde} as 
\begin{align} 
    q(x_{0:T}) = q(x_0)\prod\nolimits_{t=1}^{T} q(x_t|x_{t-1}).
\end{align}

\subsection{Proof of Lemma~\ref{lem::factKL}} \label{sec::appen_C}
Under Lemma~\ref{lem::proposal}, and considering that $\log p(x_t|x_{t-1})$ and $\log p(z_t|x_t)$ depend solely on $x_t$ and/or $x_{t-1}$ of $x_{0:T}$, the ELBO in~\eqref{eq::ELBO_term} is derived from~\eqref{eq::ELBO}:
\begin{align}
\label{eq::ELBO_updated_1}
      \mathcal{L}(x_{0:T}) &= \int q(x_0) \log p(x_0) dx_0 \nonumber \\
      &~~~~+ \sum\nolimits_{t=1}^T \int q(x_{t:t-1})  \log p(x_t|x_{t-1}) dx_{t:t-1} \nonumber \\
      &~~~~+ \sum\nolimits_{t=1}^T \int q(x_t)  \log p(z_t|x_t) dx_t \nonumber \\
      &~~~~- \int q(x_{0:T}) \log \biggl( q(x_0)\prod\nolimits_{t=1}^T q(x_t | x_{t-1}) \biggl) dx_{0:T} \nonumber \\
      &= \int q(x_0) \log p(x_0) dx_0 - \int q(x_0) \log q(x_0) dx_0 \nonumber \\
     &~~~~+ \sum\nolimits_{t=1}^T \int q(x_{t:t-1})  \log p(x_t|x_{t-1}) dx_{t:t-1} \nonumber \\
      &~~~~+ \sum\nolimits_{t=1}^T \int q(x_t)  \log p(z_t|x_t) dx_t  \\
   &~~~~-\! \sum\nolimits_{t=1}^T \!\!\int\! q(x_{t:t-1})  \log q(x_t | x_{t-1}) dx_{t:t-1}. \nonumber
\end{align}
Next, we apply the marginalization technique and the law of total probability to demonstrate that the ELBO in~\eqref{eq::ELBO_updated_1} can be represented \emph{recursively} as follows:
\begin{align}
\label{eq::ELBO_updated_2}
      &\mathcal{L}(x_{0:T}) = \int q(x_0) \log p(x_0) dx_0 - \int q(x_0) \log q(x_0) dx_0 \nonumber \\
      &~~+ \int q(x_1|x_0)q(x_0) \log p(x_1|x_0) dx_{1:0} \nonumber \\
      &~~+ \sum\nolimits_{t=2}^T \int q(x_t|x_{t-1})q(x_{t-1}|x_{t-2})\log p(x_t|x_{t-1}) dx_{t:t-1} \nonumber \\
      &~~+ \int q(x_1 | x_0)q(x_0) \log p(z_1|x_1) dx_{1:0} \nonumber \\
      &~~+ \sum\nolimits_{t=2}^T \int q(x_t|x_{t-1})q(x_{t-1}|x_{t-2}) \log p(z_t|x_t) dx_{t:t-2} \nonumber \\
      &~~- \int q(x_1|x_0) q(x_0) \log q(x_1 | x_0) dx_{1:0}  \\
      &~~- \sum\nolimits_{t=2}^T \!\! \int q(x_t|x_{t-1}) q(x_{t-1}|x_{t-2}) \log q(x_t | x_{t-1}) dx_{t:t-1}. \nonumber
\end{align}
Each term is rearranged based on its dependence on $x_0$, $x_{1:0}$, and $x_{t:t-2}$, respectively. Then, under Assumption~\ref{assump:smoothness}, the ELBO can be expressed as:
\begin{align}
      &\mathcal{L}(x_{0:T}) = \int q(x_0) \biggl(\log p(x_0) - \log q(x_0)\biggl) dx_0 \nonumber \\
      &+ \int \biggl( \int q(x_1|x_0) \Bigl( \log p(x_1|x_0) + \log p(z_1|x_1) \nonumber \\
      &\qquad\qquad\qquad\qquad\qquad - \log q(x_1 | x_0) \Bigl) dx_1 \biggl) q(x_0) dx_0 \nonumber \\
      &+ \sum\nolimits_{t=2}^T \int \biggl( \int q(x_t|x_{t-1}) \Bigl(\log p(x_t|x_{t-1}) + \log p(z_t|x_t) \nonumber \\
      &\qquad - \log q(x_t | x_{t-1}) \Bigl) dx_t \biggl) q(x_{t-1}|x_{t-2}) dx_{t-1}.
\end{align}
Therefore, by applying the definition of KL divergence, the ELBO becomes~\eqref{eq::ELBO_factorized}.

\subsection{Proof of Lemma~\ref{lem::conditional_svgd}} \label{sec::appen_D}
We extend the result of~\cite{liu2016stein} to the marginalized conditional KL objective. Define the perturbed distribution as $q_\epsilon(x_t | x_{t-1}) = \frac{1}{N_s} \sum\nolimits_{i=1}^{N_s} \delta\Bigl(x_t - (x_t^i + \epsilon \, \hat{\phi}^{\star}(x_t^i))\Bigl)$, where the perturbation follows the update in~\eqref{eq::SVGD_newupdate}. Under Assumption~\ref{assump::MAP}, Lemma~\ref{lem::proposal}, and~\ref{lem::factKL}, the corresponding marginalized KL divergence objective becomes
\begin{align}
    \!\!\mathcal{J}_t(q_\epsilon) \!\!=\!\!\! \int \!\!\text{KL} \bigr[q_\epsilon(x_t|x_{t-1}) \!\!\parallel\! p(z_t, x_t|x_{t-1})\!\bigr] q(x_{t-1}|x_{t-2}) dx_{t-1}\!.\!\! \nonumber
\end{align}
We compute its directional derivative at $\epsilon = 0$ as follows
\begin{align} \label{eq::D1}
    &\frac{d}{d\epsilon} \mathcal{J}_t(q_\epsilon) \Bigl|_{\epsilon = 0} \\
    &=\!\!\! \int \!\!\frac{d}{d\epsilon}\text{KL} \bigr[q_\epsilon(x_t|x_{t-1}) \!\!\parallel\! p(z_t, x_t|x_{t-1})\!\bigr]\!\Bigl|_{\epsilon = 0} \! q(x_{t-1}|x_{t-2}) dx_{t-1}\!.\!\! \nonumber
\end{align}
Using Stein variational theory, the directional derivative of the KL divergence for fixed $x_{t-1}$ is
\begin{align} \label{eq::D2}
    &\frac{d}{d\epsilon}
    \text{KL}\left[q_\epsilon(x_t|x_{t-1}) \parallel p(z_t, x_t|x_{t-1})\right]
    \Bigl|_{\epsilon = 0}  \\
    &=\! - \mathbb{E}_{q(x_t|x_{t-1})}
    \Bigl[ \phi(x_t)^\top\nabla_{x_t} \log p(z_t, x_t|x_{t-1}) \!+\! \nabla\! \cdot \phi(x_t) \Bigl], \nonumber
\end{align}
where $\nabla\! \cdot \phi(x_t)$ is the divergence of $\phi(x_t)$. Then, substituting~\eqref{eq::D2} into the expression for $\mathcal{J}_t(q_\epsilon)$ in~\eqref{eq::D1}, we obtain
\begin{align}
    &\frac{d}{d\epsilon} \mathcal{J}_t(q_\epsilon) \Bigl|_{\epsilon = 0} \\
    &= \!-\!\! \int\!\! \mathbb{E}_{q(x_t|x_{t-1})}\!
    \Bigl[ \phi(x_t)^\top\nabla_{x_t} \log p(z_t, x_t|x_{t-1}) \!+\! \nabla\! \cdot \phi(x_t) \Bigl] \nonumber \\
    &\qquad\qquad\qquad\qquad\qquad\qquad\qquad~ \times q(x_{t-1}|x_{t-2}) dx_{t-1}. \nonumber
\end{align}
Using the Stein operator $\mathcal{A}_p \phi(x_t)$, the derivative becomes
\begin{align}
    \frac{d}{d\epsilon} \mathcal{J}_t(q_\epsilon) \Bigl|_{\epsilon = 0}
    =\! - \mathbb{E}_{q(x_{t-1}| x_{t-2})}
    \Bigl[ \mathbb{E}_{q(x_t|x_{t-1})}
    \bigl[ \operatorname{tr}(\mathcal{A}_p \phi(x_t)) \bigl] \Bigl].
\end{align}
where $\mathcal{A}_p \phi(x_t)
\!:=\! \nabla_{x_t} \!\log p(z_t, x_t|x_{t-1}) \phi(x_t)^\top \!\!+\! \nabla_{x_t} \phi(x_t)$ and $\operatorname{tr}(\cdot)$ denotes the trace operator. Under Assumption~\ref{assump:smoothness} and the assumption of the boundedness of the kernel $\kappa(\cdot, \cdot)$, the SVGD direction $\phi^{\star} \in \mathcal{H}^d$ is chosen to maximize the negative directional derivative. Therefore,
\begin{align}
    \frac{d}{d\epsilon} \mathcal{J}_t(q_\epsilon) \Bigl|_{\epsilon = 0} < 0,
\end{align}
unless $q(x_t|x_{t-1}) = p(z_t, x_t|x_{t-1})$ almost surely under $q(x_{t-1}|x_{t-2})$.

\subsection{Proof of Theorem~\ref{thm::ELBO_MAP}} \label{sec::appen_E}

First, we prove that the ELBO in~\eqref{eq::ELBO_factorized} converges to $J(s)$ as $\varepsilon \to 0$ by using a standard property of mollifiers~\cite{brezis2011functional}. From~\eqref{eq::ELBO_factorized} in Lemma~\ref{lem::factKL}, we have
\begin{align}
    &\mathcal{L}_{\varepsilon}(q_{\varepsilon}) \!=\! \mathbb{E}_{q_{\varepsilon}(x_0)}[\log p(x_0)] \!+\! \mathbb{E}_{q_{\varepsilon}(x_0)}\mathbb{E}_{q_{\varepsilon}(x_1|x_0)}[\log p(z_1,x_1|x_0)] \nonumber \\
    &+ \sum_{t=2}^T\nolimits \mathbb{E}_{q_{\varepsilon}(x_{t-1}|x_{t-2})}\mathbb{E}_{q_{\varepsilon}(x_{t}|x_{t-1})}[\log p(z_t,x_t|x_{t-1})].\!\!   
\end{align}

By Assumption~\ref{assump:smoothness} and using a Gaussian as a mollifier (approximate identity)~\cite{stein2003fourier}, $\mathbb{E}_{q_{\varepsilon}(\cdot)}[\log p] \rightarrow \log p(s)$ as $\varepsilon \rightarrow 0$. Applying this to each term above yields
\begin{align} \label{eq::appx_E_MAP}
    &\lim_{\varepsilon \to 0}\mathcal{L}_{\varepsilon}(q_{\varepsilon}) \nonumber \\ 
    &= \log p(s_0) + \log p(z_1,s_1|s_0) 
       + \sum_{t=2}^T\nolimits \log p(z_t,s_t|s_{t-1}) \nonumber \\
    &= J(s)
\end{align}

Second, we prove that $\mathcal{L}_{\varepsilon}(s^{\star})$ is the maximizer over $\mathcal{S}$. Since $\mathcal{S}$ is finite and there exists a unique $s^{\star} \in \arg\max_{s\in \mathcal{S}}J(s)$ with gap $\delta =  J(s^{\star}) - \max_{s \neq s^{\star}} J(s) > 0$, it follows from~\eqref{eq::appx_E_MAP} that
\begin{align}
    M_{\varepsilon} := \max_{s \in \mathcal{S}} 
    \big| \mathcal{L}_{\varepsilon}(s) - J(s) \big| \xrightarrow[\varepsilon \to 0]{} 0.
\end{align}

Choose $\varepsilon_0 > 0$ such that 
$M_{\varepsilon} < \delta/3$ for all $0 < \varepsilon < \varepsilon_0$. Then, for such $\varepsilon$,
\begin{align}
    &\mathcal{L}_{\varepsilon}(s^{\star}) \ge J(s^{\star}) - \frac{\delta}{3}, \nonumber \\
    &\mathcal{L}_{\varepsilon}(s) 
    \le J(s) + \frac{\delta}{3} 
    \le J(s^{\star}) - \delta + \frac{\delta}{3} 
    = J(s^{\star}) - \frac{2\delta}{3},~\forall\, s \neq s^{\star}\!\!. \nonumber
\end{align}
Therefore, $\mathcal{L}_{\varepsilon}(s^{\star}) > \mathcal{L}_{\varepsilon}(s), \forall s \neq s^{\star}$, hence, the maximizer of $\mathcal{L}_{\varepsilon}$ over $\mathcal{S}$ coincides with $s^{\star}$.

\subsection{Per-Step Runtime and Scalability Analysis} \label{sec::appen_F}
We examine the computational efficiency of~\eqref{eq::SVGD_newupdate} in \textsf{Stein-MAP-Seq} by measuring the average per-step runtime on a GPU as a function of the number of particles $N_s$ and the state dimension $n_x$. For consistency with all experimental scenarios, a fixed number of SVGD update iterations per time step is used ($100$ iterations). This analysis illustrates how the computational cost grows as the number of particles and the state dimension increase, thereby providing practical evidence that \textsf{Stein-MAP-Seq} remains computationally viable in high-dimensional estimation problems.

\begin{figure}[ht]
    \centering
    \begin{minipage}[t]{0.22\textwidth}
        \centering
        \includegraphics[width=\textwidth]{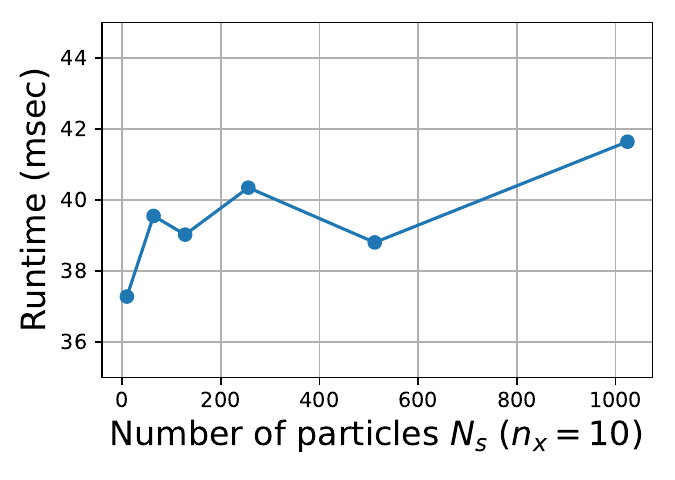}
        {{\scriptsize (a) Effect of Particle Count on Per-Step Runtime}}
    \end{minipage}
    \hskip\baselineskip
    \begin{minipage}[t]{0.22\textwidth}
        \centering
        \includegraphics[width=\textwidth]{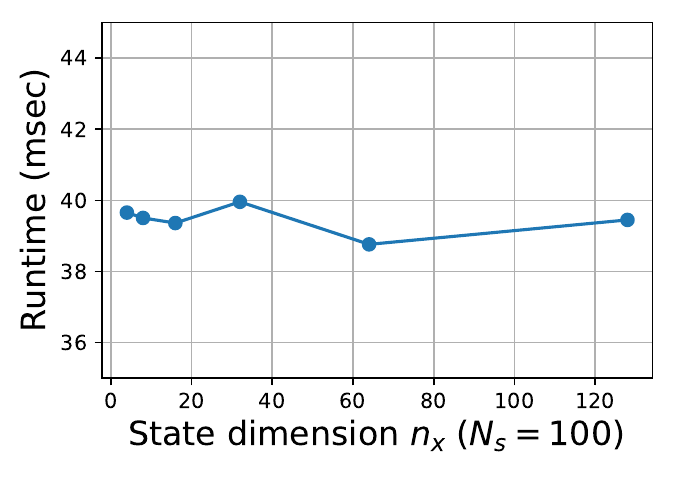}
        {{\scriptsize (b) Effect of State Dimension on Per-Step Runtime}}
    \end{minipage}
    \caption{{\small Per-step computational cost of \textsf{Stein-MAP-Seq}, averaged over Monte Carlo runs, measured using a fixed number of SVGD update iterations per time step (100 iterations). The results illustrate stable runtime behavior across varying estimation problem sizes.}
    }
    \label{fig:appendix:com}
\end{figure}

\bibliographystyle{ieeetr}
\bibliography{Bib/Reference.bib}

\end{document}